\newtheorem{problem}{Problem} 
\def\state{{\vz}}
\def\parameters{{\vw}}
\def\nn{f^\parameters}
\def\act{{\sigma}}
\def\bnn{f^w}
\def\gpMean{{m}}
\def\gpKernel{{k}}
\def\probMeas{{\mathcal{P}}}
\def\discMeas{{\mathcal{D}}}
\def\borrelAlgebra{{\mathcal{B}}}
\def\expect{{\mathbb{E}}}
\def\prob{\mathbb{P}}
\def\signature{{\Delta}}
\def\GMM{\mathcal{G}}
\def\Ndist{\mathcal{N}}
\def\mNdist{{m}}
\def\vNdist{\Sigma}
\def\wasserstein{{\mathbb{W}}}
\def\mw{{\mathbb{MW}}}
\newcommand{\erf}[1]{\mathrm{erf}\left(#1\right)}
\def\lipschitz{{\mathcal{L}}}
\newcommand{\argmin}[1]{\underset{#1}{\mathrm{arg min}}}
\def\compress{{G}}
\def\vectorize{{\mathrm{vec}}}
\def\postImage{\mathrm{Post}}
\def\diag{\mathrm{diag}}
\def\trace{\mathrm{trace}}
\def\rank{\mathrm{rank}}
\newcommand{\elem}[2]{{#1^{{(#2)}}}}
\def\eucl{{\mathbb{R}}}
\def\realNum{{\mathbb{R}}}
\def\natNum{\mathbb{N}}
\def\sA{{\mathcal{A}}}
\def\sC{{\mathcal{C}}}
\def\sI{{\mathcal{I}}}
\def\sR{{\mathcal{R}}}
\def\sX{{\mathcal{X}}}
\def\sY{{\mathcal{Y}}}
\def\mI{{{I}}}
\def\mM{{{M}}}
\def\mR{{{R}}}
\def\mS{{{S}}}
\def\mT{{{T}}}
\def\mV{{{V}}}
\def\mW{{{W}}}
\newcommand{\emM}[1]{\mM^{(#1)}}
\def\vzero{{{\bar{0}}}}
\def\vlambda{{{\lambda}}}
\def\vmu{{{\mu}}}
\def\vpi{{{\pi}}}
\def\vnu{{{\nu}}}
\def\va{{{a}}}
\def\vb{{{b}}}
\def\vc{{{c}}}
\def\ve{{{e}}}
\def\vm{{{m}}}
\def\vw{{{w}}}
\def\vx{{{x}}}
\def\vz{{{z}}}
\newcommand{\evlambda}[1]{{\vlambda^{(#1)}}}
\newcommand{\evpi}[1]{{\vpi^{(#1)}}}
\newcommand{\evb}[1]{{\vb^{(#1)}}}
\newcommand{\evx}[1]{{\vx^{(#1)}}}
\def\rvg{g}
\newcommand{\qed}{\hfill\BlackBox\\[2mm]}
\newcommand{\changed}[1]{{#1}}
\newcommand{\changedNEW}[1]{{#1}}
\begin{document}

\title{Finite Neural Networks as Mixtures of Gaussian Processes: From Provable Error Bounds to Prior Selection}

\author{\name Steven Adams \email s.j.l.adams@tudelft.nl \\
       \addr Delft Center for Systems and Control (DCSC),
       Delft University of Technology, The Netherlands
       \AND
       \name Andrea Patan\`{e} \email apatane@tcd.ie \\
       \addr School of Computer Science and Statistics, 
       Trinity College Dublin, Ireland
       \AND
       \name Morteza Lahijanian \email morteza.lahijanian@colorado.edu \\
       \addr Departments of Aerospace Engineering Sciences and Computer Science, 
       University of Colorado Boulder, USA
       \AND
       \name Luca Laurenti \email l.laurenti@tudelft.nl \\
       \addr Delft Center for Systems and Control (DCSC),
       Delft University of Technology, The Netherlands \\
       The Italian Institute of Artificial Intelligence (AI4I), Italy 
       }
       
\editor{Christophe Giraud}

\maketitle

\begin{abstract}%
    Infinitely wide or deep neural networks (NNs) with independent and identically distributed (i.i.d.) parameters have been shown to be equivalent to Gaussian processes. Because of the favorable properties of Gaussian processes, this equivalence is commonly employed to analyze neural networks and has led to various breakthroughs over the years. However, neural networks and Gaussian processes are equivalent only in the limit; in the finite case there are currently no methods available to approximate a trained neural network with a  Gaussian model with bounds on the approximation error. In this work, we present an algorithmic framework to approximate a neural network of finite width and depth, and with not necessarily i.i.d.\ parameters, with a mixture of Gaussian processes with bounds on the approximation error. In particular, we consider the Wasserstein distance to quantify the closeness between probabilistic models and, by relying on tools from optimal transport and Gaussian processes, we iteratively approximate the output distribution of each layer of the neural network as a mixture of Gaussian processes. Crucially, for any NN and $\epsilon >0$ our approach is able to return a mixture of Gaussian processes that is $\epsilon$-close to the NN at a finite set of input points. Furthermore, we rely on the differentiability of the resulting error bound to show how our approach can be employed to tune the parameters of a NN to mimic the functional behavior of a given Gaussian process, e.g., for prior selection in the context of Bayesian inference. We empirically investigate the effectiveness of our results on both regression and classification problems with various neural network architectures. Our experiments highlight how our results can represent an important step towards understanding neural network predictions and formally quantifying their uncertainty. 
\end{abstract}

\begin{keywords}
    Neural Networks, Gaussian Processes, Bayesian inference, Wasserstein distance
\end{keywords}

\newpage
\section{Introduction}
Deep neural networks have achieved state-of-the-art performance in a wide variety of tasks, ranging from image classification \citep{vision_cnns} to robotics and reinforcement learning \citep{mnih2013playing}. In parallel with these empirical successes, there has been a significant effort in trying to understand the theoretical properties of neural networks \citep{goodfellow2016deep} and to guarantee their robustness \citep{szegedy2013intriguing}. In this context, an important area of research is that of stochastic neural networks (SNNs), where some of the parameters of the neural network (weights and biases) are not fixed, but follow a distribution. SNNs include many machine learning models commonly used in practice, such as dropout neural networks \citep{gal2016dropout}, Bayesian neural networks \citep{neal2012bayesian}, neural networks with only a subset of stochastic layers \citep{favaro2025quantitative},  and neural networks with randomized smoothing \citep{cohen2019certified}. Among these, because of their convergence to Gaussian processes, particular theoretical attention has been given to infinite neural networks with independent and identically distributed (i.i.d.) parameters \citep{neal2012bayesian,lee2018deep}. 

Gaussian processes (GPs) are a class of stochastic processes that are widely used as non-parametric machine learning models \citep{rasmussen2003gaussian}. Because of their many favorable analytic properties \citep{adler2009random}, the convergence of infinite SNNs to GPs has enabled many breakthroughs in the understanding of neural networks, including their modeling capabilities \citep{schoenholz2016deep}, their learning dynamics \citep{jacot2018neural}, and their adversarial robustness \citep{bortolussi2024robustness}. 
Unfortunately, existing results to approximate a SNN with a GP are either limited to untrained networks with i.i.d.\ parameters \citep{neal2012bayesian} or lack guarantees of correctness \citep{khan2019approximate}.  
In fact, the input-output distribution of a SNN of finite depth and width is generally non-Gaussian, even if the distribution over its parameters is Gaussian \citep{lee2020finite}.
This leads to the main question of this work: \emph{Can we develop an algorithmic framework to approximate a finite SNN (trained or untrained and not necessarily with i.i.d.\ parameters) with Gaussian models while providing formal guarantees of correctness (i.e., provable bounds on the approximation error and that can be made arbitrarily small)?}

In this paper, we propose an algorithmic framework to approximate the input-output distribution of an arbitrary SNN over a finite set of input points with a Gaussian Mixture Model (GMM), that is, a mixture of $M$ Gaussian distributions \citep{mclachlan2000finite}. Critically, the GMM approximation resulting from our approach comes with error bounds on its distance (in terms of the 2-Wasserstein distance,\footnote{Note that, as we will emphasize in Section \ref{sec:prelim}, the choice of the $2$-Wasserstein distance to quantify the distance between a SNN and a GMM guarantees that a bound on the 2-Wasserstein distance also implies a bounds in how distant are their mean and variance.} \citealp{villani2009optimal}) to the input-output distribution of the SNN. 
An illustrative example of our framework is shown in Figure \ref{fig:NoisySinesSNN}, where, given a SNN trained on a 1D regression task (Figure \ref{fig:NoisySinesSNN}), our framework outputs a GMM approximation (Figure \ref{fig:NoisySinesGMM}) with error bounds on its closeness to the SNN (Figure \ref{fig:NoisySinesERROR}). 
Our approach is based on iteratively approximating the output distribution of each layer of the SNN with a mixture of Gaussian distributions and propagating this distribution through the next layer.
In order to propagate a distribution through a non-linear activation function, we first approximate it with a discrete distribution, which we call a \emph{signature approximation}.\footnote{A discrete approximation of a continuous distribution is also called a codebook in the field of constructive quantization \citep{graf2000foundations} or particle approximation in Bayesian statistics \citep{liu2016stein}.} The resulting discrete distribution can then be propagated exactly through a layer of the neural network (activation function and \changed{affine} combination with weights and biases), which in the case of jointly Gaussian weights and biases, leads to a new Gaussian mixture distribution. 
To quantify the error between the SNN and the resulting GMM, we use techniques from optimal transport, probability theory, and interval arithmetic. In particular, for each of the approximation steps described above, we bound the error introduced in terms of the 2-Wasserstein distance. Then, we combine these bounds using interval arithmetic to bound the 2-Wasserstein distance between the SNN and the approximating GMM. 
Furthermore, by relying on the fact that GMMs can approximate any distribution arbitrarily well \citep{delon2020wasserstein}, we prove that by appropriately picking the parameters of the GMM, the bound on the approximation error converges uniformly to zero by increasing the size of the GMM. Additionally, we show that the resulting error bound is piecewise differentiable in the hyper-parameters of the neural network, allowing gradient-based optimization of the behavior of a NN to mimic that of a given GMM.

\begin{figure}[t]
    \centering
    \begin{subfigure}[b]{0.32\textwidth}
        \centering
        \includegraphics[width=\textwidth]{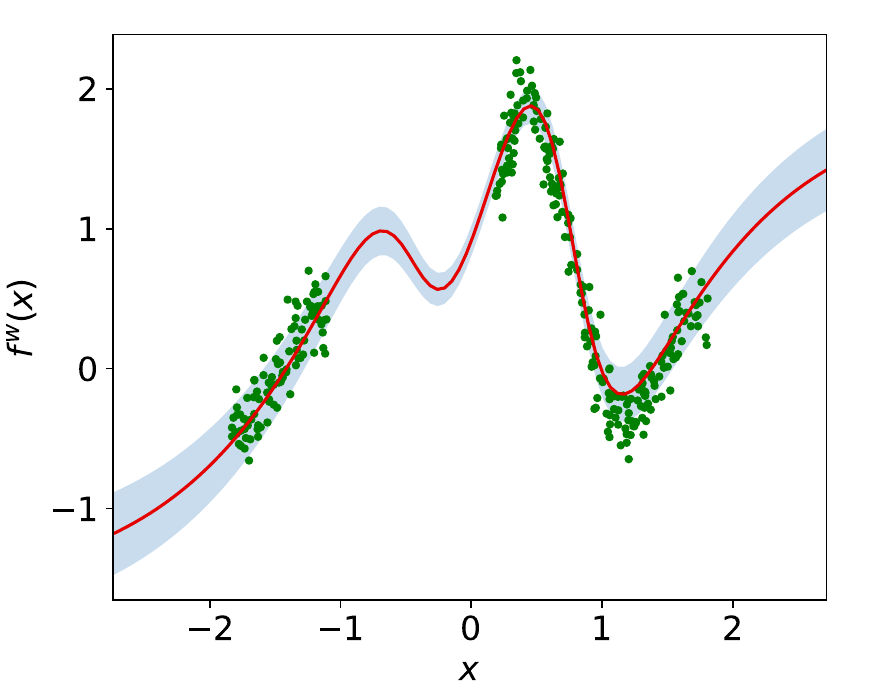}
        \caption{SNN}
        \label{fig:NoisySinesSNN}
    \end{subfigure}
    \hfill
    \begin{subfigure}[b]{0.32\textwidth}
        \centering
        \includegraphics[width=\textwidth]{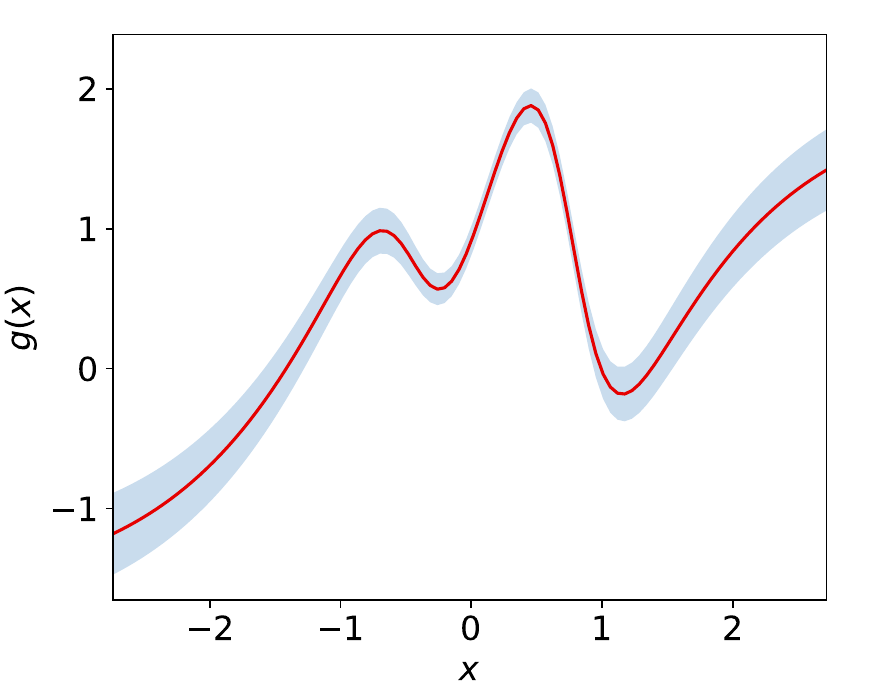}
        \caption{GMM}
        \label{fig:NoisySinesGMM}
    \end{subfigure}
    \begin{subfigure}[b]{0.32\textwidth}
        \centering
        \includegraphics[width=\textwidth]{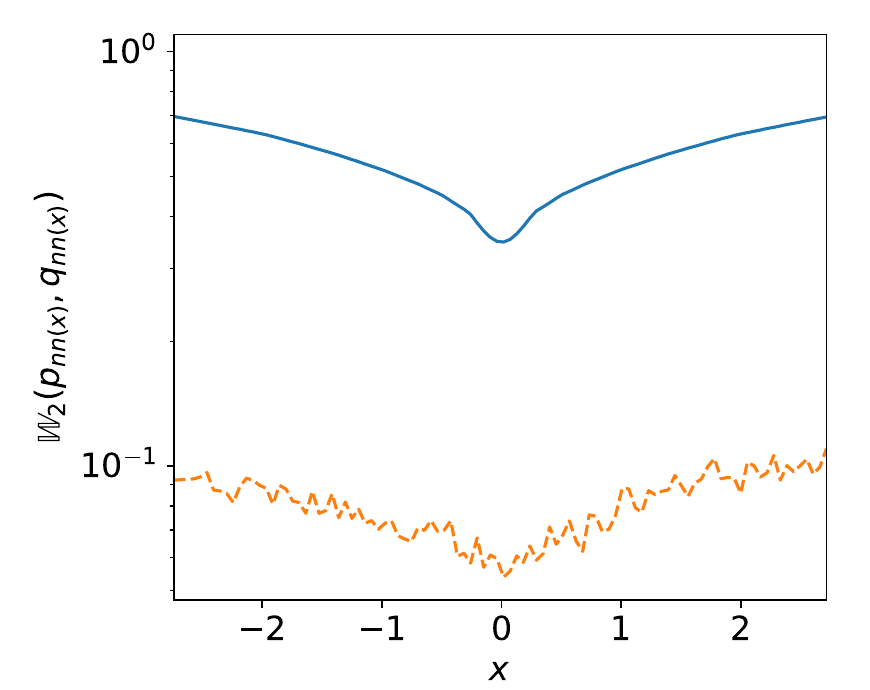}
        \caption{Error}
        \label{fig:NoisySinesERROR}
    \end{subfigure}
    \caption{
    Visualization of (a) a Monte Carlo approximation of a SNN trained on samples from a 1D mixture of sines with additive noise, and (b) the GMM approximation of the same SNN composed of a mixture of size $10$, as obtained with our approach. 
    The SNN has one hidden layer with 64 neurons and is trained using VOGN \citep{khan2018fast}. 
    The red line and blue-shaded region in (a) and (b) illustrate the point-wise mean and standard deviation at a uniform grid of 100 points in the input domain. In (c), the dashed and solid lines represent, respectively, the empirical estimates of the 2-Wasserstein distance between the GMM and SNN distributions at each input point, and formal bounds of the same quantity, as obtained by the results derived in this paper. }
    \label{fig:NoisySine}
\end{figure}

We empirically validate our framework on various SNN architectures, including fully-connected and convolutional layers, trained for both regression and classification tasks on various data sets including MNIST, CIFAR-10, and a selection of UCI data sets. Our experiments confirm that our approach can successfully approximate a SNN with a GMM with arbitrarily small error, albeit with increasing computational costs when the network's depth increases. Furthermore, perhaps surprisingly, our results show that even a mixture with a relatively small number of components generally suffices to empirically approximate a SNN accurately. To showcase the importance of our results, we then consider two applications: (i) uncertainty quantification in SNNs, and (ii) prior selection for SNNs. In the former case we show how the GMM approximation resulting from our framework can be used to study and quantify the uncertainty of the SNN predictions in classification tasks on the MNIST and CIFAR-10 data sets. In the latter case,  we consider prior selection for neural networks, which is arguably one of the most important problems in performing Bayesian inference with neural networks \citep{fortuin2022priors}, and show that our framework allows one to precisely encode functional information in the prior of SNNs expressed as Gaussian processes (GPs), thereby enhancing  SNNs' posterior performance and outperforming state-of-the-art methods for prior selection of neural networks.

In summary, the main contributions of our paper are:
\begin{itemize}
\item We introduce a framework based on discrete approximations of continuous distributions to approximate SNNs of arbitrary depth and width as GMMs, with formal error bounds in terms of the 2-Wasserstein distance.
\item We prove the uniform convergence of our framework by showing that for any finite set of input points our approach can return a GMM of finite size such that the 2-Wasserstein distance between this distribution and the joint input-output distribution of the SNN on these points can be made arbitrarily small. 
\item 
We perform a large-scale empirical evaluation to demonstrate the efficacy of our algorithm in approximating SNNs with GMMs and to show how our results could have a profound impact in various areas of research for neural networks, including uncertainty quantification and prior selection.
\end{itemize}

\subsection{Related Works}
The convergence of infinitely wide SNNs with i.i.d.\ parameters to GPs was first studied by \cite{neal2012bayesian} by relying on the central limit theorem. 
The corresponding GP kernel for the one hidden layer case was then analytically derived by \cite{williams1996computing}. These results were later generalized to deep NNs \citep{hazan2015steps,lee2018deep,matthews2018gaussian}, convolutional layers \citep{garriga2018deep,novak2018bayesian,garriga2021correlated}, and general non-linear activation functions \citep{hanin2023random}. However, these results only hold for infinitely wide or deep neural networks with i.i.d.\ parameters; in practice, NNs have finite size and depth and for trained NNs their parameters are generally not i.i.d..
To partially address these issues, recent works have started to investigate how these results apply in the finite case \citep{dyer2020asymptotics,antognini2019finite,yaida2020non,bracale2021large,klukowski2022rate,balasubramanian2024gaussian}.  
In particular, \cite{eldan2021non} were the first to provide upper bounds on the rates at which a single layer SNN with a specific parameter distribution converges to the infinite width GP in terms of the Wasserstein distance.
This result was later generalized to isotropic Gaussian \changed{parameter} distributions \citep{cammarota2023quantitative}, deep Random NN \citep{basteri2022quantitative}, and to other metrics, such as the total variation, sup norm, and Kolmogorov distances \citep{apollonio2025normal,bordino2023non,favaro2025quantitative}. 
However, to the best of our knowledge, all existing works are limited to untrained SNNs with i.i.d.\ weights and biases, which may be taken as priors in a Bayesian setting \citep{bishop2006pattern} or may represent the initialization of gradient flows in an empirical risk minimization framework \citep{jacot2018neural}.
In contrast, critically, in our paper we allow for correlated and non-identical distributed weights and biases, thus also including trained posterior distributions of SNNs learned via \changed{Variational Inference}.
In this context, \cite{khan2019approximate} showed that for a subset of Gaussian Bayesian inference techniques, the approximate posterior \changed{parameter} distributions are equivalent to the posterior distributions of GP regression models, implying a linearization of the approximate SNN posterior in function space \citep{immer2021scalable}. Instead, we approximate the SNN with guarantees in function space and our approach generalizes to any approximate inference technique. 

\changedNEW{
A fundamentally different approach to approximating general distributions is to use empirical measures obtained from independent samples of the target distribution. 
Such methods provide only statistical rather than deterministic guarantees: their accuracy holds with high probability and relies on assumptions such as bounded support or low dimensionality \citep{fournier2015rate}, which typically do not hold for SNN output distributions.
}

While the former set of works focuses on the convergence of SNNs to GPs, various works have considered the complementary problem of finding the distribution of the parameters of a SNN that mimic a given GP \citep{flam2017mapping,flam2018characterizing,tran2022all,matsubara2021ridgelet}. This is motivated by the fact that GPs offer an excellent framework to encode functional prior knowledge; consequently, this problem has attracted significant interest in encoding functional prior knowledge into the prior of SNNs. 
In particular, closely related to our setting are the works of \cite{flam2017mapping} and \cite{tran2022all}, which optimize the parametrization of the \changed{parameter}-space distribution of the SNN to minimize the KL divergence and 1-Wasserstein distance, respectively, with a desired GP prior. They then utilize the optimized \changed{parameter} prior to perform Bayesian inference, showing improved posterior performance. However, these methods lack formal guarantees on the closeness of the optimized SNN and the GP that the error bounds we derive in this paper provide.

\section{Notation}
For a vector $\vx\in\eucl^n$, we denote by $\|\vx\|$ the Euclidean norm on $\eucl^n$, and use $\evx{i}$ to denote the $i$-th element of $\vx$. Similarly, for a matrix $\mM\in\eucl^{n\times m}$ we denote by $\|\mM\|$ the spectral (matrix) norm of $\mM$ and use $\emM{i,j}$ to denote the $(i,j)$-th element of $\mM$. 
In addition, $\diag(\vx)$ denotes a matrix with the elements of $\vx$ on its diagonal. 
For a matrix $\mT\in\eucl^{m\times n}$, the post image of a region $\sX\subset\eucl^n$ under $\mT$ is defined as $\postImage(\sX,\mT)=\{\mT\vx\mid\vx\in\sX\}$.
If $\sX$ is finite, $|\sX|$ denotes its cardinality.

Given a measurable space $(\sX, \borrelAlgebra(\sX))$ with $\borrelAlgebra(\sX)$ being the $\sigma-$algebra,  
we denote with $\probMeas(\sX)$ the set of probability distributions on $(\sX, \borrelAlgebra(\sX))$. In this paper, for a metric space $\sX$, $\borrelAlgebra(\sX)$ is assumed to be the Borel $\sigma$-algebra of $\sX$. 
Considering two measurable spaces $(\sX, \borrelAlgebra(\sX))$ and $(\sY, \borrelAlgebra(\sY))$, a probability distribution $p\in\probMeas(\sX)$, and a measurable mapping $h:\sX\rightarrow\sY$, we use $h\#p$ to denote the pushforward measure of $p$ by $h$, i.e., the measure on $\sY$ such that \changed{$\forall\sA\in\borrelAlgebra(\sY)$}, $(h\#p)(\sA)=p(h^{-1}(\sA))$.  
For $N\in\mathbb{N}$, $\Pi_N=\{\pi \in \eucl^N_{\geq0}: \sum^N_{i=1}\elem{\pi}{i}=1 \}$ is the N-simplex. A discrete probability distribution $d\in \probMeas(\sX)$ is defined as $d=\sum_{i=1}^N\evpi{i}\delta_{\vc_i}$, where $\delta_\vc$ is the Dirac delta function centered at location $\vc \in \sX$ and  $\pi \in \Pi_N$. Lastly, the set of discrete probability distributions on $\sX$ with at most $N$ locations is denoted as $\discMeas_N(\sX)\subset \mathcal{P}(\sX)$.

\section{Preliminaries}\label{sec:prelim}
In this section, we give the necessary preliminary information on  Gaussian models and on the Wasserstein distance between probability distributions. 
\subsection{Gaussian Processes and Gaussian Mixture Models}
A Gaussian process (GP) $\rvg$ is a stochastic process such that for any finite collection of input points $\sX=\{\vx_1,\hdots,\vx_D\}$  the joint distribution of $\rvg(\sX)\coloneqq \{\rvg(\vx_1),\hdots,\rvg(\vx_D)\}$ follows a multivariate Guassian distribution with mean function $\gpMean$ and covariance function $\gpKernel$, i.e., $\rvg(\sX)\sim \Ndist(\gpMean(\sX),\gpKernel(\sX,\sX))$ \citep{adler2009random}. 
A Gaussian Mixture Model (GMM) with $M\in \mathbb{N}$ components, is a set of $M$ GPs, also called components, averaged w.r.t.\ a probability vector $\pi\in\Pi_M$ \citep{tresp2000mixtures}. Therefore, the probability distribution of a GMM  follows a Gaussian mixture distribution. 
\begin{definition}[Gaussian Mixture Distribution]
   A probability distribution $q\in \probMeas(\eucl^d)$ is called a  Gaussian Mixture distribution of size $M\in\natNum$  if  $q=\sum_{i=1}^M\elem{\vpi}{i}\Ndist(\mNdist_i,\vNdist_i),$
    where $\pi\in \Pi_M$ and $\mNdist_i\in\eucl^d$ and $\vNdist_i\in\eucl^{d\times d}$ are the mean and covariance matrix of the $i$-th Gaussian distribution in the mixture. The set of all Gaussian mixture distributions with $M$ or less components is denoted by $\GMM_M(\eucl^d)\subset \probMeas(\eucl^d)$.
\end{definition}
One of the key properties of GMMs, which motivates their use in this paper to approximate the probability distribution induced by a neural network, is that for large enough $M$ they can approximate any continuous probability distribution arbitrarily well \citep{delon2020wasserstein}. Furthermore, being a Gaussian mixture distribution a weighted sum of Gaussian distributions, it inherits the favorable analytic properties of Gaussian distributions \citep{bishop2006pattern}.

\subsection{Wasserstein Distance}
To approximate SNNs to GMMs and vice-versa, and quantify the quality of the approximation, we need a notion of distance between probability distributions. While various distance measures are available in the literature, in this work we consider the Wasserstein distance \citep{gibbs2002choosing}. 
To define the Wasserstein distance, for $\rho\geq 1$ we define the $\rho-$Wasserstein space of distributions $\probMeas_\rho(\eucl^d)$ as the set of probability distributions with finite moments of order $\rho$, i.e., any $p \in \probMeas_\rho(\eucl^d)$ is such that $\int_{\eucl^d}\|x\|^\rho dp(x)< \infty$.  For $p, q \in \probMeas_\rho(\eucl^d)$, the $\rho$-Wasserstein distance $\wasserstein_\rho$ between $p$ and $q$ is defined as 
\begin{equation}\label{eq:defWasser}
    \wasserstein_\rho(p,q)\coloneqq \left( \inf_{\gamma\in\Gamma(p,q)}\expect_{(\vx,\vz) \sim \gamma}[\|\vx-\vz\|^\rho]\right)^{\frac{1}{\rho}}=\left( \inf_{\gamma\in\Gamma(p,q)}\int_{\eucl^d\times \eucl^d}\|\vx-\vz\|^\rho \gamma(d\vx,d\vz)\right)^{\frac{1}{\rho}},
\end{equation}
where  $\Gamma(p,q)\subset \probMeas_\rho(\eucl^d\times\eucl^d)$ represents the set of probability distributions with marginal distributions $p$ and $q$. 
It can be shown that $\wasserstein_\rho$ is a metric, which is given by the minimum cost, according to the $\rho-$power of the Euclidean norm, required to transform one probability distribution into another \citep{villani2009optimal}. 
Furthermore, another attractive property of the $\rho$-Wasserstein distance, which distinguishes it from other divergence measures such as the KL divergence \citep{hershey2007approximating}, is that closeness in the $\rho$-Wasserstein distance implies closeness in the first $\rho$ moments.\footnote{See Lemma~\ref{lemma:1WasserBounds12Moment} in Appendix~\ref{append:wasserGeneralProp}.} 

\changed{In what follows, we focus on the $2$-Wasserstein distance}.\footnote{Since $\wasserstein_1\leq\wasserstein_2$, the methods presented in this work naturally extend to the 1-Wasserstein distance. Detailed comparisons and potential improvements when utilizing the $1$-Wasserstein distance are reported in the Appendix.}  
\changed{
While closed-form expressions exist for the $2$-Wasserstein distance for Gaussian distributions \citep{givens1984class},\footnote{For $p_1=\Ndist(\mNdist_1,\vNdist_1)\in\GMM(\eucl^n)$ and $p_2=\Ndist(\mNdist_2,\vNdist_2)\in\GMM(\eucl^n)$, it holds that $\wasserstein_2^2(p_1,p_2) = \|\mNdist_1-\mNdist_2\|^2 + \trace\left(\vNdist_1+\vNdist_2 - 2\left(\vNdist_1^{1/2}\vNdist_2\vNdist_1^{1/2}\right)^{1/2}\right)$.}
this is not the case for Gaussian mixture distributions. However, a tractable upper bound for the $ \wasserstein_2 $ is provided by the $\mw_2$ distance, formally defined below following \citep{delon2020wasserstein}. 
\begin{definition}[$\mw_2$ Distance]\label{def:mw2}
    Let $p=\sum_{i=1}^{N_p}\elem{\vpi_p}{i}p_i\in\GMM_{N_1}(\eucl^n)$ and $q=\sum_{j=1}^{N_q}\elem{\vpi_q}{j}q_j\in\GMM_{N_1}(\eucl^n)$ be two Gaussian Mixture distributions. Then, the $\mw_2$-distance is defined as
    \begin{equation}\label{eq:definitionMW2}
        \begin{aligned}
        \mw_2(p,q) \coloneqq \left(\inf_{\gamma\in\Gamma(p,q)\cap\GMM_{<\infty}(\eucl^{2n})}\expect_{(\vx,\vz)\sim\gamma}[\|\vx-\vz\|^2]\right)^{\frac{1}{2}} = \bigg(\min_{\vpi\in\Gamma(\vpi_p,\vpi_q)}
        \sum_{i,j}
        \elem{\vpi}{i,j} \wasserstein_2^2(p_i,q_j)\bigg)^{\frac{1}{2}},
        \end{aligned}
    \end{equation}
    where $\GMM_{<\infty}(\eucl^n\times\eucl^n)$ is the set of Gaussian mixture distributions with finitely many components.
\end{definition}
The $\mw_2$ distance is a Wasserstein-type distance that restricts the set of possible couplings to Gaussian mixtures. This guarantees that $\mw_2$ computation reduces to a finite discrete linear program with  $N_p\cdot N_q$ optimization variables and coefficients \citep{delon2020wasserstein}. The coefficients are given by the Wasserstein distance between the mixture's Gaussian components, which have closed-form expressions. Furthermore, as Definitions~\ref{eq:defWasser} and \ref{eq:definitionMW2} only differ for the intersection on $\GMM_{<\infty}$ in the couplings, it is easy to see that $\mw_2(p,q)\geq \mathbb{W}_2(p,q)$. 
}

\section{Problem Formulation}
In this section, 
we first introduce the class of neural networks considered in this paper, and then we formally state our problem.
\subsection{Stochastic Neural Networks (SNNs)}\label{sec:snns}
For an input vector $x\in \eucl^{n_0}$, we consider a \changed{feedforward} neural network of $K$ hidden layers $f^{w}(x)$ defined iteratively as:
\changed{
\begin{subequations}\label{eq: nn} 
\begin{align}
    &\vz_{1} = L^{\vw_0}(\vx) \\
    &\vz_{k+1} = L^{\vw_k}(\act(\vz_k)), \qquad k \in \{1,\hdots,K\} \\
    &\nn(\vx)=\state_{K+1}
\end{align}
\end{subequations}
where,} for $n_k$ being the number of neurons of layer $k$, we have that $\act_k:\eucl^{n_{k}}\to \eucl^{n_{k}}$ is a the vector of piecewise continuous activation functions, and \changed{$L^{\vw_k}:\realNum^{n_k}\rightarrow\realNum^{n_{k+1}}$ denotes an affine transformation, such as a fully-connected (dense) layer or a convolutional layer. Each \(L^{\vw_k}\) is parametrized by weights $\mW_{k}$, e.g., a transformation matrix or convolutional kernel, and bias \(\vb_k\), which are collectively stored in \(\vw_k=\{\mW_k,\vb_k\}\).}
We denote by $\vw=\{\vw_k\}_{k=0}^K$ the union of all the neural network parameters.
$\changed{\vz_{K+1}}$ is the final output of the network, possibly corresponding to the vector of logits in case of classification problems.

\changed{In this work, we assume that the parameters \(\vw_k\) of each layer \(k\), rather than being fixed, are distributed according to a probability distribution $p_{\vw_k}$.}\footnote{This includes architectures with both deterministic and stochastic \changed{parameters}, where the distribution over deterministic \changed{parameters} can be modeled as a Dirac delta function.}
For any $x\in \mathbb{R}^{n_0},$ placing a distribution over the \changed{parameters} 
leads to a distribution over the outputs of the NN. That is, $\bnn(x)$ is a random variable and  $\{\bnn(x)\}_{x\in \mathbb{R}^{n_0}}$ is a stochastic process, which we call a \emph{stochastic neural network} (SNN) to emphasize its randomness \citep{yu2021simple}. In particular, $\bnn(x)$ follows a probability distribution $p_{nn(x)}$, which, as shown in \cite{adams2023bnn}, can be iteratively defined over the layers of the SNN. Specifically, as shown in Eqn~\eqref{eq:p_nn(x),k} below, $p_{nn(x)}$ is obtained by recursively propagating through the layers of the NN architecture the distribution induced by the random \changed{parameters} at each layer. The propagation is obtained by marginalization of the output distribution at each layer with the distribution of the previous layer:
\changed{
\begin{subequations}\label{eq:p_nn(x),k}
\begin{align}
    & p_{nn(\vx),1}=\expect_{\state_0\sim\delta_\vx}[L^{\vw_0}(\state_0)\# p_{\vw_0}] \label{eq:p_nn(x),k,init} \\
    & p_{nn(\vx),k+1} = \expect_{\vz_{k} \sim p_{nn(\vx),k}}[L^{\vw_{k}}(\act(\vz_{k}))\#p_{\vw_k}], \qquad k \in \{1,\hdots,K\}  \label{eq:p_nn(x),k,update} \\
    & p_{nn(x)} = p_{nn(\vx),K+1}
\end{align}
\end{subequations}
where} $\delta_\vx$ is the Delta Dirac function centered at $x$.\footnote{\changed{Equivalently, for all \(k\), \(p_{nn(\vx),k+1}\) is a mixture distribution with infinitely many components of the form \(L^{\vw_k}(\sigma(\vz_k))\#p_{\vw_k}\) for \(\vz_k\in\realNum^{n_k}\), weighted by \(p_{nn(\vx),k}\), or, equivalently, the pushforward of the joint distribution of \(p_{\vw_k}\) and \( p_{nn(\vx),k}\) by \(L^{\vw_k}(\sigma(\vz_k))\).
}} 
For any finite subset $\sX\subset\eucl^{n_0}$ of input points, we use $p_{nn(\sX)}$ to denote the joint distribution of the output of $\bnn$ evaluated at the points in $\sX$.

In what follows, because of its practical importance and the availability of closed-form solutions, we will introduce our methodological framework under the assumption that \changed{$p_{\vw_k}$ is a multivariate Gaussian distributions for all $k$.}
We stress that this does not imply that the output distribution of the SNN, $p_{nn}(x)$, is also Gaussian; in fact, in general, it is not. Furthermore, we should also already remark that, as we will explain in \changed{Remark~\ref{remark:GenerlizationAlgoNonGaus}} and illustrate in the experimental results, the methods we present in this paper can be extended to more general (non-necessarily Gaussian) $\changed{p_{\vw_k}}\in\probMeas_2(\cdot)$.

\begin{remark}
    The above definition of SNNs encompasses several stochastic models of practical importance, including Bayesian Neural Networks (BNNs) trained with Gaussian Variational Inference methods such as, e.g., Bayes by Backprop \citep{blundell2015weight} or Noisy Adam \citep{zhang2018noisy},
    NNs with only a subset of stochastic layers \citep{tang2013learning}, Gaussian Dropout NNs \citep{srivastava2014dropout}, and NNs with randomized smoothing and/or stochastic inputs \citep{cohen2019certified}.
    \changed{The methods proposed in this paper apply to all such models that assume the weight distributions of different layers are independent.} 
    In particular, in the case of BNNs, in Section \ref{sec:experiments}, we will show how our approach can be used to investigate both the prior and posterior behavior, in which case, depending on the context, \changed{$p_w$} could represent either the prior or the \changed{approximate} posterior distribution of the weights and biases.
\end{remark}

\subsection{Problem Statement}
Given an error threshold $\epsilon>0$ and a SNN, the main problem we consider, as formalized in Problem~\ref{Prob:mainProb} below, is that of finding a GMM that is $\epsilon-$close according to the 2-Wasserstein distance to the SNN. 
\begin{problem}\label{Prob:mainProb}
    Let $\sX\subset\eucl^{n_0}$ be a finite set of input points, and $\epsilon >0$ be an error threshold. Then, for a SNN with distribution $p_{nn(\mathcal{X})}$ find a GMM with distribution $q_{nn(\sX)}$ such that:
    \begin{equation}\label{eq:mainProb}
        \wasserstein_2\left(p_{nn(\mathcal{X})}, q_{nn(\sX)}\right)\leq \epsilon.
    \end{equation}
\end{problem}
In Problem~\ref{Prob:mainProb} we aim to approximate a SNN with a GMM, thus extending existing results \citep{neal2012bayesian,matthews2018gaussian} that approximate an untrained SNN with i.i.d.\ parameters with a GP under some limit (e.g., infinite width, \citealp{neal2012bayesian,matthews2018gaussian}, or infinitely many convolutional filters, \citealp{garriga2018deep}). 
In contrast, in Problem \ref{Prob:mainProb} we consider both trained and untrained SNNs of finite width and depth and, crucially, we aim to provide formal error bounds on the approximation error. Note that in Problem \ref{Prob:mainProb} we consider a set of input points and compute the $2-$Wasserstein distance of the joint distribution of a SNN and a GMM on these points. Thus, also accounting for the correlations between these input points. 

\begin{remark}\label{remark:InverseProb}
    While in Problem \ref{Prob:mainProb} we seek for a Guassian approximation of a SNN, one could also consider the complementary problem of finding the parameters of a SNN that best approximate a given GMM. Such a problem also has wide application. In fact, a solution to this problem would allow one to encode informative functional priors represented via Gaussian processes to SNNs, 
    addressing one of the main challenges in performing Bayesian inference with neural networks \citep{flam2017mapping,tran2022all}. 
    As the Wasserstein distance satisfies the triangle inequality and because there exist closed-form expressions for an upper bound on the Wasserstein distance between Gaussian Mixture distributions \citep{delon2020wasserstein}, a solution to Problem \ref{Prob:mainProb} can be readily used to address the above mentioned complementary problem. This will be detailed in Section \ref{sec:differentiability} and empirically demonstrated in Section \ref{sec:expPriorTuning}.
\end{remark}

\begin{remark}
    Note that in Problem \ref{Prob:mainProb} we consider a GMM of arbitrary \changed{finite} size. This is because such a model can approximate any continuous distribution arbitrarily well, and, consequently, guarantees of convergence to satisfy Problem~\ref{Prob:mainProb} can be obtained, as we will prove in Section \ref{sec:approxPredPostByGMM}. 
    However, Problem~\ref{Prob:mainProb} could be restricted to the single Gaussian case, in which case Problem \ref{Prob:mainProb} reduces to finding the GP that best approximates a SNN. However, in this case, because $p_{nn(\mathcal{X})}$ is not necessarily Gaussian, the resulting distance bound between the GP and the SNN may not always be made arbitrarily small.
\end{remark}

\begin{figure}[ht]
    \centering
    \includegraphics[width=0.95\textwidth]{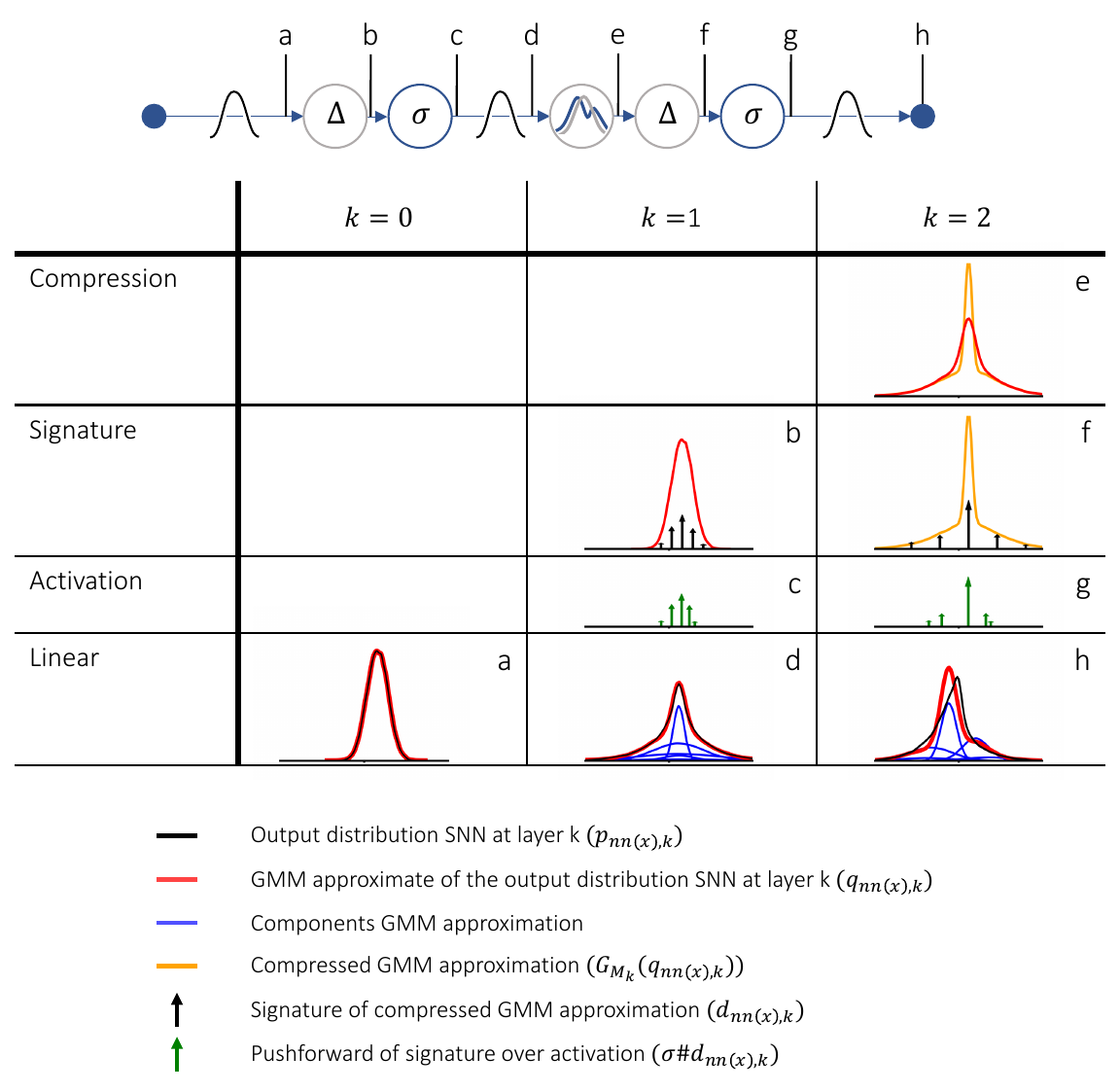}
    \caption{Illustration of the iterative approximation procedure of the output distribution of a single-input single-output SNN with two hidden layers ($K=2$) and one neuron per layer ($n_1,n_2=1$) at a single input point by a Gaussian Mixture distribution.
    }
    \label{fig:method}
\end{figure}

\subsubsection{Approach Outline}
Our approach to solving Problem \ref{Prob:mainProb} is based on iteratively approximating the output distribution of each layer of a SNN as a mixture of Gaussian distributions and quantifying the approximation error introduced by each operation. As illustrated in Figure \ref{fig:method}, following the definition of $p_{nn(\vx)}$ in Eqn \eqref{eq:p_nn(x),k}, the first step is to perform an \changed{affine} combination of the input point $\vx$ with the parameters of the first layer (step a). Because the SNN 
\changed{parameters} are assumed to be jointly Gaussian and jointly Gaussian random variables are closed under \changed{affine} combination, this \changed{affine} operation leads to a Gaussian distribution. Then, before propagating this distribution through an activation function, a signature operation on the resulting distribution is performed; that is, the continuous distribution is approximated into a discrete distribution (step b). After the signature approximation operation, the resulting discrete distribution is passed through the activation function (step c) and an \changed{affine} combination with the \changed{parameters} of the next layer is performed (step d). Under the assumption that the \changed{parameters} are jointly Gaussian, 
this \changed{affine} operation results into a Gaussian mixture distribution of size equal to the support of the discrete distribution resulting from the signature operation. To limit the computational burden of propagating a Gaussian mixture with a large number of components, a compression operation is performed that compresses this distribution into another mixture of Gaussian distributions with at most $M$ components for a given $M$ (step e). 
After this, a signature operation is performed again on the resulting Gaussian mixture distribution (step f), and the process is repeated until the last layer. Consequently, to construct $q_{nn(\vx)}$, the Gaussian mixture approximation of $p_{nn(\vx)}$, our approach iteratively performs four operations: \changed{affine} transformation, compression, signature approximation, and propagation through an activation function. To quantify the error in our approach, we derive formal error bounds in terms of the Wasserstein distance for each of these operations and show how the resulting bounds can be combined via interval arithmetic to an upper bound of $\wasserstein_2\left(p_{nn(\vx)}, q_{nn(\vx)}\right)$.

In what follows, first, as it is one of the key elements of our approach, in Section \ref{sec:signatures} we introduce the concept of signature of a probability distribution and derive error bounds on the 2-Wasserstein distance between a Gaussian mixture distribution and its signature approximation. Then, in Section \ref{sec:approxPredPostByGMM} we formalize the approach described in Figure \ref{fig:method} to approximate a SNN with a GMM and derive bounds for the resulting approximation error. Furthermore,  in Subsection \ref{sec:ConvergenceAnalysis} we prove that $q_{nn(x)}$, the Gaussian mixture approximation resulting from our approach, converges uniformly to $p_{nn(x)},$ the distribution of the SNN, that is, at the cost of
increasing the support of the discrete approximating distributions and the number of components of $q_{nn(x)}$,
the 1- and 2-Wasserstein distance between $q_{nn(x)}$ and $p_{nn(x)}$ can be made arbitrarily small. 
Then, in Section \ref{sec:AlgorithmicSection} we present a detailed algorithm of our approach. 
Finally, we conclude our paper with Section \ref{sec:experiments}, where an empirical analysis illustrating the efficacy of our approach is performed.

\section{Approximating Gaussian Mixture Distributions by Discrete Distributions}\label{sec:signatures}
One of the key steps of our approach is to perform a signature operation on a Gaussian Mixture distribution, that is, a Gaussian Mixture distribution is approximated with a discrete distribution, called a signature (step b and f in Figure \ref{fig:method}). In Subsection \ref{sec:introSignatures} we formally introduce the notion of the signature of a probability distribution. Then, in Subsection \ref{sec:wasser4signatures} we show how for a Gaussian Mixture distribution a signature can be efficiently computed with guarantees on the closeness of the signature and the original Gaussian mixture distribution in the $\wasserstein_2$-distance.  
\begin{figure}[htbp]
    \centering
    \includegraphics[width=0.5\textwidth]{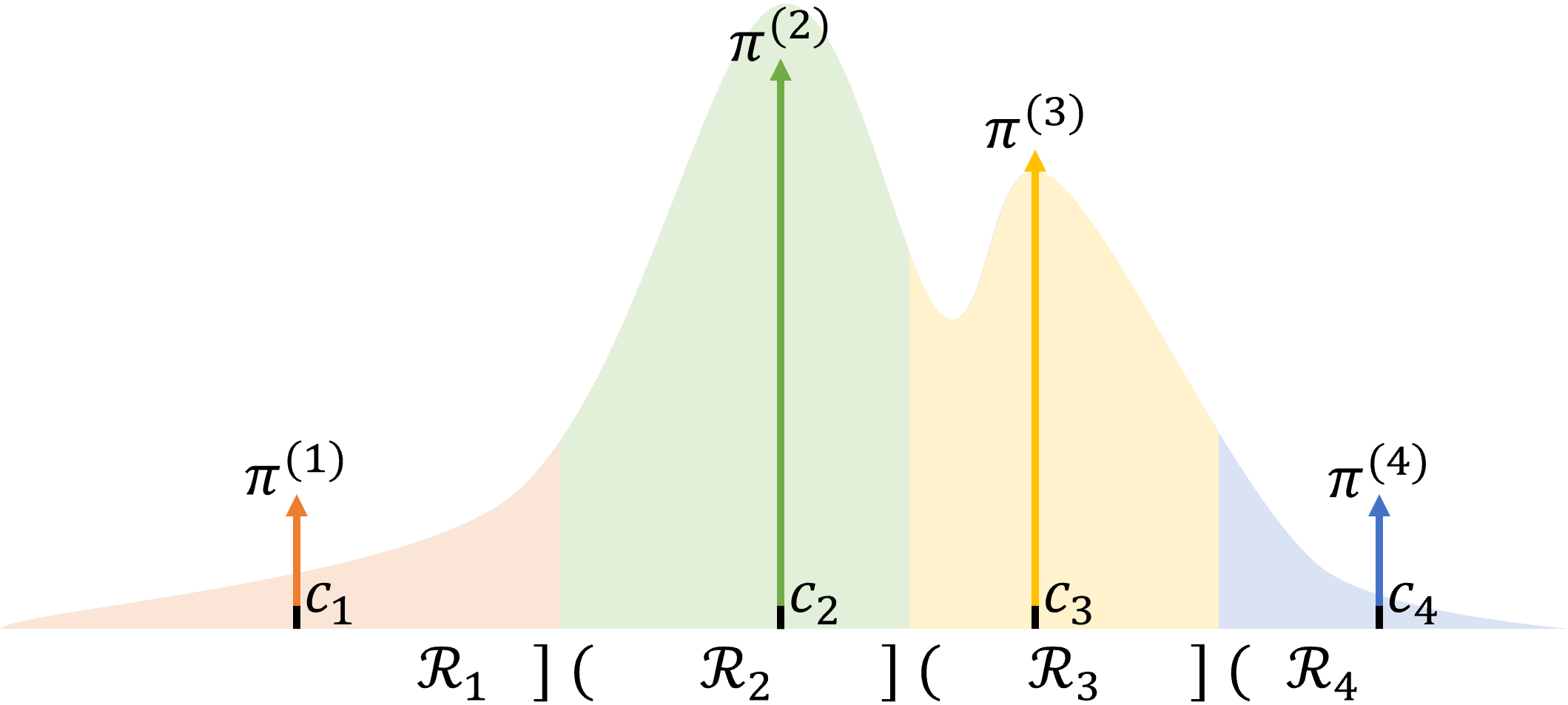}
    \caption{Signature of a continuous probability distribution. The signature is defined by locations $\{\vc_i\}_{i=1}^4$ which imply Voronoi partition $\{\sR_i\}_{i=1}^4$. The continuous distribution is discretized w.r.t. the partition by storing all the probability mass $\evpi{i}$ covered by $\sR_i$ at $\vc_i$.}
    \label{fig:signature}
\end{figure}

\subsection{Signatures: the Discretization of a Continuous Distribution}\label{sec:introSignatures}
For a set of points $\sC=\{\vc_i\}_{i=1}^N\subset\eucl^d$ called \emph{locations}, we define $\signature_\sC: \eucl^d\rightarrow\sC$ as the function that assigns any $\vz\in\eucl^d$ to the closest point in $\sC$, i.e.,   
\begin{equation*}
    \signature_\sC(\vz)\coloneqq\argmin{\vc\in\sC}\|\vz-\vc\|.
\end{equation*}
The pushforward operation induced by $\signature_\sC$ is a mapping from $\probMeas(\eucl^d)$ to $\discMeas_N(\eucl^d)$ and defines the \textit{signature} of a probability distribution. That is, as formalized in Definition \ref{def:sigagnature} below, a signature induced by $\signature_\sC$ is an approximation of a continuous distribution with a discrete one of support with a cardinality $N$.
\begin{definition}[Signature of a Probability Distribution]\label{def:sigagnature}
    The signature of a probability distribution $p\in\probMeas(\eucl^d)$ w.r.t.\ points $\sC=\{\vc_i\}_{i=1}^N\subset\eucl^d$ is the discrete distribution $\signature_\sC\#p = \sum_{i=1}^N \pi^{(i)} \delta_{c_i} \in\discMeas_N(\eucl^d)$, where $\evpi{i}=\prob_{\vz\sim p}[\vz\in\sR_i]$ with 
    \begin{equation}\label{eq:VoronoiPartitionOfSignature}
        \sR_i\coloneqq\left\{\vz\in\eucl^d \colon \|\vz-\vc_i\|\leq\|\vz-\vc_j\|, \forall j\in\{1,\hdots,N\}, j\neq i\right\}.  
    \end{equation}
\end{definition}
The intuition behind a signature is illustrated in Figure \ref{fig:signature}: the signature of distribution $p$ is a discretization of $p$ that assigns to each location $\vc_i\in \sC$ a probability given by the probability mass of $\sR_i$ according to $p$. 
Note that partition $\{\sR_i\}_{i=1}^N$, as defined in Eqn \eqref{eq:VoronoiPartitionOfSignature}, is the Voronoi partition of $\eucl^d$ w.r.t.\ the Euclidean distance. Hence, the signature of a probability distribution can be interpreted as a discretization of the probability distribution w.r.t.\ a Voronoi partition of its support. 
In the remainder, we refer to $\sC$ as the locations of the signature, $|\sC|$ as the signature size, $\{\sR_i\}_{i=1}^N$ as the partition of a signature, and we call $\signature$ the signature operation. 
\changed{
\begin{remark}\label{remark:tractable_signature}
    The signature of a probability distribution admits a closed-form expression when $\pi$ in Definition~\ref{def:sigagnature} is analytically tractable, i.e., the probability mass over the signature's partition $\{\sR_i\}_{i=1}^N$, is analytically tractable. 
    For a Gaussian distribution, this is the case when the regions $\sR_i$ are aligned with the geometry induced by the covariance matrix. 
    In particular, for $\Ndist(\vmu,\Sigma)$, axis-aligned hyperrectangles in the eigenbasis of $\Sigma$, ellipsoids of the form \(\{\vx \colon (\vx-\vmu)^T\Sigma^{-1}(\vx-\vmu)\leq c\}\), and unions or intersections of such sets all admit a closed-form expressions for their probability under $\Ndist(\vmu,\Sigma)$ \citep{genz2009computation}. 
\end{remark}
As a consequence of Remark~\ref{remark:tractable_signature}, for a Gaussian mixture distribution with components with different eigenbases of their covariance, we apply the signature operation separately to each Gaussian distribution in the mixture. Specifically, for a Gaussian mixture distribution $p = \sum_{i=1}^N \evpi{i} p_i$ and a set of signature locations $\bm\sC= \{\sC_i\}_{i=1}^N$, we define the \emph{component-wise signature} of \(p\) as\footnote{An alternative approach would be to apply the same signature locations to all components in the mixture, and approximate the resulting discrete distribution via numerical integration. However, this becomes intractable when component covariances differ significantly.}
\begin{equation}\label{eq:component_wise_signature}
    \bm\signature_{\bm\sC} \# p \coloneqq 
    \sum_{i=1}^N \pi^{(i)} \signature_{\sC_i} \# p_i.
\end{equation}
In the next subsection, we show how to efficiently bound $\wasserstein_2(p, \bm\signature_{\bm\sC} \# p)$.
}

\begin{remark}
    The approximation of a continuous distribution by a discrete distribution, also called a particle- or \changed{quantization} approximation, is a well-known concept in the literature \changed{
    \citep{graf2000foundations, pages2012optimal}}. The notion of the signature of a probability distribution introduced here is unique in that the discrete approximation is fully defined by the Voronoi partition of the support of the continuous distribution, thereby connecting the concept of signatures to semi-discrete optimal transport \citep{peyre2017computational}. 
\end{remark}

\subsection{Wasserstein Bounds for the Signature Operation}\label{sec:wasser4signatures}
The computation of $W_2(p,\signature_\sC\#p)$ requires solving a semi-discrete optimal transport problem, where we need to find the optimal transport plan from each point $\vz$ to a specific location $\vc_i \in \sC$ \citep{peyre2017computational}.
Luckily, as illustrated in the following proposition, which is a direct consequence of 
\changed{Lemma~3.1 in \citep{canas2012learning}},
for the case of a signature of a probability distribution, the resulting transport problem can be solved exactly.
\begin{proposition}\label{prop:OTSignature}
    \changed{Let \(\rho \geq 1\), then} for a probability distribution $p\in\probMeas_\changed{\rho}(\eucl^n)$, and signature locations $\sC=\{\vc_i\}_{i=1}^N\subset\eucl^n$,
    we have that
    \begin{equation*}
        \wasserstein\changed{_\rho^\rho}(p, \signature_\sC\#p) = \sum_{i=1}^N \evpi{i}\expect_{\vz\sim p}[\|\vz-\vc_i\|^\changed{\rho}\mid \vz \in \sR_i].
    \end{equation*}
\end{proposition}
According to Proposition~\ref{prop:OTSignature}, transporting the probability mass at each point $\vz$ to a discrete location $\vc_i$ based on the Voronoi partition of $\eucl^d$ guarantees that the cost $\|\vz-\vc_i\|$ is the smallest and leads to the smallest possible transportation cost, i.e., the optimal transportation strategy for the Wasserstein Distance. 
Proposition~\ref{prop:OTSignature} is general and guarantees that for any distribution $p\in \probMeas_2(\eucl^n)$, $\wasserstein_2(p, \signature_\sC\#p)$ only depends on 
\changed{the probability mass and the conditional $2$-moment of $p$} w.r.t.\ regions $\sR_i\in\{\sR_1,...,\sR_N \}$.
In the following proposition, we show how closed-form expressions for \changed{$\wasserstein_2^2(p, \signature_\sC\#p)$} can be derived for univariate Gaussian distributions. This result is then generalized in Corollary~\ref{corol:Was4SignMultGaus} to general multivariate Gaussian distributions.
\begin{proposition}\label{prop:Was4SignStdGaus}
    For $p=\Ndist(0,1)$, signature locations $\sC=\{c_i\}_{i=1}^N\subset\eucl$ and associated partition $\{\sR_i\}_{i=1}^n$ with $\sR_i=[l_i,v_i]\subseteq\eucl\cup\{-\infty,\infty\}$ for each $i$, it holds that
    \changed{
    \begin{equation}\label{eq:Was4SignStdGauss}
        \wasserstein_2^2(\signature_{\sC}\#\Ndist(0,1),\Ndist(0,1)) = 
        \sum_{i=1}^N\evpi{i}\left[\nu_i + (\mu_i - c_i)^2\right]
    \end{equation}
    with,
    \begin{equation}\label{eq:Was4SignStdGausTerms}
        \evpi{i} =\Phi(u_i) - \Phi(l_i), \quad
        \mu_i = \frac{1}{\evpi{i}}[\phi(l_i) - \phi(u_i)], \quad
        \nu_i = 1+\frac{1}{\evpi{i}}[l_i\phi(l_i) - u_i\phi(u_i)]  -\mu_i^2 
    \end{equation}
    where $\phi$ and $\Phi$ are the pdf and cdf of a standard (univariate) Gaussian distribution, respectively, i.e., $\phi(x)=\tfrac{1}{\sqrt{2\pi}}\exp\left(-\tfrac{x^2}{2}\right)$ and $\Phi(x)=\tfrac{1}{2}\left[1+\erf{\tfrac{x}{\sqrt{2}}}\right]$.}
\end{proposition}
In Proposition \ref{prop:Was4SignStdGaus}, $\mu_i$ and $\nu_i$ are the mean and variance of a standard univariate Gaussian distribution restricted on $[l_i,u_i]$.
Note that some of the regions in the partition will be unbounded. However, as the standard Gaussian distribution exponentially decays to zero for large $x$, i.e., $\lim_{x\rightarrow \infty}\exp(x^2)\Ndist(x\mid 0, 1)=0$, it follows that the bound in Eqn~\eqref{eq:Was4SignStdGauss} is finite even if some of the regions in $\{\sR_i\}_{i=1}^N$ are necessarily unbounded.

\begin{figure}[htbp]
    \centering
    \includegraphics[width=0.68\textwidth]{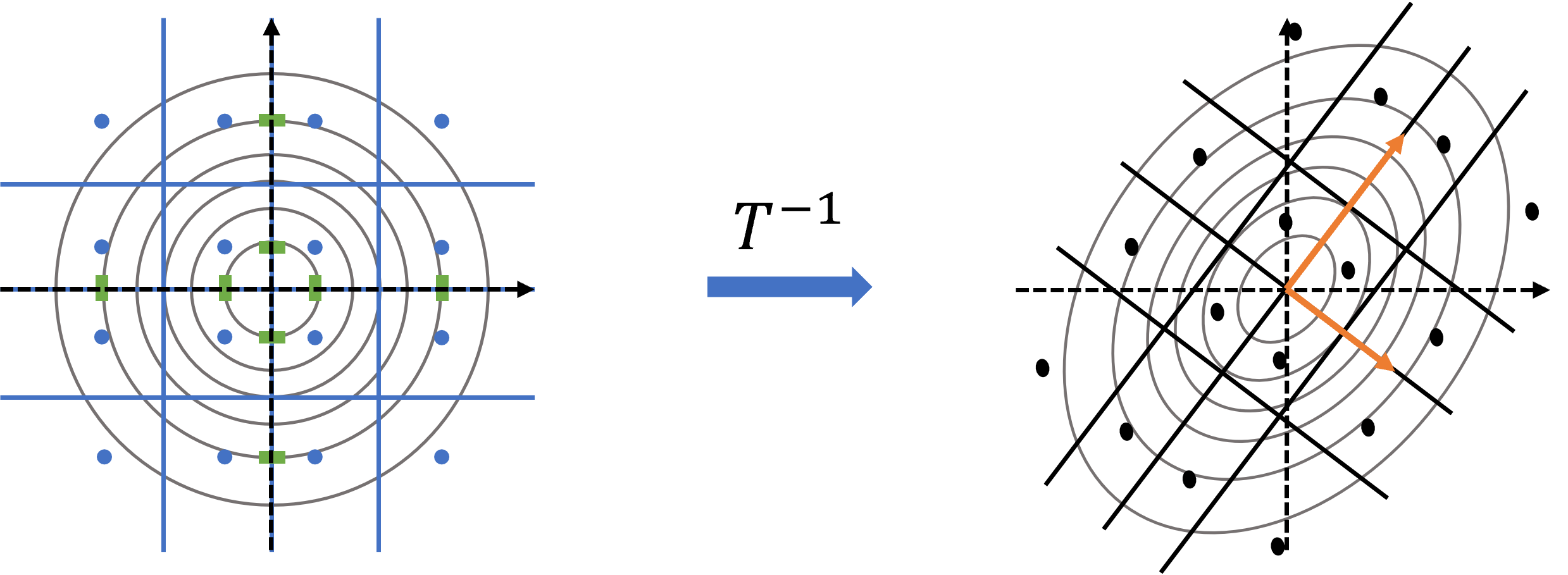}
    \caption{Construction of the signature of a 2D Gaussian distribution (black dots) using the signature of a standard Gaussian distribution (blue dots) as a template. In the space induced by $\mT$, that is, the basis of the covariance matrix (orange arrows) of the Gaussian distribution, all dimensions of the Gaussian distribution are independent. Hence, we can take the cross-product of the signatures of each univariate Gaussian marginal in dimension $j$ of the transformed space with signature locations $\sC^j$ (green stripes) as a template. The signature in the original space is then obtained by taking the post image of the template under $\mT^{-1}$. The blue and black lines represent the edges of the Voronoi partition associated with the signature locations in the transformed and original space, respectively.}
    \label{fig:signatureGaussian}
\end{figure}

A corollary of Proposition~\ref{prop:Was4SignStdGaus} is Corollary~\ref{corol:Was4SignMultGaus}, where we extend Proposition~\ref{prop:Was4SignStdGaus} to general multivariate Gaussian distributions under the assumption that locations $\sC$ are such that sets $\{\sR_{i}\}_{i=1}^N$ define a grid in the transformed space induced by the basis of the covariance matrix that we denote as $\mT$.
As illustrated in Figure~\ref{fig:signatureGaussian}, 
it is always possible to satisfy this assumption by taking $\sC$ as the image of an axis-aligned grid of points under transformation $\mT^{-1}$, where $\mT^{-1}$ can be computed via an eigendecomposition \citep{kasim2020derivatives} or Cholesky decomposition \citep{davis2016survey} of the covariance matrix.\footnote{Given the eigenvalues vector $\vlambda$ and eigenvector matrix $\mV$ of a covariance matrix $\vNdist$, we can take $\mT^{-1}=\mV\diag(\vlambda)$. In the case of a degenerate multivariate Gaussian, where $\vNdist$ is not full rank, we can take $\sC$ as the image of an axis-aligned grid of points in a space of dimension $\rank(\vNdist)$, under transformation $\mT^{-1}=\elem{(\mV\diag(\vlambda)^{\frac{1}{2}})}{:,1:\rank(\vNdist)}$.}

\begin{corollary}[of Proposition \ref{prop:Was4SignStdGaus}]\label{corol:Was4SignMultGaus}
    For $\Ndist(\mNdist,\vNdist)\in\GMM(\eucl^n)$, let matrix $\mT\in\eucl^{n\times n}$ be such that $\mT=\diag(\vlambda)^{-\frac{1}{2}}\mV^T$ where $\diag(\vlambda)=\mV^T\vNdist\mV$ is a diagonal matrix whose entries are the eigenvalues of $\vNdist$, and $\mV$ is the corresponding orthogonal (eigenvector) matrix. Further, let $\sC=\{\vc_i\}_{i=1}^N\subset\eucl^n$ be a set of signature locations on a grid in the transformed space induced by $\mT$, i.e.,
    \begin{equation}\label{eq:AxisAlignedGridAssumption}
        \changed{\postImage(\sC- \{\mNdist\} ,\mT)} = \sC^1 \times \sC^2 \times \hdots \times\sC^n
    \end{equation}
    with \changed{$\sC^j=\{c_i\}_{i=1}^{N_j}$}
    the set of signature locations in the transformed space for dimension $j$.
    Then,
    \begin{equation*}
        \wasserstein^2_2(\signature_\sC\#\Ndist(\mNdist,\vNdist),\Ndist(\mNdist,\vNdist)) =\sum_{j=1}^n \evlambda{j}\wasserstein^2_2\left(\signature_{\sC^j}\#\Ndist(0,1),\Ndist(0,1)\right).
    \end{equation*} 
\end{corollary}
\changed{
In the case of Gaussian mixture distributions, i.e., where $p=\sum_{i=1}^M \elem{\tilde{\pi}}{i} \Ndist(m_i,\Sigma_i)$,
and the component-wise signature operation defined in Eqn~\eqref{eq:component_wise_signature}, i.e., $\bm\signature_{\bm\sC}$ with $\bm\sC=\{\sC_{i=1}^N\}$, we have that
\begin{equation}\label{eq:Was4SignMixMultGaus}
    \wasserstein_2^2\left(p,\bm\signature_{\bm\sC}\#p\right)
    \leq \sum_{i=1}^M\elem{\tilde{\pi}}{i}\wasserstein_2^2\left(p_i,\signature_{\sC_i}\#p_i\right) \eqqcolon \hat\wasserstein_{2,\bm\signature_{\bm\sC}}^2(p).
\end{equation}
That is, for a mixture of Gaussian distributions, the $2$-Wasserstein distance between the mixture and its component-wise signature is bounded by the weighted sum of the 2-Wasserstein distances between each Gaussian component and its signature.\footnote{This trivial result is a special case of Lemma~\ref{lem:wasser4Mixtures} in the Appendix on the Wasserstein distance between mixtures.} Note that, under the assumption that the signature locations of each component satisfy Eqn~\eqref{eq:AxisAlignedGridAssumption}, this upper bound admits a closed form as given by Corollary~\ref{corol:Was4SignMultGaus}.}

\begin{remark}\label{remark:optimalSignature}
    For a multivariate Gaussian $p=\Ndist(\mNdist,\vNdist)$, the set $\sC$ of $N>1$ signature locations that minimize $\wasserstein_2(\signature_\sC\#p,p)$ is generally non-unique, and finding any such set is computationally intractable \citep{graf2000foundations}. However, for univariate Gaussians, an optimal signature placement strategy exists as outlined in \cite{pages2003optimal} and will be used in Subsection \ref{sec:algo:gmmApproximateSNN} to construct grid-constrained signature locations $\sC$ for multivariate Gaussians.
\end{remark}

\changed{
\subsection{Convergence Rates for Signature Approximations of Gaussian Mixtures}
In the previous section, we showed how we can obtain a closed form expression for $\wasserstein_2(p,\signature_\sC\#p)$ when $p$ is a multivariate Gaussian distribution, 
and that this expression can be used to upper bound the $2$-Wasserstein distance error from the component-wise signature $\bm\signature_{\bm\sC}\#p$ when $p$ is a mixture of Gaussian distributions.
Here, we show that this error converges uniformly to zero as the number of signature locations increases. We start with the case where $p$ is a univariate Gaussian distribution, and then extend to mixtures of Gaussian distributions. 
To our knowledge, the resulting uniform, non‑asymptotic convergence rates are novel. Indeed, existing works on quantization for Gaussian measures (e.g., \citealp{graf2000foundations,pages2003optimal}) typically provide asymptotic rates and existence results for optimal $N$-point quantizers, rather than the explicit uniform‑in‑$N$ error bounds derived here.

\begin{proposition}\label{prop:Was4SignStdGausUniformGrid}
    For $p=\Ndist(0,1)$, consider $N\in\natNum$ signature locations on a uniform grid centered at zero with spacing $(2\sqrt{\log N})/N$ between consecutive points,\footnote{We choose the spacing $(2\sqrt{\log N})/N$ as it balances the discretization error in the central interval and the Gaussian tail mass. Other spacings trade off these contributions differently: for example, a spacing of $(\log N)/N$ would reduce the remainder term $r(N)$ but increase the leading term to $(\log^2 N)/N^2$.} that is
    \begin{equation}\label{eq:uniform1DGrid}
        \sC=\left\{\frac{2\sqrt{\log N}}{N}\left(2i - N -1\right) \colon  i \in \{1,\hdots,N\}\right\}.
    \end{equation}
    Then, for all $N\in\natNum$,
    \begin{equation}\label{eq:Was4SignStdGausUniformGrid}
        \wasserstein_2^2(\signature_\sC\#\Ndist(0,1),\Ndist(0,1)) \leq 
        \frac{4\log N}{N^2} + r(N),
    \end{equation}
    where the remainder term
    \begin{equation}\label{eq:uniformGridRemainder}
        r(N) = \begin{cases}
            1, & N=1, \\
            \frac{4}{N^{2}\sqrt{2\pi}}\left(\sqrt{\log N}+\frac{1}{4\sqrt{\log N}}\right), & N\geq 2.
        \end{cases}
    \end{equation}
\end{proposition}
The proof of Proposition~\ref{prop:Was4SignStdGausUniformGrid} (given in the Appendix) follows by applying Proposition~\ref{prop:Was4SignStdGaus} to write the $\wasserstein_2$ error as a sum over the Voronoi partition induced by \(\sC\), then bound each interval's contribution by its squared radius and control the unbounded intervals using Gaussian tail bounds (Mills' ratio).

\begin{remark}\label{remark:asymptoteRemainder}
    For $N\ge 2$, the remainder term of the bound of Proposition~\ref{prop:Was4SignStdGausUniformGrid} (Eqn~\ref{eq:uniformGridRemainder}) is
    \[
        r(N)= \tfrac{4\sqrt{\log N}}{N^2\sqrt{2\pi}} + O\!\left(\tfrac{1}{N^2\sqrt{\log N}}\right).
    \]
    Hence $r(N)=O\!\left(\tfrac{\sqrt{\log N}}{N^2}\right)$ and $r(N)=o\!\left(\tfrac{\log N}{N^2}\right)$, so the bound in Eqn~\eqref{eq:Was4SignStdGausUniformGrid} is dominated by its first term $\tfrac{4\log N}{N^2}$, and consequently
    \[
        \wasserstein_2^2(\signature_\sC\#\Ndist(0,1),\Ndist(0,1)) \le \frac{4\log N}{N^2}\bigl(1+o(1)\bigr).
    \]
    That is, for a uniform grid, the squared $W_2$ error decays on the order of $(\log N)/N^2$. 
    In one dimension, Theorem~6.2 of \cite{graf2000foundations} implies that the minimal achievable squared $\wasserstein_2$ error of any $N$-location signature approximating a univariate Gaussian decays as $1/N^2$ up to constants. 
    Thus the uniform grid in Proposition~\ref{prop:Was4SignStdGausUniformGrid} attains the optimal polynomial rate $1/N^2$ up to a single logarithmic factor. The $\log N$ term arises from using a uniform construction that provides an explicit bound for all $N$, with a remainder of strictly lower order than the leading term. In practice, adaptive (non-uniform) placement (Remark~\ref{remark:optimalSignature}) can remove this logarithmic factor and improve the constants in the error bound, as illustrated in Figure~\ref{fig:wasserGMMs}.
\end{remark}

\begin{figure}[htbp]
    \centering
    \begin{subfigure}[b]{0.49\textwidth}
        \includegraphics[width=\textwidth]{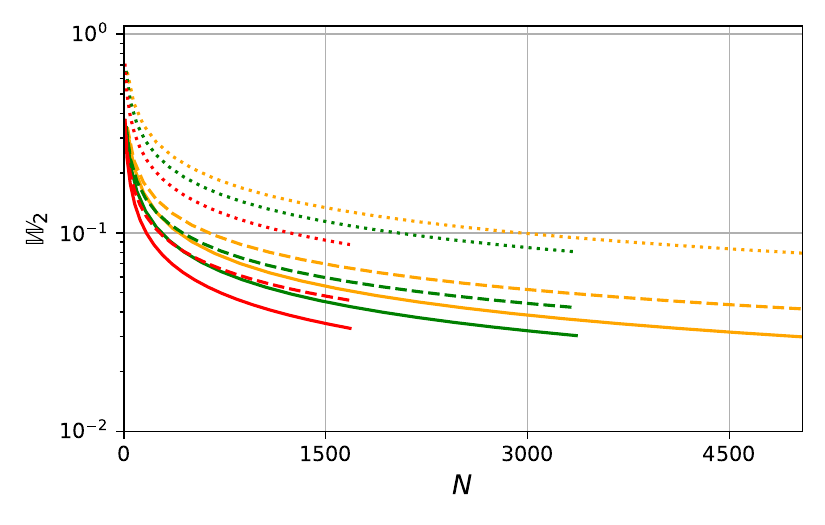}
        \caption{Formal upper bounds on $\wasserstein_2$ for uniform and adaptive grid-constrained signatures.}    
        \label{fig:formalConvergenceRate}
    \end{subfigure}
    \hfill
    \begin{subfigure}[b]{0.49\textwidth}
        \includegraphics[width=\textwidth]{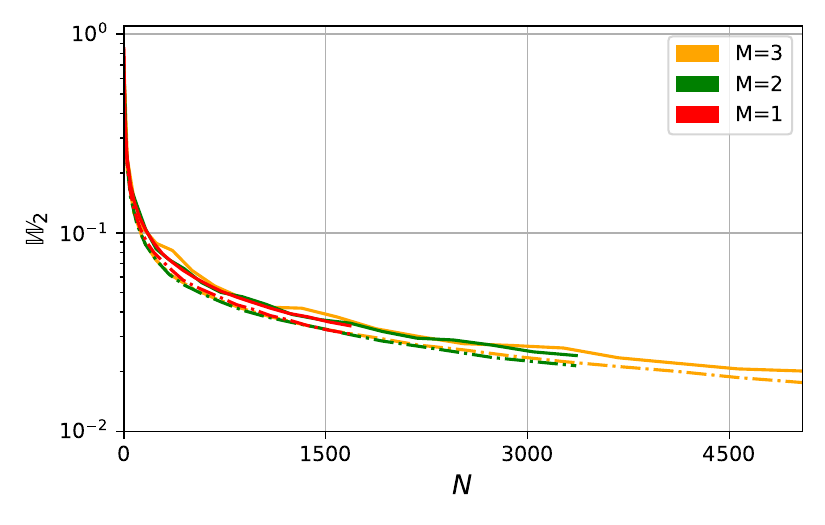}
        \caption{Empirical $\wasserstein_2$ comparisons of grid‑constrained and approximate optimal signatures.}
        \label{fig:EmpiricalConvergenceRate}
    \end{subfigure}
    \caption{\changed{
    The 2-Wasserstein distance between a 2D Gaussian mixture distribution with $M$ components and signature approximations with $N$ locations.
    Solid lines correspond to signatures with non-uniform grid‑constrained locations obtained via Algorithm~\ref{al:signaturesOfGaussianMixutres}, while dashed lines correspond to signatures with uniform grid-constrained locations as in Corollary~\ref{corol:WasConvergence4SignMixtureGaus}. 
    Dash-dotted lines correspond to signatures with locations given by the cluster centers obtained via $N$-Means on $10^5$ Monte Carlo samples from the GMM, which provide an approximation to the optimal signature locations \citep{pages2003optimal}.
    For grid‑constrained signatures, formal bounds are computed using Algorithm~\ref{al:signaturesOfGaussianMixutres}, while empirical estimates are computed from $10^4$ samples from each distribution by computing the 2‑Wasserstein distance between the resulting empirical distributions. 
    The dotted reference curves show the convergence rate from Corollary~\ref{corol:WasConvergence4SignMixtureGaus}, which corresponds to the uniform grid.
    For comparability, each mixture has an expected squared $\ell_2$ norm of one, so the reported distance equals the relative 2‑Wasserstein distance defined in Eqn~\eqref{eq:relative2wasserstein}.
    }}    
    \label{fig:wasserGMMs}
\end{figure}

\newpage
\noindent
Analogously to how Proposition~\ref{prop:Was4SignStdGaus} extends to mixtures of Gaussian distributions, Proposition~\ref{prop:Was4SignStdGausUniformGrid} extends to mixtures of multivariate Gaussian distributions, as shown in the following corollary.
\begin{corollary}[of Proposition~\ref{prop:Was4SignStdGausUniformGrid}]\label{corol:WasConvergence4SignMixtureGaus}
    Let $p=\sum_{i=1}^M \elem{\tilde{\pi}}{i} \mathcal{N}(m_i,\Sigma_i)$. For each component $i$ of $p$, define a set of signature locations $\sC_i\in\eucl^n$ as in Eqn~\eqref{eq:AxisAlignedGridAssumption}, using uniform one-dimensional grid of points $\sC_i^j$ of size $N_{i,j}$ as in Eqn~\eqref{eq:uniform1DGrid} for each dimension $j$.
    \begin{equation*}
        \wasserstein_2^2\left(p,\bm\signature_{\{\sC_i\}_{i=1}^M}\#p\right) \leq 
        \sum_{i=1}^M \elem{\tilde{\pi}}{i}
    \sum_{j=1}^n \elem{\vlambda_i}{j}\left(\frac{4\log N_{i,j}}{N_{i,j}^2} + r(N_{i,j})\right)
    \end{equation*}    
    where $\vlambda$ is the vector of eigenvalues of $\Sigma_i$. 
\end{corollary}
Corollary \ref{corol:WasConvergence4SignMixtureGaus} follows from Eqn~\eqref{eq:Was4SignMixMultGaus}, which decomposes the mixture error into a mixture-weighted sum of component errors. For each component we diagonalize $\Sigma_i$, apply Corollary~\ref{corol:Was4SignMultGaus} to separate dimensions, and then invoke Proposition~\ref{prop:Was4SignStdGausUniformGrid} on each one-dimensional uniform grid of points $\sC_i^j$, yielding the closed-form expression for the error bound after summing over $i$ and $j$. 

In Figure~\ref{fig:wasserGMMs}, we examine the convergence rate of the approximation error for 2D Gaussian mixtures decays as the number of signature locations increases, comparing three constructions: uniform Cartesian product grids (Corollary~\ref{corol:WasConvergence4SignMixtureGaus}), adaptive (non‑uniform) grids, and an empirical proxy for the optimal signature locations. Figure~\ref{fig:formalConvergenceRate} confirms that adaptive grids substantially reduce the $\wasserstein_2$ error relative to the uniform case (see Remark~\ref{remark:asymptoteRemainder}). Figure~\ref{fig:EmpiricalConvergenceRate} further shows that adaptive placement already captures most of the attainable improvement, with only modest additional gains from unconstrained (cluster-based) locations for the distinct-mode mixtures considered. Appendix~\ref{append:ExperimentsSigns4GMMs} provides further discussion and additional experiments on the conservatism of the component‑wise signature bound in Eqn.~\eqref{eq:Was4SignMixMultGaus}.

}

\section{Stochastic Neural Networks as Gaussian Mixture Models}\label{sec:approxPredPostByGMM}
In this section, we detail and formalize our approach as illustrated in Figure \ref{fig:method} to iteratively approximate a SNN with a GMM \changed{at a finite set of input points $\sX$}. 
We first consider the case where $\sX=\{\vx\}$, i.e., we approximate the distribution of a SNN over a single input point. The extension \changed{to} finite set of input points \changed{is} considered in Section \ref{sec:extendToInputSets}. 
Last, in Section \ref{sec:ConvergenceAnalysis}, we prove that $q_{nn(\sX)}$, i.e., the Gaussian mixture approximation resulting from our approach, \changed{converges to $p_{nn(\sX)}$ in Wasserstein distance}. That is, the error between $p_{nn(\sX)}$ and $q_{nn(\sX)}$ can be made arbitrarily small by increasing the number of signature \changed{locations} and GMM components.

\subsection{Gaussian Mixture Model Approximation of a Neural Network}\label{subsec:SNN2GMM}
As shown in Eqn~\eqref{eq:p_nn(x),k}, $p_{nn(\vx)}$, the output distribution of a SNN at input point $\vx$, can be represented as a composition of $K$ stochastic operations, one for each layer. 
To find a Gaussian mixture distribution $q_{nn(\vx)}$ that approximates $p_{nn(\vx)}$, our approach is based on iteratively approximating each of these operations with a GMM, as illustrated in Figure \ref{fig:method}.
Our approach can then be formalized as in Eqn \eqref{eq:q_nn(x),k} below for $ k\in\{1,\hdots,K\}$:
\begin{subequations}\label{eq:q_nn(x),k}
    \begin{align}
        q_{nn(x),1} &= \expect_{\state_0\sim \delta_\vx}[\changed{L^{\vw_0}(\state_0)\#p_{\vw_0}}] & \text{(Initialization and \changed{Affine} operation)}  \\
        d_{nn(x),k} &= \changed{\bm\signature_{\bm\sC_k}}\#\compress_{\changed{M_k}}(q_{nn(\vx),k})  & \text{(Compression and Signature operations)} \label{eq:q_nn(x),k:Compress&Signature} \\
        q_{nn(x),k+1} &= \expect_{\state_{k}\sim d_{nn(\vx),k}}[\changed{L^{\vw_{k}}(\act(\state_{k}))\#p_{\vw_k}}] & \text{(Activation and \changed{Affine} operations)} \label{eq:q_nn(x),k:actiLinOperation} \\ 
        q_{nn(\vx)} &= q_{nn(\vx),K+1}& \text{(Output)}\label{eq:q_nn(x),output}
    \end{align}
\end{subequations}
where \changed{$\bm\sC_k=\{\sC_{k,i} \subset \eucl^{n_k}\}_{i=1}^{M_k}$ denotes} the signature locations for layer $k$. $\compress_{M_k}:\GMM_{\infty}(\eucl^{n_{k}})\rightarrow\GMM_{\changed{M_k}}(\eucl^{n_k})$ is \changed{a} compression operation that compresses the Gaussian mixture distribution $q_{nn(\vx),k}$ into a Gaussian Mixture distribution with at most \changed{$M_k$} components, thus limiting the approximation's complexity. We show how operation $\compress_{\changed{M_k}}$ is performed using moment matching in Section \ref{sec:algo:gmmApproximateSNN}. 

Eqn~\eqref{eq:q_nn(x),k} consists of the following steps: $q_{nn(x),1}$ is the distribution resulting from an \changed{affine} combination of the input $x$ with the \changed{parameters} at the first layer, $d_{nn(x),k}$ is the result of a compression and signature operation of $q_{nn(\vx),k}$, the output of the previous layer. The approximate output distribution $q_{nn(x),k+1}$ at each layer $k$ is obtained by marginalizing 
\changed{$L^{\vw_k}(\act(\state_k))\#p_{\vw_k}$} w.r.t.\ $d_{nn(x),k}$. We recall that for any fixed $\state_k$, 
\changed{and Gaussian distribution $p_{\vw_k}$, $L^{\vw_k}(\act(\state_k))\#p_{\vw_k}$ is Gaussian, as Gaussian distributions are closed w.r.t. affine combination.}
Consequently, as for all $k$, $d_{nn(x),k}$ is a discrete distribution, $q_{nn(x),k+1}$ is a Gaussian mixture distribution with a size equal to the support of $d_{nn(x),k}$.

\changed{
\begin{example}\label{example:closedFormMixtureFeedforward}
    Consider the case where $L^{\vw_k}$ is a fully-connected layer with weight matrix \(\mW \in \realNum^{n_{k+1} \times n_k}\) and bias vector \(\vb \in \realNum^{n_{k+1}}\), i.e., \(L^{\vw_k}(\vz) = \mW \vz + \vb\). Assume \(\vectorize(\mW)\) and \(\vb\) are distributed according to \(\Ndist(\vectorize(\mM), \Sigma)\) and \(\Ndist(\vm_{\vb}, \Sigma_{\vb})\), respectively, where \(\vectorize(\cdot)\) stacks the columns of a matrix into a single vector. 
    Then, the approximate output distribution at the \((k+1)\)-th layer, \(q_{nn(x),k+1}\), as defined in Eqn~\eqref{eq:q_nn(x),k:actiLinOperation} for $M_k=1$, signature locations \(\sC_{k,1} = \{\vc_i\}_{i=1}^{N_k}\) and partition \(\{\sR_{i}\}_{i=1}^{N_k}\) from Eqn~\eqref{eq:VoronoiPartitionOfSignature}, can be written as
    \begin{equation*}
        q_{nn(x),k+1} = 
        \sum_{i=1}^{N_k}
        \prob_{\vz \sim \compress_{1}(q_{nn(x),k})}[\vz \in \sR_{i}]\cdot
        \Ndist\Bigg(
            \vm_{\vb} + \mM \act(\vc_i),
            \Sigma_{\vb} + \sum_{j,l = 1}^{n_{k+1}} \elem{\act(\vc_i)}{j}\elem{\act(\vc_i)}{l}\Sigma_{jl}
        \Bigg)
    \end{equation*}
    where \(\Sigma_{jl}\) denotes the \((j,l)\)-th block of the covariance matrix \(\Sigma\), each block of size \(n_k \times n_k\), i.e., the covariance between the \(j\)-th and \(l\)-th columns of \(\mW\).
\end{example}
}

\changed{
Note that the approximation scheme in Eqn~\eqref{eq:q_nn(x),k} allows for an arbitrary choice of signature locations $\sC_{k,i}$ for the $i$-th component at each layer $k$. To obtain a closed-form expression for \(q_{nn(\vx)}\), the locations $\bm\sC_k$ should be selected as discussed in Remark~\ref{remark:tractable_signature}; that is, so that 
\(\prob_{\vz \sim \compress_{1}(q_{nn(x),k})}[\vz \in \sR]\) in Example~\ref{example:closedFormMixtureFeedforward} is analytically tractable for all regions $\sR$.
}

\begin{remark}\label{remark:GenerlizationAlgoNonGaus}
    The approach described in Eqn \eqref{eq:q_nn(x),k} leads to a Gaussian mixture distribution $q_{nn(\vx)}$ under the assumption that the \changed{parameter distributions $p_{\vw_k}$ are} Gaussian. 
    In the more general case, where $p_{\vw_k}$ is non-Gaussian, one can always recursively approximate \changed{$L^{\vw_k}(\vz)\#p_{\vw_k}$} for each $k$ by a discrete distribution using the signature operation. That is, \changed{$L^{\vw_k}(\vz)\#p_{\vw_k}$} in Eqn~\eqref{eq:q_nn(x),k:actiLinOperation} is replaced with \changed{$L^{\vw_k}(\vz)\#\signature_{\sC_{\vw_k}}\#p_{\vw_k}$}, where \changed{$\sC_{\vw_k}$} is the set of signature locations.
    Applying this additional operation, Eqn \eqref{eq:q_nn(x),k} will lead to a discrete distribution for $q_{nn(\vx)}$. 
\end{remark}

\subsection{Wasserstein Distance Guarantees}\label{subsec:Wasser4SNN2GMM}
Since $q_{nn(x)}$ is built as a composition of $K$ iterations of the stochastic operations in Eqn \eqref{eq:q_nn(x),k}, to bound $\wasserstein_2\left(p_{nn(x)}, q_{nn(x)}\right)$ we need to \changed{formally} quantify the error introduced by each of the steps in Eqn \eqref{eq:q_nn(x),k}. A summary of how each of the operations occurring in Eqn \eqref{eq:q_nn(x),k} modifies the Wasserstein distance between two distributions is provided in Table \ref{tab:wasser4layers} (details are given in Appendix \ref{append:wasser4SNNs}). These bounds are then composed using interval arithmetic to bound $\wasserstein_2\left(p_{nn(x)}, q_{nn(x)}\right)$ in the following theorem.

\begin{theorem}\label{thm:WasserNetwork}
    Let $p_{nn(\vx)}$ be the output distribution of a SNN with $K$ hidden layers for input $\vx\in\eucl^{n_0}$ and $q_{nn(\vx)}$ be a Gaussian mixture distribution built according to Eqn~\eqref{eq:q_nn(x),k}. Iteratively define $\hat{\wasserstein}_{2}$ as
    \changed{
    \begin{subequations}\label{eq:hat{wasserstein}_k}
        \begin{align}
            &\hat{\wasserstein}_{2,1}=0, \label{eq:hat{wasserstein}_k,init} \\
            &\hat{\wasserstein}_{2,k+1} = 
            \lipschitz_k\left[\hat{\wasserstein}_{2,k} + \mw_2\left(q_{nn(\vx),k}, \compress_{\changed{M_k}}(q_{nn(\vx),k})\right) 
            \right.\nonumber \\ &\hspace{5cm} \left.
            +\hat\wasserstein_{2,\bm\signature_{\bm\sC_k}}\left(\compress_{\changed{M_k}}(q_{nn(\vx),k})\right) \right],\qquad \forall k\in\{1,\hdots,K\},  \label{eq:hat{wasserstein}_k,update} \\
            &\hat{\wasserstein}_{2}=\hat{\wasserstein}_{2,K+1},   \label{eq:hat{wasserstein}_k,out}
        \end{align}
    \end{subequations}
    where $\hat\wasserstein_{2,\bm\signature_{\bm\sC}}$ is defined in Eqn~\eqref{eq:Was4SignMixMultGaus}, $\mw_2$ as defined in Eqn~\eqref{eq:definitionMW2}, and 
    \begin{equation}\label{eq:lipschitz_k}
        \lipschitz_{k} \coloneqq \lipschitz_\act\expect_{\vw_k\sim p_{\vw_k}}[\lipschitz_{L^{\vw_k}}^2]^{\frac{1}{2}},
    \end{equation}
    with \(\lipschitz_\sigma\) and \(\lipschitz_{L^{\vw_k}}\) the Lipschitz constants of the activation function \(\act\) and affine operation \(L^{\vw_k}\), respectively.} 
    Then, it holds that 
    \begin{equation*}
        \wasserstein_2\left(p_{nn(x)}, q_{nn(x)}\right) \leq \hat{\wasserstein}_2(p_{nn(x)}, q_{nn(x)}) \coloneqq \hat{\wasserstein}_{2}.
    \end{equation*}
\end{theorem}
The proof of Theorem~\ref{thm:WasserNetwork} is reported in Appendix~\ref{append:wasser4SNNs} and is based on the \changed{triangle} inequality property of the 2-Wasserstein distance, which allows us to bound $\wasserstein_2\left(p_{nn(\vx)}, q_{nn(\vx)}\right)$ iteratively over the hidden layers. 
\changed{Note that Theorem~\ref{thm:WasserNetwork} depends on the quantities $\expect_{\vw_k\sim p_{\vw_k}}[\lipschitz_{L^{\vw_k}}^2]^{1/2}$, $\hat\wasserstein_{2,\bm\signature_{\bm\sC_k}}\left(\compress_{M_k}(q_{nn(\vx),k})\right)$, and
$\mw_2\left(q_{nn(\vx),k}, \compress_{M_k}(q_{nn(\vx),k})\right)$, 
which can be computed as follows.
First, $\changed{\expect_{\vw_k\sim p_{\vw_k}}[\lipschitz_{L^{\vw_k}}^2]^{1/2}}$, is the expectation of any Lipschitz constant $\lipschitz_{L^\vw}$ (any constant satisfying the Lipschitz inequality, not necessarily the minimal one) of the affine operation $L^{\vw_k}$, for which closed-form expressions for both feedforward and convolutional layers are given in Appendix~\ref{append:expected_lipscthiz}.
Second, $\hat\wasserstein_{2,\bm\signature_{\bm\sC_k}}\left(\compress_{M_k}(q_{nn(\vx),k})\right)$ can be computed using Corollary~\ref{corol:Was4SignMultGaus}, when the sets of signature locations satisfy the grid-regularity constraint in Eqn~\eqref{eq:AxisAlignedGridAssumption}.
Finally, $\mw_2\left(q_{nn(\vx),k}, \compress_{\changed{M_k}}(q_{nn(\vx),k})\right)$ is obtained by computing the $\mw_2$-distance between Gaussian mixtures distributions (see Definition~\ref{def:mw2}), which requires solving a finite discrete linear program with $M_k$ times the number of components of $q_{nn(\sX),k}$ optimization variables and coefficient given by the pairwise Wasserstein distances between the mixture’s Gaussian components.
}

\changed{
\begin{remark}
    If \(p_{\vw_k}\) is non-Gaussian, the additional approximation step introduced in Remark~\ref{remark:GenerlizationAlgoNonGaus} is accounted for by an extra application of the triangle inequality, which introduces an additional error term \(\wasserstein_2(p_{\vw_k}, \signature_{\sC_{\vw_k}}\#p_{\vw_k})\) in Eqn~\eqref{eq:hat{wasserstein}_k,update}. While the conditional expectations in Proposition~\ref{prop:OTSignature} may no longer admit closed-form expressions in this case, they can always be conservatively estimated using approximate numerical integration.
\end{remark}}

\changed{
\begin{remark}
    In Theorem~\ref{thm:WasserNetwork}, instead of bounding the error from the signature approximation of $\compress_{\changed{M_k}}(q_{nn(\vx),k})$ in Eqn~\eqref{eq:hat{wasserstein}_k,update}, propagated through the activation function $\act$, using the global Lipschitz constant $\lipschitz_\act$ of $\act$, one can alternatively use a bound based on the local Lipschitz constant \(\lipschitz_{\act\mid\sR_i}\) w.r.t. region $\sR_i$, as reported in Table~\ref{tab:wasser4layers}. Specifically, one can replace $\lipschitz_\act\hat\wasserstein_{2,\bm\signature_{\bm\sC_k}}\left(\compress_{\changed{M_k}}(q_{nn(\vx),k})\right)$ with 
    \[
        \left(\sum_{i=1}^N\lipschitz_{\act\mid\sR_i}^2\evpi{i}\expect_{\vz\sim \compress_{\changed{M_k}}(q_{nn(\vx),k})}[\|\vz-\vc_i\|^\changed{\rho}\mid \vz \in \sR_i]\right)^{1/2}
    \]
    in Eqn~\eqref{eq:hat{wasserstein}_k,update}. 
    While generally tighter, this bound requires explicit computation of the local Lipschitz constant and the expectation w.r.t each region $\sR_i$, which limits the efficiency of the grid structure \(\sC_i\) used in Corollary~\ref{corol:Was4SignMultGaus}. 
\end{remark}}

\begin{table}[H]
    \centering
    \begin{tabular}{ll}\toprule
        Operation & Wasserstein Bounds \\ \midrule
        \changed{Affine} &
        \changed{$\wasserstein_2^2\left(\expect_{\Tilde\vz\sim p}\left[L^{\vw}(\Tilde\vz)\#p_{\vw_k}\right],\expect_{\vz\sim q}\left[L^{\vw}(\vz)\#p_{\vw}\right]\right)\leq \expect_{\vw\sim p_{\vw}}\left[\lipschitz_{L^{\vw}}^2\right]\wasserstein_2^2(p,q)$}\\
        \midrule
        Activation&$\wasserstein_2(\act\#p,\act\#q)\leq \lipschitz_{\act}\wasserstein_2(p,q)$ 
        \\ 
        &
        $\wasserstein^2_2(\act\#p,\act\#\signature_\sC\#p)\leq 
        \changed{\sum_{i=1}^N \lipschitz_{\act\mid\sR_i}^2\evpi{i}\expect_{\vz\sim p}[\|\vz-\vc_i\|^2\mid \vz \in \sR_i]}$ \\ 
        \midrule
        \makecell[l]{Signature/\\ Compression} & 
        $\wasserstein_2(p,h\#q)\leq \wasserstein_2(p,q) + \wasserstein_2(q,h\# q)$, where $h\in\{\signature_\sC, \compress_M\}$ 
        \\  \bottomrule
    \end{tabular}
    \caption{
    Summary of the bounds on the $2$-Wasserstein distance between two distributions $p,q\in\probMeas_2(\eucl^{n})$, obtained by pushing forward $p$ \changed{and $q$} through common SNN operation types. 
    Here, \changed{$\lipschitz_{L^\vw}$ denotes the global Lipschitz constant of a Lipschitz function $L^\vw$, parametrized by \(\vw\in\realNum^d\) and distributed according to \(p_\vw\in\probMeas(\realNum^d)\)}, and $\lipschitz_\sigma$ and $\lipschitz_{\sigma\mid\sR_i}$ denote, respectively, 
    the global Lipschitz constant and the Lipschitz constant over region $\sR_i$ of \changed{a Lipschitz function $\sigma$}.
    Proofs of these results are provided in Appendix~\ref{append:wasser4SNNs}.
    }
    \label{tab:wasser4layers}
\end{table}

\subsection{Approximation for Sets of Input Points}\label{sec:extendToInputSets}
We now consider the case where $\sX=\{\vx_1,...,\vx_D\}$, that is, a finite set of input points, and extend our approach to  a Gaussian mixture approximation of $p_{nn(\sX)}$. 
We first note that $p_{nn(\sX)}$ can be equivalently represented by extending Eqn\ \eqref{eq:p_nn(x),k} as follows, where in the equation below $\vectorize(\sX)=(\vx_1^T,\hdots,\vx_D^T)^T$ is the vectorization of $\sX$,
\changed{
\begin{subequations}\label{eq:p_nn(sX),k}
\begin{align}
    &p_{nn(\sX),1}=\expect_{\vz_0\sim\delta_{\vectorize(\sX)}}\left[\bm{L}^{\vw_0}(\vz_0)\#p_{\vw_0}\right], \\ 
    &p_{nn(\sX),k+1}=\expect_{\vz_{k}\sim p_{nn(\sX),k}}\left[\bm{L}^{\vw_k}(\act(\vz_{k}))\#p_{\vw_k}\right], \qquad k \in \{1,\hdots,K\},   \\
    & p_{nn(\sX)} = p_{nn(\sX),K+1},
\end{align}
\end{subequations}}
where $\bm{L}^{\vw_k}: \eucl^{D \cdot n_k}\rightarrow \eucl^{D\cdot n_k}$ is the stacking of $D$ times $L^k$, i.e., 
\[
    \bm{L}^{\vw_k}\left((\vz_1^T,\hdots,\vz_D^T)^T)= (L^{\vw_k}(\vz_1)^T,\allowbreak \hdots, L^{\vw_k}(\vz_D)^T\right)^T.
\]
That is, $p_{nn(\sX)}$ is computed by stacking $D$ times the neural network for the inputs in $\sX$.
Following the same steps as in the previous section, $p_{nn(\sX)}$ can then be approximated by the Gaussian Mixture distribution $q_{nn(\sX)}$, for $k\in\{1,\hdots,K\}$ defined as
\changed{
\begin{subequations}\label{eq:q_nn(vec(X)),k}
\begin{align}
    &q_{nn(\sX),1} = \expect_{\state_0\sim \delta_{\vectorize(\sX)}}\left[\bm{L}^{\vw_0}(\state_0)\#p_{\vw_0}\right],  & \quad \text{(Initialization and Affine operation)} \label{eq:q_nn(vec(X)),k,init} \\ 
    &d_{nn(\sX),k} = \bm\signature_{\bm\sC_k}\#\compress_{\changed{M}}(q_{nn(\sX),k}),  & \quad \text{(Compression and Signature operations)} \label{eq:q_nn(vec(X)),k,compr+sign}  \\
    &q_{nn(\sX),k+1}= \expect_{\state_k\sim d_{nn(\sX),k}}\left[\bm{L}^{\vw_k}(\act(\state_k))\#p_{\vw_k}\right], &  \text{(Activation and Affine operations)} \label{eq:q_nn(vec(X)),k,act+lin} \\
    &q_{nn(\sX)} = q_{nn(\sX),K+1}, &\quad \text{(Output)} \label{eq:q_nn(vec(X)),k,out} 
\end{align}
\end{subequations}}
where \changed{$\bm\sC_k=\{\sC_{k,i}\subset\eucl^{D\cdot n_k}\}_{i=1}^{M_k}$ are the sets of signature locations} at layer $k$, and $\changed{M_k}\in\natNum$ is the compression size. Theorem~\ref{thm:WasserNetwork} implies the following \changed{corollary}, which bounds the approximation error of $q_{nn(\sX)}$.
\newpage
\begin{corollary}[of Theorem~\ref{thm:WasserNetwork}]\label{corol:WasserNetworkSet}
    Let $p_{nn(\sX)}$ be the joint distribution of a SNN with $K$ hidden layers over a finite set of input points $\sX=\{\vx_i\}_{i=1}^D\subset\eucl^{n_0}$ and 
    $q_{nn(\sX)}$ be a Gaussian mixture distribution built according to Eqn~\eqref{eq:q_nn(vec(X)),k}. Iteratively define $\hat{\wasserstein}_2$ as
    \changed{
    \begin{subequations}\label{eq:hat{wasserstein}_k_set_of_points}
        \begin{align}
            &\hat{\wasserstein}_{2,1}=0, \\
            &\hat{\wasserstein}_{2,k+1} = 
            \lipschitz_k\left[\hat{\wasserstein}_{2,k} + \mw_2\left(q_{nn(\sX),k}, \compress_{\changed{M_k}}(q_{nn(\sX),k})\right) \right. \nonumber \\
            &\hspace{5cm}\left. +\hat\wasserstein_{2,\bm\signature_{\bm\sC_k}}\left(\compress_{\changed{M_k}}(q_{nn(\sX),k})\right) \right], \qquad \forall k\in\{1,\hdots,K\}, \\
            &\hat{\wasserstein}_{2}=\hat{\wasserstein}_{2,K+1},
        \end{align}
    \end{subequations}
    where $\hat\wasserstein_{2,\bm\signature_{\bm\sC}}$ is defined in Eqn~\eqref{eq:Was4SignMixMultGaus}, $\mw_2$ as defined in Eqn~\eqref{eq:definitionMW2}, and $\lipschitz_k$ is as defined Eqn~\eqref{eq:lipschitz_k}.}
    Then, it holds that 
    \begin{equation*}\label{eq:WasserNetwork}
        \wasserstein_2(p_{nn(\sX)}, q_{nn(\sX)}) \leq \hat{\wasserstein}_2(p_{nn(\sX)}, q_{nn(\sX)}) \coloneqq \hat{\wasserstein}_{2}.
    \end{equation*}
\end{corollary}
Note that the effect of multiple input points on the error introduced by \changed{$\hat\wasserstein_{2,\bm\signature_{\bm\sC_k}}\left(\compress_{\changed{M_k}}(q_{nn(\sX),k})\right)$ and $\mw_2\left(q_{nn(\sX),k}, \compress_{\changed{M_k}}(q_{nn(\sX),k})\right)$} depends on the specific choice of points in $\sX$. For points in $\sX$ where $q_{nn(\sX),k}$ is highly correlated, the error will be similar to that of a single-input case. However, if there is little correlation, the error bound will increase linearly with the number of points.

\subsection{Convergence Analysis}\label{sec:ConvergenceAnalysis}
In Theorem~\ref{thm:WasserNetwork} and \changed{Corollary}~\ref{corol:WasserNetworkSet}, we derived error bounds on the distance between $q_{nn(\sX)}$ and $p_{nn(\sX)}$, \changed{which depend on the signature locations and on the size of the compressed mixtures.
The following theorem shows that by appropriately selecting these quantities, the error bound can be made arbitrarily small.}
\changed{
\begin{theorem}\label{theorem:ConvergenceWasserBound}
    Let $p_{nn(\sX)}$ be the output distribution of a SNN with $K$ hidden layers for a finite set of inputs $\sX\subset\eucl^{n_0}$. 
    For any $\epsilon>0$, for layer $k\in\{1,\hdots,K\}$, choose a compression size $M_k\in\natNum$ such that
    \begin{equation}\label{eq:condition_M_k}
        \epsilon_k\coloneqq\frac{\epsilon}{K \prod_{i=k}^{K}\lipschitz_i}  - \mw_2\left(q_{nn(\sX),k}, \compress_{M_k}(q_{nn(\sX),k})\right) > 0,
    \end{equation}
    where $\lipschitz_i$ is defined in Eqn~\eqref{eq:lipschitz_k}. Choose per-component, per-dimension signature grid sizes $N_{k,i,j}\in\natNum$ satisfying
    \begin{equation}\label{eq:condition_N_k_i_j}
        \sum_{i=1}^{M_k}\elem{\tilde{\pi_k}}{i}\sum_{j=1}^{n_k}\elem{\vlambda_{k,i}}{j}\left(\frac{4\log N_{k,i,j}}{N_{k,i,j}^2} + r(N_{k,i,j})\right) \leq \epsilon_k^2,
    \end{equation}
    where $\tilde\pi_k$ are the mixture weights, $\vlambda_{k,i}$ is the vector of eigenvalues of the covariance matrix of the $i$-th component of $q_{nn(\sX),k}$, and $r(\cdot)$ is given in Eqn~\eqref{eq:uniformGridRemainder}.
    Construct for each component $i$ of $q_{nn(\sX),k}$ a set of signature locations $\sC_{k,i}\subset\eucl^{n_k}$ as in Eqn~\eqref{eq:AxisAlignedGridAssumption}, define
    $
        \bm\sC_k = \{\sC_{k,i}\}_{i=1}^{M_k}, 
    $
    and let $q_{nn(\sX),k}$ be as in Eqn~\eqref{eq:q_nn(vec(X)),k}. Then, for $\hat{\wasserstein}_2$ defined in Eqn~\eqref{eq:hat{wasserstein}_k_set_of_points}, it holds that
    \begin{equation}
        \wasserstein_2\left(p_{nn(\sX)},q_{nn(\sX),K+1}\right)\leq\hat\wasserstein_2\leq\epsilon.
    \end{equation}
\end{theorem}
To prove Theorem~\ref{theorem:ConvergenceWasserBound}, we use that Corollary~\ref{corol:WasserNetworkSet} implies the layer-wise decomposition
\begin{equation*}
    \hat\wasserstein_2 = \sum_{k=1}^K \Bigg(\prod_{i=k}^K \lipschitz_i\Bigg) \Big[\mw_2\big(q_{nn(\sX),k},\compress_{M_k}(q_{nn(\sX),k})\big)+\hat\wasserstein_{2,\bm\signature_{\bm\sC_k}}\big(\compress_{M_k}(q_{nn(\sX),k})\big)\Big].
\end{equation*}
That is, $\hat\wasserstein_2$ equals the sum of compression and signature errors at layer $k$, scaled by the product of Lipschitz constants of the remaining layers $\prod_{i=k}^K \lipschitz_i$. If at each layer $k$ the compression size $M_k$ and grid sizes $N_{k,i,j}$ are such that
\[
\mw_2\big(q_{nn(\sX),k},\compress_{M_k}(q_{nn(\sX),k})\big)+\hat\wasserstein_{2,\bm\signature_{\bm\sC_k}}\big(\compress_{M_k}(q_{nn(\sX),k})\big)\le \frac{\epsilon}{K\prod_{i=k}^K \lipschitz_i},
\]
it follows that $\hat\wasserstein_2\le \epsilon$.

For any choice of $M_k$ such that $\epsilon_k>0$ (Eqn~\ref{eq:condition_M_k}), Corollary~\ref{corol:WasConvergence4SignMixtureGaus} allows us to select grid sizes $N_{k,i,j}$ so that $\hat\wasserstein_{2,\bm\signature_{\bm\sC_k}}\big(\compress_{M_k}(q_{nn(\sX),k})\big)\leq \epsilon_k$, thereby satisfying the per-layer requirement and yielding $\hat\wasserstein_2\le \epsilon$. 
Applying no compression (i.e., setting $M_k$ equal to the original number of mixture components in $q_{nn(\sX),k}$) ensures $\epsilon_k>0$ but can cause exponential growth in the total number of signature locations across depth, since signatures are taken component-wise. In practice, we therefore compress only redundant or low-weight components, i.e., retaining distinct modes, which preserves most of the signature budget while substantially reducing the number of grids (see Section~\ref{sec:expSNN2MGP}).

\begin{remark}\label{remark:GridSizeVsConvergenceRate}
    The number of signature locations required at layer $k$ in Theorem~\ref{theorem:ConvergenceWasserBound} is essentially inversely proportional to the layer accuracy parameter $\epsilon_k$, and hence to the target accuracy $\epsilon$ (up to the distribution of the budget across layers).\footnote{At layer $k$, the signature contribution behaves like $O\left(\tfrac{\log N_{k,i,j}}{N_{k,i,j}^2}\right)$ for large $N_{k,i,j}$. Solving $\log N_{k,i,j}/N_{k,i,j}^2 \approx \epsilon_k^2$ gives $N_{k,i,j}=O\big(\epsilon_k^{-1}\sqrt{\log(1/\epsilon_k)}\big)$: a $1/\epsilon_k$ rate with only a mild $\sqrt{\log}$ factor.} 
    In the worst-case scenario where each of the $n_k$ dimensions of a component requires the same grid size, the total number of signature locations for that component scales as 
    $$\prod_{j=1}^{n_k} N_{k,i,j}=O\big(\epsilon_k^{-n_k}(\log(1/\epsilon_k))^{n_k/2}\big).$$ 
    In practice, this curse of dimensionality is strongly mitigated because Condition~\eqref{eq:condition_N_k_i_j} weights each dimension by its eigenvalue. Empirically, most variance concentrates in a few neurons per layer, so the effective dimensionality is lower and many low-variance dimensions require only a single point ($N_{k,i,j}=1$). Additionally, non-uniform and adaptive (e.g., variance-proportional) grids further reduce required sizes while preserving $\wasserstein_2$ accuracy (see Remark~\ref{remark:optimalSignature}). In Section~\ref{sec:expSNN2MGP}, we empirically analyze how this nominal exponential dependence is mitigated in practice. 
\end{remark}
}

\changedNEW{
\begin{example}
    Consider the two-hidden layer ($K=2$) SNN in Figure~\ref{fig:method}, where the Lipschitz constants (as defined in Eqn~\ref{eq:lipschitz_k}) of the last two layers are $\lipschitz_1=0.5$ and $\lipschitz_2=0.41$.
    Using Theorem~\ref{theorem:ConvergenceWasserBound}, we derive the compression and grid sizes required to ensure that the approximation $q_{nn(\vx)}$ defined in Eqn~\eqref{eq:q_nn(x),k} is within $0.2$ in 2-Wasserstein distance ($\epsilon = 0.2$) of the true SNN output distribution $p_{nn(\vx)}$ at the input point considered in the figure.
    
    The distribution after the first linear layer, $q_{nn(\vx),1}$, is a univariate Gaussian with variance $\lambda_1 = 0.8$ (step a in the figure), so no compression is required and we set $M_1 = 1$.  
    The maximum allowable signature approximation error at this layer is
    $$
        \epsilon_1=\tfrac{\epsilon}{K\prod_{i=1}^2\lipschitz_i} = \tfrac{0.2}{2\cdot0.5\cdot0.41}=0.49.
    $$
    Constructing the signature locations $\sC_1$ uniformly according to Eqn~\eqref{eq:uniform1DGrid} yields
    $$
        \wasserstein(\signature_{\sC}\#q_{nn(\vx),1},q_{nn(\vx),1})\leq 0.49,
    $$
    that is, Eqn~\eqref{eq:condition_N_k_i_j} holds for $N_{1}=5$ and $\epsilon_1=0.49$.
    Using instead the optimal grid computed via Algorithm~\ref{al:signaturesOfGaussianMixutres}, we obtain an error of $0.23$ with the same number of locations.
    Using these optimal locations, we propagate the signature of $q_{nn(\vx),1}$ through the activation and the second linear layer and obtain $q_{nn(\vx),2}$ as in Eqn~\eqref{eq:q_nn(x),k:actiLinOperation} (steps b-d).  
    Next, we compress the resulting 5-component mixture $q_{nn(\vx),2}$ to a mixture with $M_2 = 3$ components, which yields a compression error
    $$
        \mw_2\left(q_{nn(\vx),2}, \compress_{M_2}(q_{nn(\vx),2})\right)=0.02.
    $$
    This leaves a remaining signature approximation budget of 
    $$
        \epsilon_2 = \tfrac{0.2}{2\cdot0.41}-0.02=0.22.
    $$
    The compressed mixture $\compress_{M_2}(q_{nn(\vx),2})$ has three components with weights $\vpi = (0.2, 0.6, 0.2)$ and variances $(0.5, 0.2, 0.5)$. Eqn~\eqref{eq:condition_N_k_i_j} is therefore satisfied with grid sizes $N_{2,1}=3$, $N_{2,2}=4$, and $N_{2,3}=3$.  
    Taking the signature of $\compress_{M_2}(q_{nn(\vx),2})$ and passing it through the activation and final linear layer, we obtain an approximation $q_{nn(\vx)}$ with $M = 10$ components achieving $\epsilon = 0.2$.  
    Using Algorithm~\ref{al:signaturesOfGaussianMixutres} again to construct the signature locations, we instead guarantee $\epsilon_2\leq 0.22$ with only $M = 5$ and obtain a final approximation of $5$ components (steps f-h), achieving $\epsilon=0.12$.
\end{example}
}

\begin{algorithm}[htbp]
\DontPrintSemicolon
\SetKwInOut{Input}{input}\SetKwInOut{Output}{output}
\Input{$p_{nn(\sX)}$}
\Output{$q_{nn(\sX)}$ and $\hat{\wasserstein}_{2}\left(p_{nn(\sX)}, q_{nn(\sX)}\right)$}
\Begin{
    \nl
    Initialize $q_{nn(\sX),1}$ as in Eqn \eqref{eq:q_nn(vec(X)),k,init} and $\hat{\wasserstein}_{2,1}$ as in Eqn \eqref{eq:hat{wasserstein}_k,init}\;
    \changed{
    \For{$k\in \{1,\hdots,K\}$}{
        \nl \changed{Construct $\compress_{M_k}(q_{nn(\sX),k})$ using compression Algorithm \ref{al:compressGMMs}}\;
        \nl \changed{Compute $\mw_2\left(q_{nn(\sX),k}, \compress_{M_k}(q_{nn(\sX),k})\right)$ as in Eqn~\eqref{eq:definitionMW2}} \;
        \nl Construct signature locations $\bm\sC_k$ and the signature $d_{nn(\sX),k}$ of $\compress_{M_k}(q_{nn(\sX),k})$, and compute the signature error $\hat{\wasserstein}_{2,\signature_{\bm\sC_k}}(\compress_{M_k}(q_{nn(\sX),k}))$ all via Algorithm~\ref{al:signaturesOfGaussianMixutres}\;
        \nl
        Update $q_{nn(\sX),k+1}$ as in Eqn~\eqref{eq:q_nn(vec(X)),k,act+lin} \; 
        \nl
        \changed{Compute \(\expect_{\vw_k\sim p_{\vw_k}}[\lipschitz_{L^{\vw_k}}^2]^{\frac{1}{2}}\) as described in Appendix~\ref{append:expected_lipscthiz}}\;
        \nl
        Update $\hat{\wasserstein}_{2,k+1}$ as in Eqn~\eqref{eq:hat{wasserstein}_k,update}
    }
    \nl
    Return $q_{nn(\sX)}=q_{nn(\sX),K+1}$ and $\hat{\wasserstein}_{2}\left(p_{nn(\sX)}, q_{nn(\sX)}\right)=\hat{\wasserstein}_{2,K+1}$
    }
    }
\caption{Gaussian Mixture Approximation of a SNN\label{al:SNN2GMM}}
\end{algorithm}

\section{Algorithmic Framework}\label{sec:AlgorithmicSection}
In this section, the theoretical methodology to solve Problem \ref{Prob:mainProb} developed in the previous sections is translated into an algorithmic framework. First, following Eqn~\eqref{eq:q_nn(vec(X)),k}, we present an algorithm to construct the Gaussian mixture approximation of a SNN with error bounds in the 2-Wasserstein distance, including details on the compression step. 
Then, we rely on the fact that the error bound resulting from our approach is piecewise differentiable with respect to the parameters of the SNN to address the complementary problem highlighted in Remark \ref{remark:InverseProb}. This problem involves deriving an algorithm to optimize the parameters of a SNN such that the SNN approximates a 
 given GMM.

\begin{algorithm}[ht]
\DontPrintSemicolon
\SetKwInOut{Input}{input}\SetKwInOut{Output}{output}
\Input{$\changed{q=}\sum_{i=1}^{M}\elem{\Tilde{\vpi}}{i}\Ndist(\mNdist_{i},\vNdist_{i})$}
\Output{$d=\sum_{j=1}^N\evpi{j}\delta_{\vc_j}$ and \changed{$\hat{\wasserstein}_{2,\bm\signature_{\bm\sC_k}}(q)$}}
\Begin{
    Construct the signature of the components of the mixture using the (fixed and optimal) signature of $\Ndist(0,1)$ with locations $\sC_{1d}$: \;
    \For{$i\in \{1,\hdots, M\}$}{
        \nl Compute eigenvalues vector $\vlambda_i$ and eigenvector matrix $\mV_i$ of $\vNdist_i$ \;
        \nl Find the optimal grid sizes $\{N_1^*, \hdots,  N_n^*\}$ for $\vlambda_i$ according to Eqn~\eqref{eq:OptimalGridUnivariateGauss} \;
        \nl Construct grid of points $\sC^* = \sC^*(N_1) \times\hdots\times\sC^*(N_n) $ , with $\sC^*(N_i)$ as in Eqn~\eqref{eq:OptimalGridSizes} \;
        \nl Transform the grid to the original space: 
        $\sC_i = \postImage(\sC^*,\mS) + \mNdist_i$ where $\mS = \mV_i\diag(\vlambda_i)^{\frac{1}{2}}$\; 
        \changed{
        \nl Set $\hat\wasserstein_{2,\signature_{\sC_i}}^2 = \sum_{l=1}^n\elem{\vlambda_i}{l}\sum_{j=1}^{N_l}\elem{\vpi_l}{j}\big[\elem{\vnu_j}{l}+(\elem{\vmu_j}{l}-\elem{\Tilde\vc_j}{l})^2\big]$, with $\sC^*(N_l)=\{\elem{\vc_j}{l}\}_{j=1}^{N_l}$, and $\elem{\vpi_l}{j}$, $\elem{\vmu_j}{l}$, and $\elem{\vnu_j}{l}$ as in Eqn~\eqref{eq:Was4SignStdGausTerms}\;
        }
        \changed{
        \nl Define $d_i = \sum_{j=1}^{|\sC_i|}\elem{\vpi}{j} \vc_{j} $ for
        $\elem{\vpi}{j}=\prod_{l=1}^n\elem{\vpi_l}{j}$\;
        }
    }
    \nl \changed{Return} $d=\sum_{i=1}^M\elem{\Tilde{\vpi}}{i}d_i$ and \changed{$\hat\wasserstein_{2,\bm\signature_{\bm\sC_k}}^2(q)=\sum_{i=1}^{M} \elem{\Tilde{\vpi}}{i} 
    \hat\wasserstein_{2,\signature_{\sC_i}}^2$} \;
}
\caption{Signatures of Gaussian Mixtures}\label{al:signaturesOfGaussianMixutres}
\end{algorithm}
\begin{algorithm}[htbp]
\DontPrintSemicolon
\SetKwInOut{Input}{input}\SetKwInOut{Output}{output}
\Input{$\Tilde{q}=\sum_{i=1}^{\Tilde{M}}\elem{\Tilde{\vpi}}{i}\Ndist(\Tilde{\mNdist}_{i},\Tilde{\vNdist}_{i})$}
\Output{$q=\sum_{j=1}^{M}\elem{\vpi}{j}\Ndist(\mNdist_{j},\vNdist_{j})$}
\Begin{
    \nl Collect the components of $\Tilde{q}$ in $M$ clusters by applying $M$-means clustering to $\{\Tilde{\mNdist}_1,\hdots,\Tilde{\mNdist}_{\Tilde{M}}\}$ using Lloyd's Algorithm\;
    \For{$j\in\{1,\hdots,M\}$}{
        Compress all components in the $j$-th cluster with indices in $\sI_j$ 
        to component $\Ndist(\mNdist_{j},\vNdist_{j})$ with weight $\evpi{j}$
        by taking the weighted average of the means and covariances:\;
        \nl $\evpi{j} = \sum_{i\in\sI_{j}}\elem{\Tilde{\vpi}}{i}$;  $\mNdist_{j} = \frac{1} {\evpi{j}}\sum_{i\in\sI_{j}}\elem{\Tilde{\vpi}}{i} \Tilde{\mNdist}_i$;  $\vNdist_{j} = \frac{1}{\evpi{j}}\sum_{i\in\sI_{j}}\elem{\Tilde{\vpi}}{i}\left[\Tilde{\vNdist}_i + (\Tilde{\mNdist}_i - \mNdist_j)(\Tilde{\mNdist}_i - \mNdist_j)^T\right]$ \;
    }
    \nl \changed{Return $q=\sum_{j=1}^{M}\elem{\vpi}{j}\Ndist(\mNdist_{j},\vNdist_{j})$}\;
}
\caption{Compress Gaussian Mixture
}\label{al:compressGMMs}
\end{algorithm}

\subsection{Construction of the GMM Approximation of a SNN}\label{sec:algo:gmmApproximateSNN}
The procedure to build a Gaussian mixture approximation $q_{nn(\sX)}$ of $p_{nn(\sX)}$, i.e., the output distribution of a SNN at a set of input points $\sX$ as in Eqn~\eqref{eq:q_nn(vec(X)),k}, is summarized in Algorithm~\ref{al:SNN2GMM}.
The algorithm consists of a forward pass over the layers of the SNN. First, the Gaussian output distribution after the first \changed{affine} layer is constructed (line~1). Then, for each layer $k>0$, $q_{nn(\sX),k}$ is compressed to a mixture of size $M$ (line~2); the signature of $\compress_{\changed{M_k}}(q_{nn(\sX)})$ is computed (line 4); and the signature is passed through the activation and \changed{affine} layer to obtain $q_{nn(\sX),k+1}$ (line~5). Parallel to this, error bounds on the 2-Wasserstein distance are computed for the compression operation 
(line~3), \changed{signature operation} (line~4), and \changed{affine} operation (line~6), which are then composed into a bound on $\wasserstein_2\left(p_{nn(\sX),k+1},q_{nn(\sX),k+1}\right)$ that is propagated to the next layer (\changed{line~7}). 

\paragraph{Signature Operation}
The signature operation on $q_{nn(\sX),k}$ in line~4 of Algorithm~\ref{al:SNN2GMM} is performed according to the steps in~Algorithm \ref{al:signaturesOfGaussianMixutres}. That is, the signature of $q_{nn(\sX),k}$ is taken as the mixture of the signatures of the components of $q_{nn(\sX),k}$ (lines~\changed{1-6}). 
For each Gaussian component of the mixture, to ensure an analytically exact solution for the Wasserstein distance resulting from the signature, we position its signature locations on a grid that aligns with the covariance matrix's basis vectors, i.e., the locations satisfy the constraint as in Eqn~\eqref{eq:AxisAlignedGridAssumption} of Corollary~\ref{corol:Was4SignMultGaus}. 
In particular, the grid of signatures is constructed so that the Wasserstein distance from the signature operation is minimized. According to the following proposition, the problem of finding this optimal grid can be solved in the transformed space, in which the Gaussian is independent over its dimensions. 
\begin{proposition}\label{prop:OptimalGrid}
    For $\Ndist(\mNdist, \vNdist)\in\probMeas(\eucl^n)$, let matrix $\mT \in\eucl^{n\times n}$ be such that $\mT=\diag(\vlambda)^{-\frac{1}{2}}\mV^T$ where $\diag(\vlambda)=\mV^T\vNdist\mV$ is a diagonal matrix whose entries are the eigenvalues of $\vNdist$, and $\mV$ is the corresponding orthogonal (eigenvector) matrix. Then,
    \begin{align*}
        \argmin{\substack{\{\sC^1,\hdots,\sC^n\}\subset\eucl, \\ \sum_{i=1}^n|\sC^\changed{i}|=N}}
        \wasserstein_2\left(\signature_{\postImage(\sC^1\times\hdots\times\sC^n, \mT^{-1})+\{\mNdist\}}\#\Ndist(\mNdist,\vNdist),\Ndist(\mNdist,\vNdist)\right) 
        = \left\{\sC^*(N_1^*),\hdots,\sC^*(N_n^*)\right\},
    \end{align*}
    where
    \begin{align}
        \sC^*(N)&=\argmin{\sC\in\eucl,|\sC|=N}\wasserstein_2\left(\signature_{\sC}\#\Ndist(0,1),\Ndist(0,1)\right), \label{eq:OptimalGridUnivariateGauss} \\
        \{N_1^*,\hdots,N_n^*\} &= \argmin{\substack{\{N_1,\hdots,N_n\}\in\natNum^n, \\ \prod_{j=1}^n, N_j=N}}\sum_{j=1}^n\evlambda{j}\wasserstein_2\left(\signature_{\sC^*(N_i)}\#\Ndist(0,1),\Ndist(0,1)\right). \label{eq:OptimalGridSizes}
    \end{align}
\end{proposition}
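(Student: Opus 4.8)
The plan is to reduce the $n$-dimensional placement problem to $n$ decoupled univariate placement problems together with a discrete budget-allocation problem, exploiting the additive decomposition of the squared Wasserstein cost furnished by Corollary~\ref{prop:Was4SignMultGaus}. First I would note that for any univariate location sets $\bar{\sC}^1,\ldots,\bar{\sC}^n\subset\eucl$ the set $\sC=\postImage(\bar{\sC}^1\times\cdots\times\bar{\sC}^n,\mT^{-1})+\{\mNdist\}$ satisfies the grid condition of Eqn.~\eqref{eq:AxisAlignedGridAssumption} by construction, with transformed-space marginals exactly $\bar{\sC}^1,\ldots,\bar{\sC}^n$ (since $\postImage(\sC-\{\mNdist\},\mT)=\bar{\sC}^1\times\cdots\times\bar{\sC}^n$). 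Hence Corollary~\ref{prop:Was4SignMultGaus} applies and, via Eqn.~\eqref{eq:Was4SignMultGaus},
\begin{equation*}
    \wasserstein_2^2\big(\signature_\sC\#\Ndist(\mNdist,\vNdist),\Ndist(\mNdist,\vNdist)\big)
    = \sum_{j=1}^n \evlambda{j}\,\wasserstein_2^2\big(\signature_{\bar{\sC}^j}\#\Ndist(0,1),\Ndist(0,1)\big).
\end{equation*}
As $t\mapsto t^2$ is strictly increasing on $[0,\infty)$, minimizing $\wasserstein_2$ is equivalent to minimizing this sum, so I would work with the squared objective throughout.

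The key structural point is that the $j$-th summand depends on the decision variables only through $\bar{\sC}^j$, so the summands become separable once the cardinalities $N_j \coloneqq |\bar{\sC}^j|$ are fixed; the global signature-size constraint couples the coordinates only through $\prod_{j}N_j=N$, i.e.\ only through the cardinalities and not through the placements. I would then push the minimization inward: for a fixed allocation $(N_1,\ldots,N_n)$ the separable sum is minimized by minimizing each term independently, and the minimizer of the $j$-th term over location sets of cardinality $N_j$ is, by its definition in Eqn.~\eqref{eq:OptimalGridUnivariateGauss}, the optimal univariate quantizer $\bar{\sC}(N_j)$ of the standard Gaussian (whose existence is guaranteed by the optimal-quantization results of \cite{graf2007foundations,pages2003optimal}). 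Substituting these optimal placements leaves an objective that is a function of the allocation alone, and minimizing it over all admissible $(N_1,\ldots,N_n)$ yields exactly $\{N_1^*,\ldots,N_n^*\}$ as in Eqn.~\eqref{eq:OptimalGridSizes}; the overall minimizer is therefore $\{\bar{\sC}(N_1^*),\ldots,\bar{\sC}(N_n^*)\}$, as claimed.

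The main obstacle, though a mild one, is rigorously justifying the interchange that collapses the joint minimization over the tuple $(\bar{\sC}^1,\ldots,\bar{\sC}^n)$ into an outer minimization over cardinality allocations and $n$ inner univariate minimizations. This rests on the standard fact that the minimum of a separable sum over a product domain indexed by block sizes equals the minimum over size-allocations of the sum of the per-block minima. The only care required is to check that the univariate optima $\bar{\sC}(N_j)$ are \emph{attained} (so the inner minima are genuine minima rather than infima, which the cited quantization results provide) and that the weights $\evlambda{j}\ge 0$, so that the decomposition is order-preserving and the per-coordinate minimizations do not interfere. Both conditions hold, completing the argument.
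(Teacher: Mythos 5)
Your proposal is correct and follows essentially the same route as the paper's proof: apply Corollary~\ref{prop:Was4SignMultGaus} to decompose the squared $\wasserstein_2$ cost into a $\evlambda{j}$-weighted sum of univariate quantization errors, then exploit separability to split the joint minimization into per-dimension optimal quantizer placements $\bar{\sC}(N_j)$ and an outer minimization over the cardinality allocation as in Eqn.~\eqref{eq:OptimalGridSizes}. Your added care about attainment of the univariate optima and the strict monotonicity of $t\mapsto t^2$ are minor refinements the paper leaves implicit, not a different argument.
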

Following Proposition~\ref{prop:OptimalGrid}, the optimal grid of signature locations of any Gaussian is defined by the optimal signature locations for a univariate Gaussian (Eqn~\ref{eq:OptimalGridUnivariateGauss}) and the optimal size of each dimension of the grid (Eqn~\ref{eq:OptimalGridSizes}).
While the optimization problem in Eqn~\eqref{eq:OptimalGridUnivariateGauss} is non-convex, it can be solved up to any precision using the fixed-point approach proposed in \cite{kieffer1982exponential}. In particular, we solve the problem once for a range of grid sizes $N$ and store the optimal locations in a lookup table that is called at runtime. To address the optimization problem in Eqn~\eqref{eq:OptimalGridSizes}, we use that $\wasserstein_2\left(\signature_{\sC^*(N)}\#\Ndist(0,1),\Ndist(0,1)\right)$, with optimal locations $\sC^*(N)$ as in Eqn~\eqref{eq:OptimalGridUnivariateGauss}, is strictly decreasing for increasing $N$ so that even for large $N$, the number of feasible non-dominated candidates $\{N_1,\hdots,N_n\}$ is small. 

\changed{
\begin{remark}\label{remark:GridVsSigmaPointSignatures}
    Algorithm~\ref{al:signaturesOfGaussianMixutres} constructs a set of grid-constrained signatures that satisfy Eqn~\eqref{eq:AxisAlignedGridAssumption}, thereby enabling formal error evaluation via Corollary~\ref{corol:Was4SignMultGaus}.     
    In settings where formal error bounds are not required, more flexible choices for the signature locations can alternatively be employed, provided they satisfy the conditions in Remark~\ref{remark:tractable_signature} that ensure tractability of the signature operations.
    One such alternative signature placement strategy is presented in Appendix~\ref{append:altApproxScheme}.
    In Section~\ref{sec:expSNN2MGP}, we empirically compare the strategy in Algorithm~\ref{al:signaturesOfGaussianMixutres} with the strategy in Appendix ~\ref{append:altApproxScheme} by estimating the Wasserstein distance using Monte Carlo methods.
\end{remark}
}

\paragraph{Compression Operation}
Our approach to compress a GMM of size $N$ into a GMM of size $M<N$, i.e., operation $\compress_{\changed{M_k}}$ in Eqn~\eqref{eq:q_nn(x),k:Compress&Signature}, is described in Algorithm \ref{al:compressGMMs} and is based on the M-means clustering using Lloyd's algorithm \citep{lloyd1982least} on the means of the mixture's components (line 1). That is, each cluster is substituted by a Gaussian distribution with mean and covariance equal to those of the mixture in the cluster (line 2).
\begin{remark}
    For the compression of discrete distributions, such as in the case of Bayesian inference via dropout, we present a computationally more efficient alternative procedure to Algorithm \ref{al:compressGMMs} in Section \ref{append:compression4dropout} of the Appendix. Note that in the dropout case, the support of the multivariate Bernoulli 
    distribution representing the dropout masks at each layer's input grows exponentially with the layer width, making compression of paramount importance in practice.
\end{remark}

\subsection{Construction of a SNN Approximation of a GMM}\label{sec:differentiability}
The algorithmic framework presented in the previous subsection finds the Gaussian Mixture distribution that best approximates the output distribution of a SNN and returns bounds on their distance. 
In this subsection, we show how our results can be employed to solve the complementary problem of finding the \changed{parameter} distribution of a SNN to best match a given GMM. 

Let us consider a finite set of input points $\sX\subset\eucl^{n_0}$ and a SNN and GMM, whose distributions at $\sX$ are respectively $p_{nn(\mathcal{X})}$ and $q_{gmm(\sX)}$. Furthermore, in this subsection, we assume that the \changed{parameter} distribution $p_\vw$ of the SNN is parametrized by a set of parameters $\psi$.  
For instance, in the case where $p_\vw$ is a multivariate Gaussian distribution, then $\psi$ parametrizes its mean and covariance. 
Note that, consequently, also $p_{nn(\sX)}$ depends on $\psi$ through Eqn~\eqref{eq:p_nn(sX),k}.  
To make the dependence explicit, in what follows, we will use a superscript to emphasize the dependency on $\psi$.
Then, our goal is to find $\psi^*=\mathrm{argmin}_\psi \wasserstein_2(p_{nn(\mathcal{X})}^{\psi},q_{gmm(\mathcal{X})})$.
To do so, we rely on Corollary \ref{corol:Wasser4SNNtoGMM} below, which uses the triangle inequality to extend \changed{Corollary~\ref{corol:WasserNetworkSet}}.

\begin{corollary}[of \changed{Corollary~\ref{corol:WasserNetworkSet}}]\label{corol:Wasser4SNNtoGMM}
    Let $\sX\subset\eucl^{n_0}$ be a finite set of input points. Then, for a SNN and a GMM, whose distributions at $\sX$ are respectively $p_{nn(\mathcal{X})}^{\psi}$ and $q_{gmm(\sX)},$ it holds that 
    \begin{equation*}
        \wasserstein_2\big(p_{nn(\sX)}^{\psi}, q_{gmm(\sX)}\big) \leq \hat{\wasserstein}_2\big(p_{nn(\sX)}^{\psi}, q_{nn(\sX)}^{\psi}\big) + \mw_2\big(q_{nn(\sX)}^{\psi}, q_{gmm(\sX)}\big),
    \end{equation*}
    where the GMM $q_{nn(\sX)}^{\psi}$ and the bound $\hat{\wasserstein}_2$ are obtained via Algorithm \ref{al:SNN2GMM}, and $\mw_2$ is as defined in Definition \ref{def:mw2}, with $q_{nn(\sX)}^{\psi}$ also depending on $\psi$ through Eqn~\eqref{eq:q_nn(vec(X)),k}.
\end{corollary}
Using, Corollary \ref{corol:Wasser4SNNtoGMM}, we can approximate $\psi^*$ as follows
\begin{equation}\label{eq:objective}
        \psi^* \approx \argmin{\psi} \big\{\beta \hat{\wasserstein}_2\big(p^\psi_{nn(\sX)}, q^\psi_{nn(\sX)}\big) + (1-\beta)\mw_2\big(q^\psi_{nn(\sX)}, q_{gmm(\sX)}\big)\big\},
\end{equation}
where $\beta\in[0,1]$ allows us to trade off between the gap between the bound $\hat{\wasserstein}_2\big(p^\psi_{nn(\sX)}, q^\psi_{nn(\sX)}\big)$ and $\wasserstein_2\big(p^\psi_{nn(\sX)}, q^\psi_{nn(\sX)}\big)$ and the gap between the bound $\mw_2\big(q^\psi_{nn(\sX)}, q_{gmm(\sX)}\big)$ and $\wasserstein_2\big(q^\psi_{nn(\sX)}, q_{gmm(\sX)}\big)$.
For instance, in the case where $q^\psi_{nn(\sX)}$ and $q_{gmm(\sX)}$ are Gaussian distributions, $\mw_2\big(q^\psi_{nn(\sX)}, q_{gmm(\sX)}\big)$ equals $\wasserstein_2\big(q^\psi_{nn(\sX)}, q_{gmm(\sX)}\big)$, leading us to choose $\beta<\frac{1}{2}$.
As discussed next, under the assumption that the mean and covariance of $p_\vw$ are differentiable with respect to $\psi$, the objective in Eqn~\eqref{eq:objective} is piecewise differentiable with respect to $\psi$. Hence, the optimization problem can be approximately solved using gradient-based optimization techniques, such as Adam \citep{kingma2014adam}.

Let us first consider the term $\hat{\wasserstein}_2$ in the objective, which is iteratively defined in Eqn~\eqref{eq:hat{wasserstein}_k} over the layers of the network using the quantities \changed{$\expect_{\vw_k\sim p_{\vw_k}}[\lipschitz_{L^{\vw_k}}^2]^{1/2}$, $\hat\wasserstein_{2,\bm\signature_{\bm\sC_k}}\left(\compress_{\changed{M_k}}(q_{nn(\sX),k})\right)$, and
$\mw_2\left(q_{nn(\vx),k}, \compress_{\changed{M_k}}(q_{nn(\vx),k})\right)$} for each layer $k$. 
In Algorithm \ref{al:SNN2GMM}, we compute these quantities so that the gradients w.r.t. the mean and covariance of $p_\vw$ are (approximately) analytically tractable:
\begin{enumerate}
    \item \changed{$\expect_{\vw_k\sim p_{\vw_k}}[\lipschitz_{L^{\vw_k}}^2]^{1/2}$ is computed as described in Appendix~\ref{append:expected_lipscthiz} (line 6 of Algorithm \ref{al:SNN2GMM}) as a function of} the sum of the variances of the $k$-th layer's \changed{parameters} and the spectral norm of the mean of the $k$-th layer's weight matrix. Hence, the gradient with respect to the variance is well defined, and the gradient with respect to the mean exists if the largest singular value is unique; otherwise, we can use a sub-gradient derived from the singular value decomposition (SVD) of the mean matrix \citep{rockafellar2015convex}.
    
    \item \changed{$\hat\wasserstein_{2,\bm\signature_{\bm\sC_k}}\left(\compress_{\changed{M_k}}(q_{nn(\sX),k})\right)$}
    is obtained according to Algorithm~\ref{al:signaturesOfGaussianMixutres} (line 4 of Algorithm \ref{al:SNN2GMM}) based on the eigendecomposition of the covariance matrix for each component of the GMM \changed{$\compress_{M_k}(q_{nn(\sX),k})$} (line 1 of Algorithm~\ref{al:signaturesOfGaussianMixutres}). 
    Differentiability of \changed{$\hat\wasserstein_{2,\bm\signature_{\bm\sC_k}}\left(\compress_{\changed{M_k}}(q_{nn(\sX),k})\right)$} with respect to the mean and covariance of $p_\vw$ follows from the piecewise differentiability of the eigenvalue decomposition of a (semi-)positive definite matrix \citep{magnus2019matrix},\footnote{Note that there are discontinuous changes in the eigenvalues for small perturbations if the covariance matrix is (close to) degenerate, which occurs in our cases if the behavior of the SNN is strongly correlated for points in $\sX$. To prevent this, we use a stable derivation implementation for degenerate covariance matrices as provided by \cite{kasim2020derivatives} to compute the eigenvalue decomposition.} provided that the covariances of \changed{$\compress_{M_k}(q_{nn(\sX),k})$} are differentiable with respect to the mean and covariance of $p_\vw$.
    To establish the latter, note that the compression operation $\compress_{\changed{M_k}}$, as performed according to Algorithm \ref{al:compressGMMs}, 
    given the clustering of the components of the mixture $q_{nn(\sX),k}$, reduces to computing the weighted sum of the means and covariances of each cluster.
    Hence, the covariances of \changed{$\compress_{M_k}(q_{nn(\sX),k})$} are piecewise differentiable with respect to the means and covariances of $q_{nn(\sX),k}$. 
    Here, $q_{nn(\sX),k}$ is an \changed{affine} combination of $p_{\vw_{k-1}}$ and \changed{$\sX$} if $k=1$, and the support of the signature approximation of \changed{$\compress_{M_k}(q_{nn(\sX),k-1})$} otherwise (see \changed{Example~\ref{example:closedFormMixtureFeedforward}} for the closed-form expression of $q_{nn(\sX),k}$ in the case $\sX=\{\vx\}$). 
    As such, from the chain rule, it follows that for all $k$ the means and covariances of \changed{$\compress_{M_k}(q_{nn(\sX),k})$}, and consequently, \changed{$\hat\wasserstein_{2,\bm\signature_{\bm\sC_k}}\left(\compress_{\changed{M_k}}(q_{nn(\sX),k})\right)$}, are piecewise differentiable with respect to the means and covariances of $p_{\vw_{k-1}},\hdots, p_{\vw_0}$. 
    
    \item \changed{$\mw_2\left(q_{nn(\vx),k}, \compress_{\changed{M_k}}(q_{nn(\vx),k})\right)$} is taken as the $\mw_2$ distance
    between $q_{nn(\sX),k}$ and \changed{$\compress_{\changed{M_k}}(q_{nn(\sX),k})$} (line 3 in Algorithm \ref{al:SNN2GMM}). Given the solution of the discrete optimal transport problem in Eqn~\eqref{eq:definitionMW2}, the $\mw_2$ distance between two Gaussian mixtures reduces to the sum of 2-Wasserstein distances between Gaussian components, which have closed-form expressions \citep{givens1984class} that are differentiable with respect to the means and covariances of the components. 
    In point 2, we showed that the means and covariances of $q_{nn(\sX),k}$ and \changed{$\compress_{\changed{M_k}}(q_{nn(\sX),k})$} are piecewise differentiable w.r.t the mean and covariance of $p_\vw$; therefore, \changed{$\mw_2\left(q_{nn(\vx),k}, \compress_{\changed{M_k}}(q_{nn(\vx),k})\right)$} is piecewise differentiable with respect to $p_\vw$.
\end{enumerate}
From the piecewise differentiability of \changed{all three quantities} with respect to the mean and covariance of $p_\vw$ for all layers $k$, it naturally follows that $\hat{\wasserstein}_2\big(p^\psi_{nn(\sX)}, q^\psi_{nn(\sX)}\big)$ is piecewise differentiable with respect to the mean and covariance of $p_\vw$ using simple chain rules. 
For the second term of the objective, $\mw_2\big(q^\psi_{nn(\sX)}, q_{gp(\sX)}\big)$, we can apply the same reasoning from point 3 to conclude that it is piecewise differentiable with respect to the means and covariances of the components of the GMM $q^\psi_{nn(\sX)}$, which, according to point 2, are piecewise differentiable with respect to the mean and covariance of $p_\vw$.
Therefore, we can conclude that the objective in Eqn~\eqref{eq:objective} is piecewise differentiable with respect to the mean and covariances of the \changed{parameter} distribution of the SNN.
Note that, in practice, the gradients can be computed using automatic differentiation, as demonstrated in Subsection \ref{sec:expPriorTuning} where we encode informative priors for SNNs via Gaussian processes by solving the optimization problem in Eqn~\eqref{eq:objective}.

\begin{remark}
    While straightforward automatic differentiation shows to handle the discontinuity in the gradients well in practice, as analyzed in Subsection \ref{sec:expPriorTuning}, more advanced techniques for non-smooth optimization can be employed to solve Eqn~\eqref{eq:objective} \citep{makela2002survey,burke2020gradient}.  
\end{remark}

\section{Experimental Results}\label{sec:experiments}
In this section we experimentally evaluate the performance of our framework in solving Problem~\ref{Prob:mainProb} and then demonstrate its practical usefulness in two applications: uncertainty quantification and prior tuning for SNNs. We will start with Section~\ref{sec:expSNN2MGP}, where we consider various trained SNNs and analyze the precision of the GMM approximation obtained with our framework.  
Section~\ref{sec:expAnalysis} focuses on analyzing the uncertainty of the predictive distribution of SNNs trained on classification data sets utilizing our GMM approximations. Finally, in Section~\ref{sec:expPriorTuning}, we show how our method can be applied to encode functional information in the priors of SNNs.\footnote{Our code is available at \url{https://github.com/sjladams/experiments-snns-as-mgps}.} 

\paragraph{Data Sets}
We study SNNs trained on various regression and classification data sets. For regression tasks, we consider the NoisySines data set, which contains samples from a 1D mixture of sines with additive noise as illustrated in Figure \ref{fig:NoisySine}, and the Kin8nm, Energy, Boston Housing, and Concrete UCI data sets, which are commonly used as regression tasks to benchmark SNNs \citep{hernandez2015probabilistic}.  
For classification tasks, we consider the  MNIST, Fashion-MNIST, and CIFAR-10 data sets. 

\paragraph{Networks \& Training}
We consider networks composed of fully-connected and convolutional layers, among which the VGG style architecture  \citep{simonyan2014very}.
We denote by $[n_l\times n_n]$ an architecture with $n_l$ fully-connected layers, each with $n_n$ neurons. 
The composition of different layer types is denoted using the summation sign. 
For each SNN, we report the inference techniques used to train its stochastic layers in brackets, while deterministic layers are noted without brackets. For instance, VGG-2 + [1x128] (VI) represents the VGG-2 network with deterministic \changed{parameters}, stacked with a fully-connected stochastic network with one hidden layer of 128 neurons learned using Variational Inference (VI). For regression tasks, the networks use tanh activation functions, whereas ReLU activations are employed for classification tasks. 
For VI inference, we use VOGN \citep{khan2018fast}, and for deterministic and dropout networks, we use Adam. 

\paragraph{Metrics for Error Analysis}
To quantify the precision of our approach for the different settings, we report the Wasserstein distance between the approximate distribution $q_{nn(\sX)}$ and the true SNN distribution $p_{nn(\sX)}$ relative to the 2-nd moment of the approximate distribution. That is, we report
\begin{equation}\label{eq:relative2wasserstein}
    \overline{W}_2\left(p_{nn(\sX)},q_{nn(\sX)}\right)\coloneqq \frac{\wasserstein_2\left(p_{nn(\sX)},q_{nn(\sX)}\right)}{\expect_{\vz\sim q_{nn(\sX)}}[\|\vz\|^2]^{1/2}},  
\end{equation}
which we refer to as the relative 2-Wasserstein distance. We use $\overline{W}_2$ because, as guaranteed by Lemma~\ref{lemma:1WasserBounds12Moment} \changed{in the appendix}, it provides an upper bound on the relative difference between the second moment of $p_{nn(\sX)}$ and $q_{nn(\sX)}$, i.e., 
\begin{equation*}
    \frac{\big|\expect_{\vz\sim p_{nn(\sX)}}[\|\vz\|^2]^{1/2}-\expect_{\Tilde{\vz}\sim q_{nn(\sX)}}[\|\Tilde{\vz}\|^2]^{1/2}\big|}{\expect_{\Tilde{\vz}\sim q_{nn(\sX)}}[\|\Tilde{\vz}\|^2]^{1/2}} \leq \overline{W}_2\left(p_{nn(\sX)},q_{nn(\sX)}\right).
\end{equation*}
Note that in our framework, $q_{nn(\sX)}$ is a GMM, hence $\expect_{\vz\sim q_{nn(\sX)}}[\|\vz\|^2]$ can be computed analytically.\footnote{
We have that $\expect_{\vz\sim q_{nn(\sX)}}[\|\vz\|^2] = \|\mNdist\|^2 + \trace(\Sigma)$, where $\mNdist$ and $\vNdist$ are the mean vector and covariance matrix of $q_{nn(\sX)}$, respectively. Closed-forms for $\mNdist$ and $\vNdist$ exist for $q_{nn(\sX)}\in\GMM_{<\infty}(\eucl^{n})$.
}
In our experiments, we report both a formal bound of $\overline{W}_2\left(p_{nn(\sX)},q_{nn(\sX)}\right)$ obtained using Theorem~\ref{thm:WasserNetwork} and an empirical approximation.
To compute the empirical approximation of $\overline{W}_2\left(p_{nn(\sX)},q_{nn(\sX)}\right)$, we use $1e3$ Monte Carlo (MC) samples from $p_{nn(\sX)}$ and $q_{nn(\sX)}$ and solve the resulting discrete optimal transport problem to approximate $\wasserstein_2\left(p_{nn(\vx)},q_{nn(\vx)}\right)$.

\begin{table}[htbp]
  \centering
  {\small
    \begin{tabular}{ll|rr|r}
        & & \multicolumn{2}{c}{Grid} & \multicolumn{1}{|l}{\changed{Cross}} \\ \midrule
          Data Set & Network Architecture   & \multicolumn{1}{l}{Emp.} & \multicolumn{1}{l}{Formal} & \multicolumn{1}{|l}{\changed{Emp.}}\\
    \midrule
    \multicolumn{1}{l}{NoisySines} & [1x64] (VI) & 0.00109 & 0.33105 & \changed{0.00028} \\
          & [1x128] (VI) & 0.00130 & 0.13112 & \changed{0.00041} \\
          & [2x64] (VI) & 0.00355 & 1.63519 & \changed{0.00305} \\
          & [2x128] (VI) & 0.01135 & 2.48488 & \changed{0.00956} \\
    \multicolumn{1}{l}{Kin8nm} & [1x128] (VI) & 0.00012 & 0.04243 & \changed{0.00012}\\
          & [2x128] (VI) & 0.00018 & 0.46303 & \changed{0.00005} \\ \midrule
    \multicolumn{1}{l}{MNIST} & [1x128] (VI) & 0.00058 & 0.05263 &\changed{0.00037} \\
          & [2x128] (VI) & 0.00127 & 0.44873 & \changed{0.00124}\\
    \multicolumn{1}{l}{Fashion MNIST} & [1x128] (VI) & 0.00017 & 0.06984 &\changed{0.00013} \\
          & [2x128] (VI) & 0.02787 & 1.45466 &\changed{0.02228} \\
    \multicolumn{1}{l}{CIFAR-10} & VGG-2 + [1x128] (VI) & 0.00617 & 0.57670 & \changed{0.00363} \\
          & VGG-2 + [2x128] (VI) & 0.04901 & 2.35686 & \changed{0.02890} \\
    \midrule
    \multicolumn{1}{l}{MNIST} & [1x64] (Drop) & 0.27501 & 0.94371 & \changed{0.18400} \\
          & [2x64] (Drop) & 0.41874 & 1.37594 & \changed{0.08603} \\
      \multicolumn{1}{l}{CIFAR-10} & VGG-2 (Drop) + [1x32] (VI) & 0.27492 & 179.9631 &\changed{0.18996} \\ \bottomrule
    \end{tabular}%
    }
    \caption{The empirical estimates and formal bounds on the relative 2-Wasserstein distance between various SNNs and their Gaussian Mixture approximates obtained via Algorithm~\ref{al:SNN2GMM} using signature approximations of size $10$ and a compression size (\changed{$M_k$}) of $5$ \changed{for all layers}.
    \changed{The Grid column reports the error for approximates constructed using Algorithm~\ref{al:signaturesOfGaussianMixutres}. The Cross column reports the empirical errors for approximates constructed using the alternative cross-shaped signatures via Algorithm~\ref{al:signaturesOfGaussianMixutresCross}, for which formal bounds are intractable for this method.}
    For the dropout networks, instead of Algorithm~\ref{al:compressGMMs}, we use the compression procedure as described in Appendix~\ref{append:compression4dropout}. 
    The reported values are the average over $100$ randomly selected points from the test sets.} 
    \label{tab:Emp_vs_Form}
\end{table} 

\begin{table}[htbp]
  \centering
    {\small
    \begin{tabular}{ll|rrr}
    \multicolumn{2}{c|}{Compression Size:} & 1     & 3     & \multicolumn{1}{l}{No Compression} \\
    \midrule
    NoisySines & [2x64] (VI) & 1.97279 & 1.63519 & 1.08763 \\
          & [2x128] (VI) & 2.48109 & 2.48488 & 2.48398 \\
    Kin8nm & [2x128] (VI) & 0.48740 & 0.46303 & 0.45387 \\
    MNIST & [2x128] (VI) & 0.45036 & 0.44873 & 0.44873 \\
    Fashion MNIST & [2x128] (VI) & 1.45489 & 1.45466 & 1.45466 \\
    CIFAR-10 & VGG-2 + [2x128] (VI) & 2.36616 & 2.35686 & 2.35686 \\ \bottomrule
    \multicolumn{2}{c|}{Compression Size:} & 10    & 100   & 10000 \\
    \midrule
    MNIST & [1x64] (Drop) & 0.94371 & 0.73682 & 0.3571 \\
          & [2x64] (Drop) & 1.37594 & 1.09053 & 0.89742 \\
    CIFAR-10 & VGG-2 (Drop) + [1x32] (VI) & 179.9631 & 180.0963 & 145.2984 \\ \bottomrule
    \end{tabular}%
    }
    \caption{The formal bounds on the relative 2-Wasserstein distance between various SNNs and their Gaussian Mixture approximations obtained via Algorithm \ref{al:SNN2GMM} for a signature size of $100$ and different compression sizes (\changed{$M_k$}). The values are the average of 100 random test points.
    }
    \label{tab:CompressionSize_vs_W2Bound}
\end{table}

\subsection{Gaussian Mixture Approximations of SNNs}\label{sec:expSNN2MGP}
In this subsection, we experimentally evaluate the effectiveness of Algorithm~\ref{al:SNN2GMM} in constructing a GMM that is $\epsilon$-close to the output distribution of a given SNN.
We first demonstrate that, perhaps surprisingly,  even a small number of discretization points for the signature operation and a large reduction in the compression operation (i.e., \changed{$M_k$} relatively small), often suffice for Algorithm~\ref{al:SNN2GMM} to generate GMMs that accurately approximate SNNs both empirically and formally. Then, we provide empirical evidence that by increasing the signature and compression sizes, the approximation error of the GMM can be made arbitrarily small and the formal error upper bound resulting from Theorem~\ref{thm:WasserNetwork} converges to $0$.

\paragraph{Baseline Performance}
We start our analysis with Table~\ref{tab:Emp_vs_Form}, where we assess the performance of Algorithm~\ref{al:SNN2GMM} in generating GMM approximations of SNN trained using both VI and dropout.
The results show that for the SNNs trained with VI, even with only 10 signature points \changed{per-component}, Algorithm~\ref{al:SNN2GMM} is able to generate GMMs, whose empirical distance from the true BNN is always smaller than $10^{-2}$. 
This can be partly explained as for VI generally only a few neurons are active \citep{louizos2017bayesian,frankle2018lottery}, so it is necessary to focus mostly on these on the signature approximation to obtain accurate GMM approximations at the first $K$-th layers \changed{(See Remark~\ref{remark:GridSizeVsConvergenceRate})}. 
Moreover, the propagation of the GMM approximation through the last \changed{fully-connected} layer is performed in closed form (i.e., by Eqn~\ref{eq:q_nn(x),k:actiLinOperation} for $\sX=\{\vx\}$) and this mitigates the influence of the approximation error of the previous layers on the final approximation error. 
In contrast, for the dropout networks, the empirical approximation error is, on average, one order of magnitude larger. This can be explained because the input-output distribution of dropout networks is inherently highly multimodal. Thus, as we will show in the next paragraph, to obtain tighter GMM approximations, it is necessary 
to use $M>5$ (the value used in Table~\ref{tab:Emp_vs_Form}) and allow for a larger size in the GMM approximation.

Note that the formal error bounds tend to become more conservative with increasing network depth. This is related to the transformations in the (stochastic) \changed{fully-connected} layers, for which the (expected) norm of the weight matrices is used to bound its impact on the Wasserstein guarantees (see the \changed{affine} operator bound in Table~\ref{tab:wasser4layers}). It is also interesting to note how the error bounds tend to be more accurate, and consequently closer to the empirical estimates, for the dropout networks. This is because for VI it is necessary to bound the expected matrix norm of each \changed{affine} operation, which has to be upper bounded \changed{(see Appendix~\ref{append:expected_lipscthiz})}, thereby introducing conservatism. Fortunately, as we will illustrate in the following paragraph and as we have mathematically proven in the previous sections, in all cases the error bounds can be made arbitrarily small 
by increasing signature size and $M$. 

\changed{Lastly, in settings where formal error bounds are not required, alternative signature-placement methods such as Algorithm~ \ref{al:signaturesOfGaussianMixutresCross} in the appendix, which places the signature locations in a cross along the principal axes of the components' covariance matrix, provide a practical alternative. 
In the cases considered, this cross-shaped placement consistently attains higher empirical accuracy with the same number of signature points and lower runtime, so fewer points often suffice, however, it does not yield formal guarantees. The compute cost of Algorithm \ref{al:SNN2GMM} is modest, with processing a single test point typically taking on the order of tenths of a second.}\footnote{\changed{All experiments were run on an Intel Core i7-10610U (1.80–2.30 GHz) with 16 GB RAM.}}

\begin{figure}[htbp]
    \centering
        \includegraphics[width=0.95\textwidth]{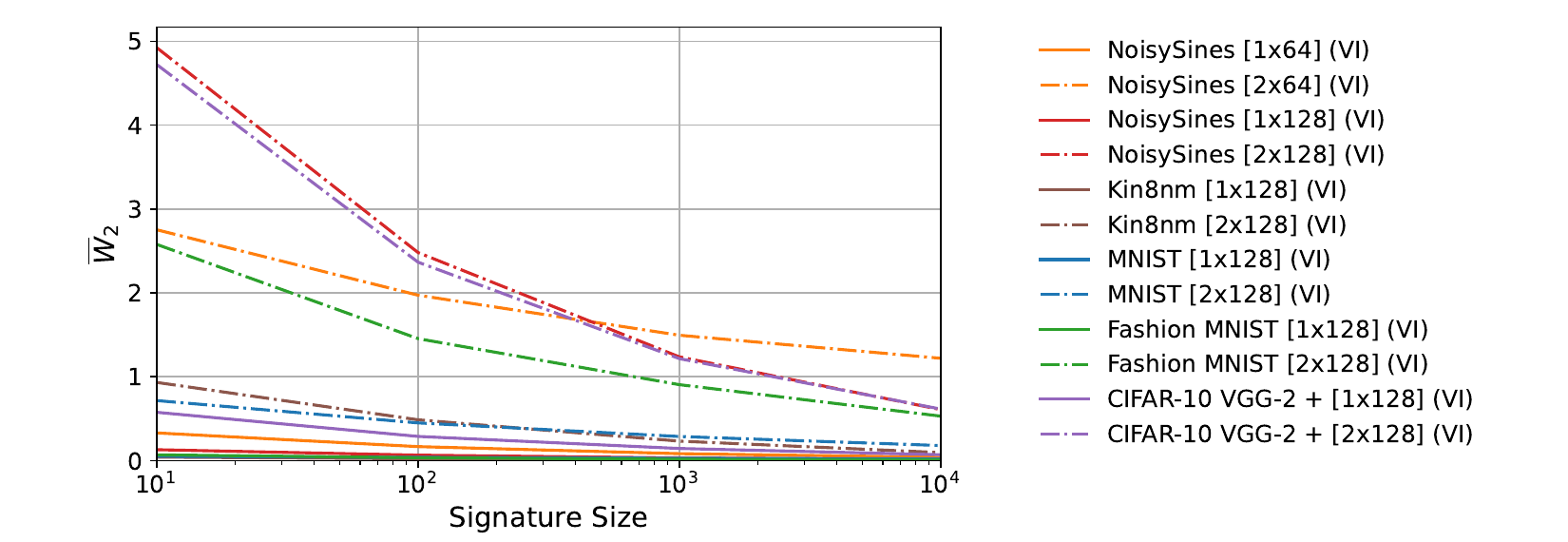}
    \caption{Formal bounds on the relative 2-Wasserstein distance between various SNNs and their Gaussian Mixture approximations obtained via Algorithm \ref{al:SNN2GMM}, for a compression size ($M_k$) of $3$ and different signature sizes. The lines show the average of 100 random test points.}
    \label{tab:SignatureSize_vs_W2Bound}
\end{figure}

\paragraph{Uniform Convergence}
We continue our analysis with Figure \ref{tab:SignatureSize_vs_W2Bound}, where we conduct an extensive analysis of how the formal error bound from Theorem\ref{thm:WasserNetwork} changes with increasing signature size for various SNN architectures trained on both regression and classification data sets. 
The plot confirms that, with increasing signature sizes, the approximation error, measured as $\overline{W}_2$, tends to decrease uniformly. 
In particular, we observe an inverse‑linear convergence rate. \changed{This behaviour matches our theoretical result in Theorem~\ref{theorem:ConvergenceWasserBound}. That theorem implies that a per‑dimension grid size of order $N=O(\epsilon^{-1}\sqrt{\log(1/\epsilon)})$ is sufficient to reach a target precision $\epsilon$.}\footnote{In Appendix \ref{append:ExperimentsSigns4GMMs}, we experimentally investigate in more detail the uniform convergence of the 2‑Wasserstein distance resulting from the signature operation for Gaussian mixtures of different sizes.}

In Table \ref{tab:CompressionSize_vs_W2Bound}, we investigate the other approximation step: the compression operation, which reduces the components of the approximation mixtures (step e in Figure \ref{fig:method}). 
The results show that while the networks trained solely with VI can be well approximated with single Gaussian distributions, the precision of the approximation for the networks employing dropout strongly improves with increasing compression size. This confirms that the distribution of these networks is highly multi-modal and underscores the importance of allowing for multi-modal approximations, such as Gaussian mixtures.

\begin{figure}[p]
    \centering
    \begin{minipage}{\textwidth}
        \begin{minipage}{0.48\textwidth}
            \centering
            \begin{subfigure}[b]{0.6\textwidth}
                \includegraphics[width=\textwidth]{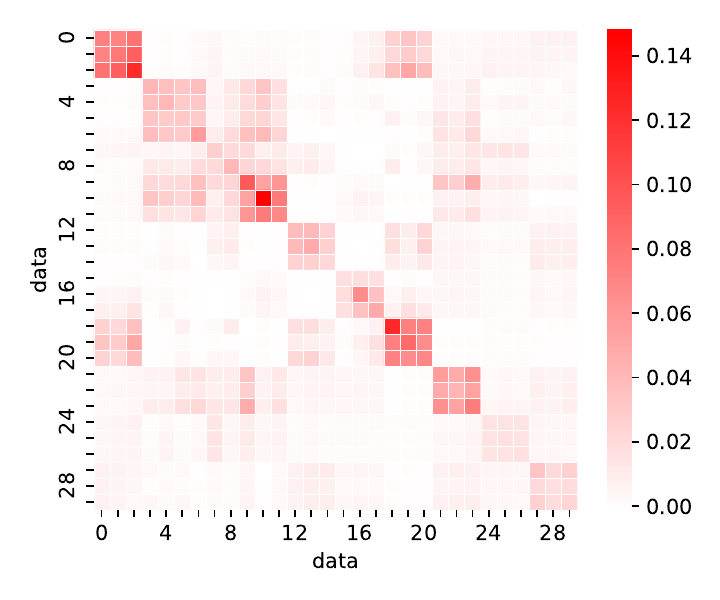}    
                \caption*{(a) Kernel SNN}
            \end{subfigure}
            \hfill
            \begin{subfigure}[b]{0.38\textwidth}
                \includegraphics[width=1.05\textwidth]{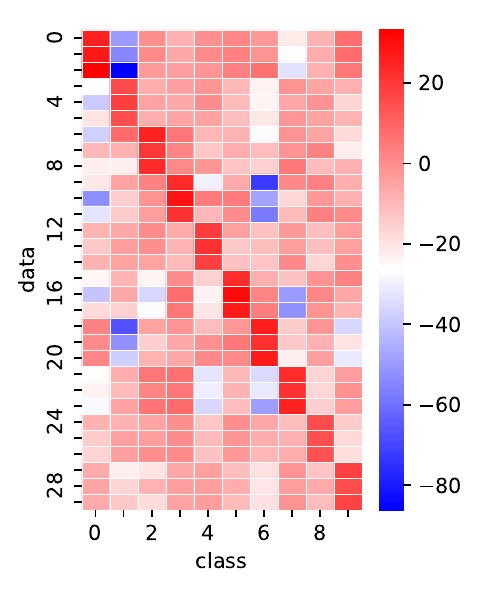}    
                \caption*{(b) Mean SNN}
            \end{subfigure}
        \end{minipage}
        \hfill
        \begin{minipage}{0.48\textwidth}
            \begin{subfigure}[b]{0.6\textwidth}
                \includegraphics[width=\textwidth]{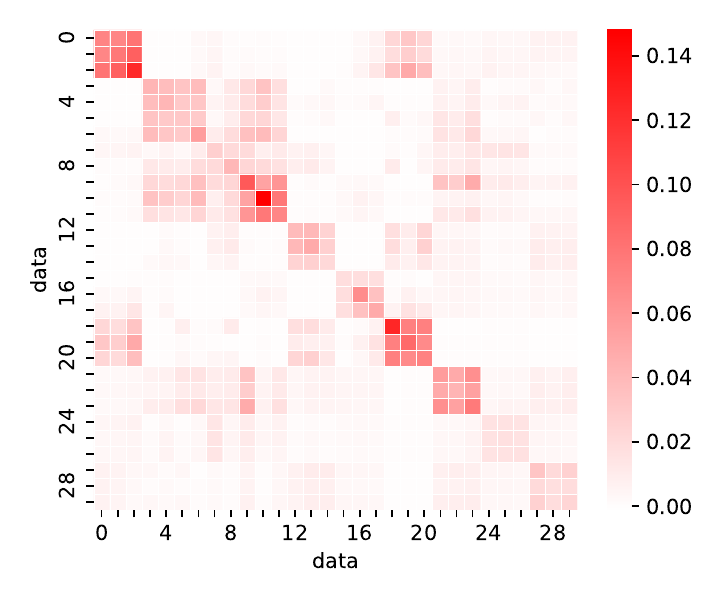}    
                \caption*{(c) Kernel GMM}
            \end{subfigure}
            \hfill
            \begin{subfigure}[b]{0.38\textwidth}
                \includegraphics[width=1.05\textwidth]{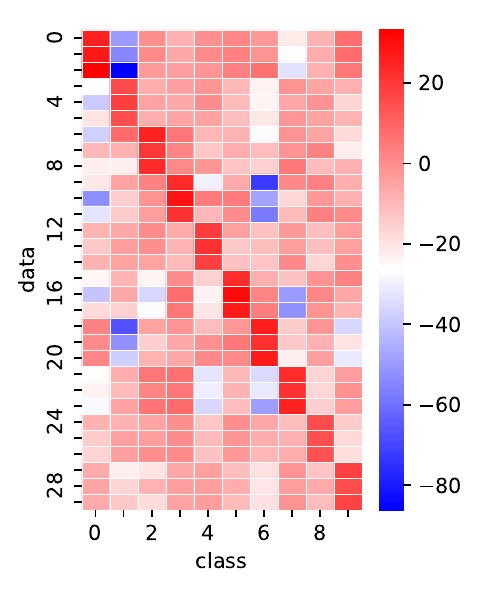}    
                \caption*{(d) Mean GMM}
            \end{subfigure}
        \end{minipage}
        \par\bigskip{\centering 
        (I) [2x128] (VI) trained on MNIST
        \par}
    \end{minipage}
    \vfill
    \begin{minipage}{\textwidth}
    \centering
    \begin{minipage}{0.48\textwidth}
        \centering
        \begin{subfigure}[b]{0.6\textwidth}
            \includegraphics[width=\textwidth]{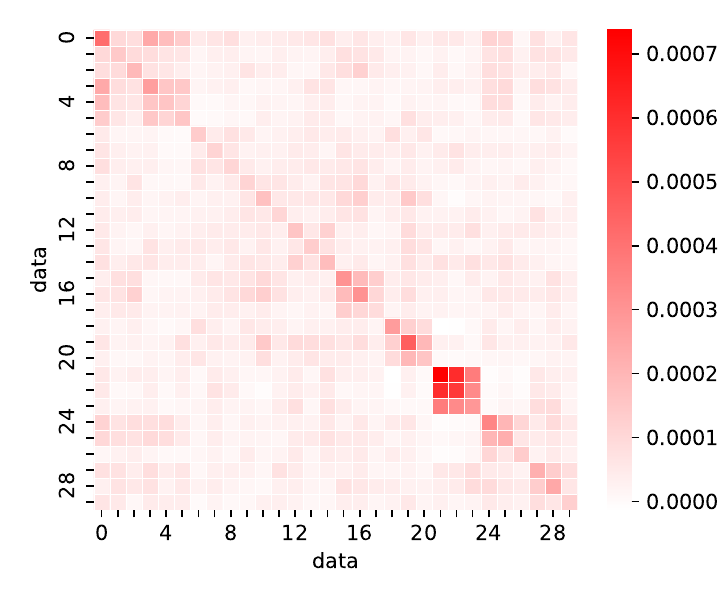}    
            \caption*{(a) Kernel SNN}
        \end{subfigure}
        \hfill
        \begin{subfigure}[b]{0.38\textwidth}
            \includegraphics[width=1.05\textwidth]{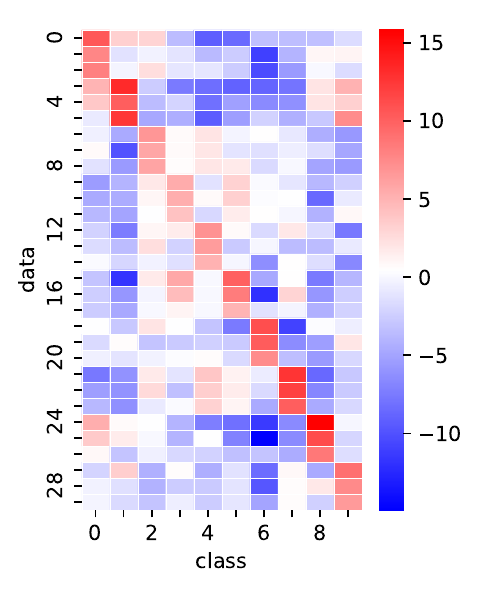}    
            \caption*{(b) Mean SNN}
        \end{subfigure}
    \end{minipage}
    \hfill
    \begin{minipage}{0.48\textwidth}
        \begin{subfigure}[b]{0.6\textwidth}
            \includegraphics[width=\textwidth]{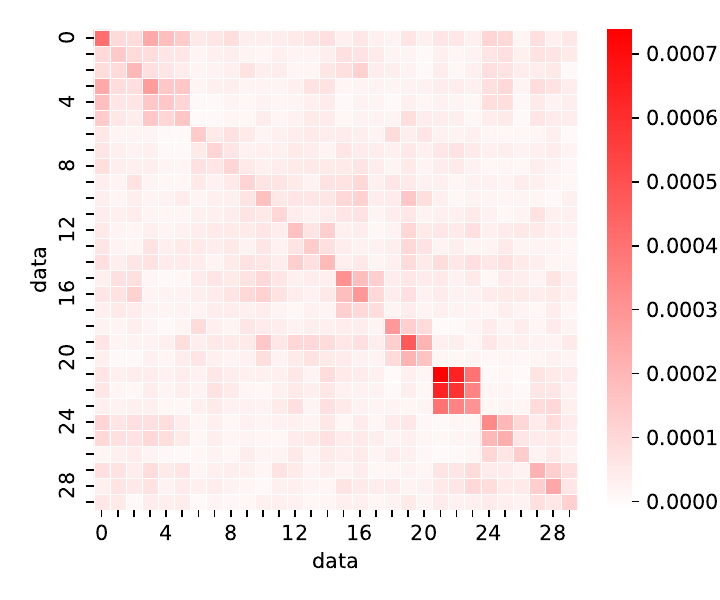}    
            \caption*{(c) Kernel GMM}
        \end{subfigure}
        \hfill
        \begin{subfigure}[b]{0.38\textwidth}
            \includegraphics[width=1.05\textwidth]{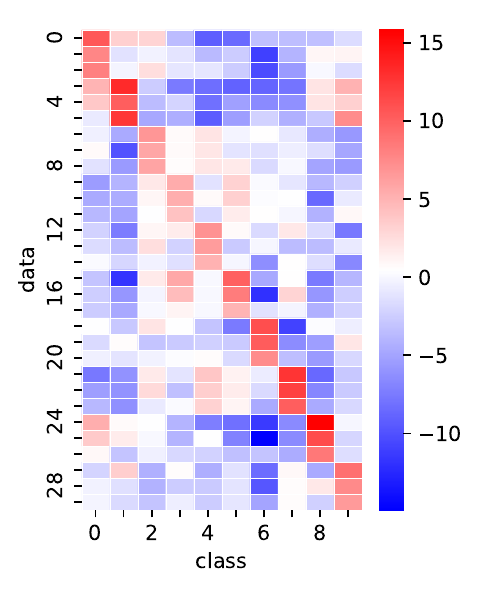}    
            \caption*{(d) Mean GMM}
        \end{subfigure}
    \end{minipage}
    \par\bigskip{\centering 
    (II) VGG-2 + [1x128] (VI) trained on CIFAR-10
    \par}
    \end{minipage}
    \vfill
    \begin{minipage}{\textwidth}
    \centering
    \begin{minipage}{0.48\textwidth}
        \centering
        \begin{subfigure}[b]{0.6\textwidth}
            \includegraphics[width=\textwidth]{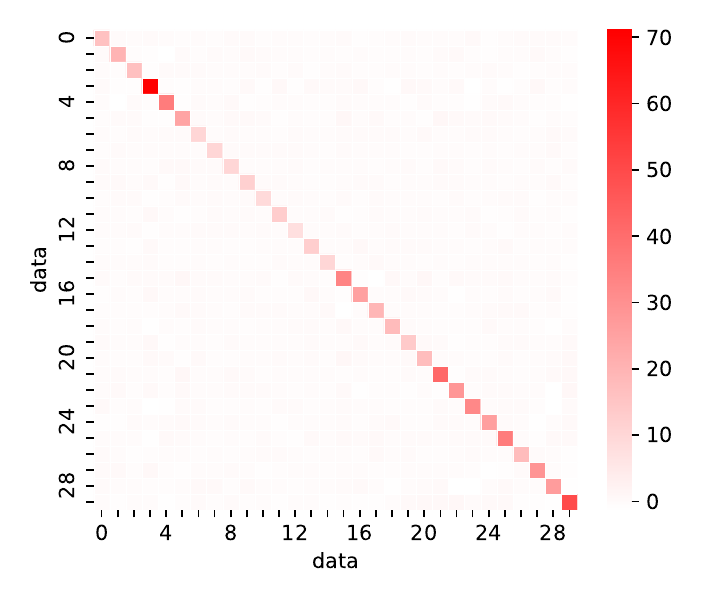}    
            \caption*{(a) Kernel SNN}
        \end{subfigure}
        \hfill
        \begin{subfigure}[b]{0.38\textwidth}
            \includegraphics[width=1.05\textwidth]{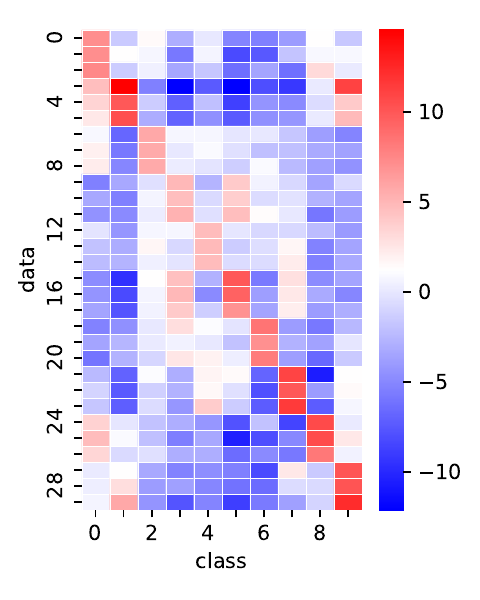}    
            \caption*{(b) Mean SNN}
        \end{subfigure}
    \end{minipage}
    \hfill
    \begin{minipage}{0.48\textwidth}
        \begin{subfigure}[b]{0.6\textwidth}
            \includegraphics[width=\textwidth]{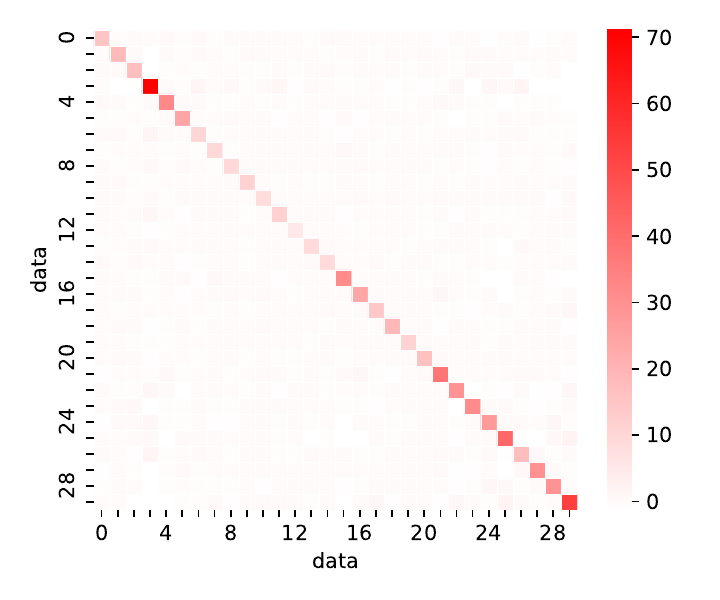}                \caption*{(c) Kernel GMM}
        \end{subfigure}
        \hfill
        \begin{subfigure}[b]{0.38\textwidth}
            \includegraphics[width=1.05\textwidth]{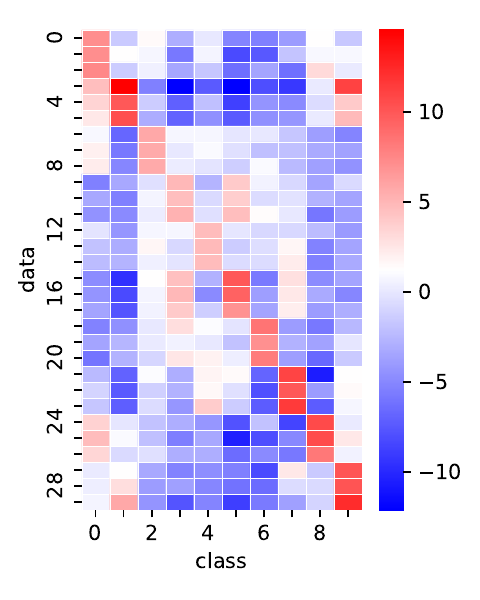}    
            \caption*{(d) Mean GMM}
        \end{subfigure}
    \end{minipage}
    \par\bigskip{\centering 
    (III) VGG-2 (Drop) + [1x128] (Drop) trained on CIFAR-10
    \par}
    \end{minipage}
    \caption{
   Kernels and means for various SNNs obtained via Monte Carlo sampling (left two columns) and of our GMM approximation (right two columns) for $30$ input points on MNIST in (I) and CIFAR-10 in (II) and (III). 
    Instead of the full kernel matrices, which have dimensionality $300\times 300$, we store the trace of the sub-blocks in the output dimension, and show the $30\times 30$ matrix that captures the covariance between the input points. 
    The rows and columns are grouped according to the classes, where the colored regions on the y-axis in the mean plots mark the classes. 
    (I) shows the approximate kernels and the predictive means for a fully-connected SNN trained via VI, which gives 98\% test accuracy. 
    We see in the kernel that examples with the same class labels are correlated. In (II), the VGG network with dropout is combined with a linear stochastic layer on the more complex CIFAR-10 data set, achieving 65\% accuracy. (III) shows VGG stacked with a linear layer, to which dropout is applied, achieving 64\% accuracy on CIFAR-10.
    }
    \label{fig:SNN2GMM_class}
\end{figure}

\subsection{Posterior Performance Analysis of SNNs}\label{sec:expAnalysis}
The GMM approximations obtained by Algorithm \ref{al:SNN2GMM} naturally allow us to visualize the mean and covariance matrix (i.e.,~kernel) of the predictive distribution at a set of points. The resulting kernel offers the ability to reason about our confidence on the predictions and enhance our understanding of SNN performances \citep{khan2019approximate}. The state-of-the-art for estimating such a kernel is to rely on Monte Carlo sampling \citep{van2020uncertainty}, which, however, is generally time-consuming due to the need to consider a large amount of samples for obtaining accurate estimates.

In Figure \ref{fig:SNN2GMM_class}, we compare the mean and covariance of various SNNs (obtained via Monte Carlo sampling using $1e4$ samples) and the relative GMM approximation on a set of $30$ randomly collected input points for various neural network architectures on the MNIST and CIFAR-10 data sets. We observe that the GMMs match the MC approximation of the true mean and kernel. 
However, unlike the GMM approximations, the MC approximations lack formal guarantees of correctness, and additionally, computing the MC approximations is generally two orders of magnitudes slower than computing the GMM approximations. 
By analyzing Figure \ref{fig:SNN2GMM_class}, it is possible to observe that since the architectures allow for the training of highly accurate networks, each row in the posterior mean reflects that the classes are correctly classified. For the networks trained using VI, the kernel matrix clearly shows the correlations trained by the SNN. For the dropout network, the GMM visualizes the limitations of dropout learning in capturing correlation among data points. This finding aligns with the result in \cite{gal2016dropout}, which shows that Bayesian inference via dropout is equivalent to VI inference with an isotropic Gaussian over the \changed{parameters} per layer.

\begin{figure}[htbp]
    \centering
    \begin{subfigure}[b]{0.48\textwidth}
        \includegraphics[width=\textwidth]{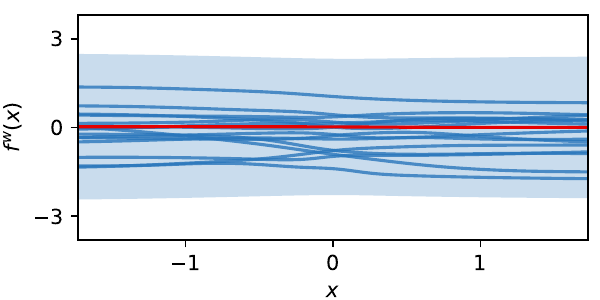}
        \caption{Uninformative SNN Prior}
        \label{fig:PriorTuning_SNNUnInf}
    \end{subfigure}
    \begin{subfigure}[b]{0.48\textwidth}
        \includegraphics[width=\textwidth]{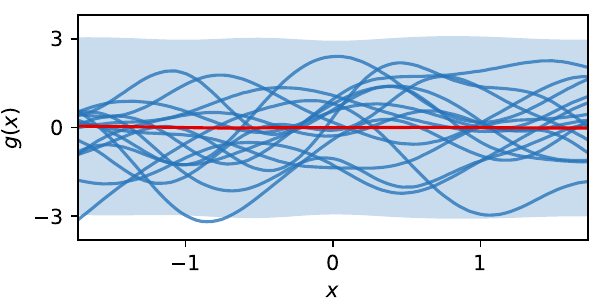}
        \caption{GP Prior}
        \label{fig:PriorTuning_GMM}
    \end{subfigure}
    \begin{subfigure}[b]{0.48\textwidth}
        \includegraphics[width=\textwidth]{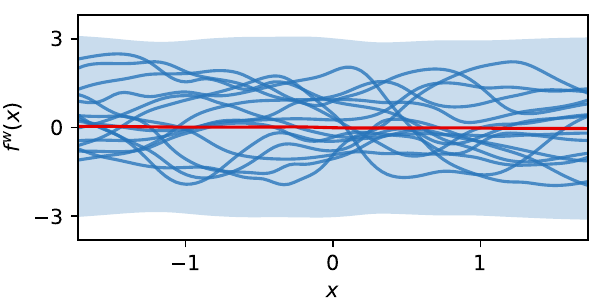}
        \caption{GP Induced SNN Prior (Ours)}
        \label{fig:PriorTuning_SNN}
    \end{subfigure}
    \begin{subfigure}[b]{0.48\textwidth}
        \includegraphics[width=\textwidth]{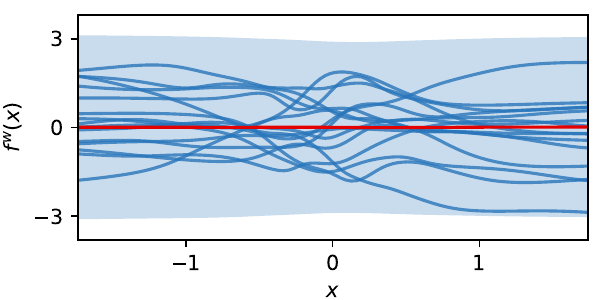}
        \caption{GP Induced SNN Prior \citep{tran2022all}}
        \label{fig:PriorTuning_SNN_bench}
    \end{subfigure}
    \caption{Visualization of the prior distribution of (a) a SNN with an isotropic Gaussian distribution over its \changed{parameters}; (b) a zero mean GP with the RBF kernel; (c-d) SNNs with a Gaussian distribution over its weight that is optimized to mimic the GP in (b) via, respectively, our framework and the approach proposed in \cite{tran2022all}. The SNN has 2 hidden layers, 128 neurons per layer, and the tanh activation function, while the GP has zero mean and a RBF kernel with length-scale $0.5$ and signal variance $1$.
    Red lines, blue shading, and blue lines respectively indicate the mean, $99.7\%$ credible intervals, and a subset of the MC samples from the output distributions.}
    \label{fig:exp1dPriorTuning}
\end{figure}

\subsection{Functional Priors for Neural Networks via GMM Approximation}\label{sec:expPriorTuning}
In this section, to highlight another application of our framework, we show how our results can be used to encode functional priors in SNNs. We start from Figure \ref{fig:PriorTuning_SNNUnInf}, where we plot functions sampled from a two hidden layer SNN with an isotropic Gaussian distribution over its \changed{parameters}, which is generally the default choice for priors in Bayesian learning \citep{fortuin2022priors}. As it is possible to observe in the figure, the sampled functions tend to form straight horizontal lines; this is a well-known pathology of neural networks, which becomes more and more exacerbated with deep architectures \citep{duvenaud2014avoiding},
Instead, one may want to consider a prior model that reflects certain characteristics that we expect from the underlying true function, e.g., smoothness or oscillatory behavior. Such characteristics are commonly encoded via a Gaussian process with a given mean and kernel \citep{tran2022all}, as shown for instance in Figure \ref{fig:PriorTuning_GMM}, where we plot samples from a GP with zero mean and the RBF kernel \citep{mackay1996bayesian}.
We now demonstrate how our framework can be used to encode an informative functional prior, represented as a GP, into a SNN. In particular, in what follows, we assume centered Mean-Field Gaussian priors on the \changed{parameters}, i.e., $p_\vw=\Ndist(\vzero,\diag(\psi))$, where  $\psi$ is a vector of possibly different parameters representing the variance of each weight.
Our objective is to optimize $\psi$ to minimize the 2-Wasserstein distance between the distributions of the GP and the SNN at a finite set of input points, which we do by following the approach described in Section \ref{sec:differentiability}. For the low-dimensional 1D regression problem, we use uniformly sampled input points. For the high-dimensional UCI problems, we use the training sets augmented with noisy samples. We solve the optimization problem as described in Section \ref{sec:differentiability}, setting $\beta$ to $0.01$, using mini-batch gradient descent with randomly sampled batches.

\begin{table}[htbp]
  \centering
    {\small
    \begin{tabular}{ll|rrr}
    \makecell[l]{Length-Scale \\ RBF Kernel} & \makecell[l]{Network \\ Architecture} & \makecell[c]{Uninformative \\ Prior} & \makecell[c]{GP Induced Prior \\ \citep{tran2022all}} & \makecell[c]{GP Induced Prior \\ (Ours)} \\
    \midrule
    1     & [2x64] & 0.97  &  0.30     & 0.25 \\
          & [2x128] & 0.46  & 0.30  & 0.25 \\ \midrule
    0.75  & [2x64] & 1.09  & 0.39  & 0.32 \\
          & [2x128] & 0.56  &  0.39     & 0.31 \\ \midrule
    0.5   & [2x64] & 1.23  & 0.53  & 0.47 \\
          & [2x128] & 0.69  &   0.53    & 0.43 \\ \bottomrule
    \end{tabular}
    }
    \caption{The empirical estimates of the relative 2-Wasserstein distance between the prior distributions of zero-mean GPs with the RBF kernel and SNNs with uninformative or GP-induced priors over the \changed{parameters}, at a subset of $20$ points from the test set. The GP-induced prior are either obtained via Algorithm \ref{al:SNN2GMM} with signatures of size $10$ and compression size (\changed{$M_k$}) of $1$ \changed{for all layers} as described in Section \ref{sec:differentiability}, or via the method in \cite{tran2022all}.}
  \label{tab:prior1d_EmpRelW2}
\end{table}

\paragraph{1D Regression Benchmark}
We start our analysis with the 1D example in Figure \ref{fig:exp1dPriorTuning}, where we compare our framework with the state-of-the-art approach for prior tuning by \cite{tran2022all}. Figure \ref{fig:PriorTuning_SNN} demonstrates the ability of our approach to encode the oscillating behavior of the GP in the prior distribution of the SNN. In contrast, the SNN prior obtained using the method in \cite{tran2022all} in Figure \ref{fig:PriorTuning_SNN_bench} is visually less accurate in capturing the correlation imposed by the RBF kernel of the GP. 
The performance difference can be explained by the fact that \cite{tran2022all} optimize the 1-Wasserstein distance between the GP and SNN, which relates to the closeness in the first moment (the mean), as they rely on its dual formulation. In contrast, we optimize for the 2-Wasserstein distance, which relates to the closeness in the second moment, including both mean and correlations (see Lemma \ref{lemma:1WasserBounds12Moment}). 
In Table \ref{tab:prior1d_EmpRelW2}, we investigate the quantitative difference in relative 2-Wasserstein distance between the prior distributions of the GP and SNN in Figure \ref{fig:exp1dPriorTuning}, among other network architectures and GP settings. 
The results show that our method consistently outperforms \cite{tran2022all} on all considered settings.

\paragraph{UCI Regression Benchmark}\label{sec:expPriorTuningUCI}
We continue our analysis on several UCI regression data sets to investigate whether SNNs with an informative prior lead to improved posterior performance compared to uninformative priors.
We evaluate the impact of an informative prior on the predictive performance of SNNs using the root-mean-square error (RMSE) and negative log-likelihood (NLL) metrics. The RMSE solely measures the accuracy of the mean predictions, whereas the NLL evaluates whether the predicted distribution matches the actual distribution of the outcomes, taking into account the uncertainty of the predictions. Lower RMSE and NLL values indicate better performance. 
Figure \ref{fig:uci} shows that for all data sets, the GP outperforms the SNN with an uninformative prior both in terms of mean and uncertainty accuracy. 
Encoding the GP prior in the SNN greatly improves its predictive performance, bringing it closer to that of the GP, with the remaining gap explained by the VI approximation error of the posterior distribution.

\begin{figure}[htbp]
    \centering
    \includegraphics[width=0.85\textwidth]{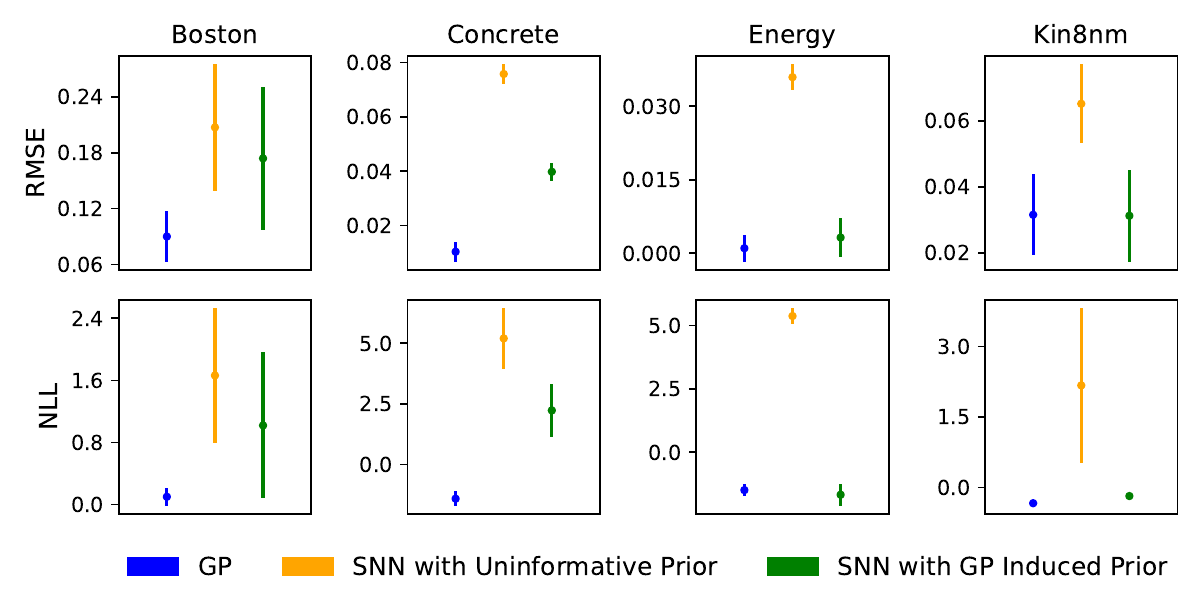}
    \caption{Report of the predictive performance of a GP with zero-mean and RBF kernel, and a $[2\times 128]$ SNN with tanh activation function on several UCI regression data sets in terms of the root-mean-squared error (RMSE) and negative log-likelihood (NLL). 
    The dots and bars represent the means and standard deviations over the 10 random test splits of the data sets, respectively.
    The GP results are shown in blue, the SNN results with an uninformative isotropic Gaussian prior are in yellow, and the results for the SNN with the GP-induced informative priors obtained via our approach are in green.
    }
    \label{fig:uci}
\end{figure}

\section{Conclusion}
We introduced a novel algorithmic framework to approximate the input-output distribution of a SNN with a GMM, providing bounds on the approximation error in terms of the Wasserstein distance. 
Our framework relies on techniques from quantization theory and optimal transport, and is based on a novel approximation scheme for GMMs using signature approximations. 
We performed a detailed theoretical analysis of the error introduced by our GMM approximations and complemented the theoretical analysis with extensive empirical validation, showing the efficacy of our methods in both regression and classification tasks, and for various applications, including prior selection and uncertainty quantification. 
\changed{An interesting future direction is to employ the framework in safety-critical control, planning, or decision-making applications, where our method critically enables principled uncertainty quantification.}
We should stress that in this paper we focused on finding GMM approximations of neural networks at a finite set of input points, thus also accounting for the correlations among these points. While reasoning for a finite set of input points is standard in stochastic approximations \citep{yaida2020non,basteri2022quantitative}
and has many applications, 
for some other applications, such as adversarial robustness of Bayesian neural networks \citep{wicker2020probabilistic}, one may require approximations that are valid for compact sets of input points. We believe this is an interesting future research question.

\acks{This work was supported in part by the NSF grant 2039062.
}

\appendix

\section{Properties of the Wasserstein Distance}\label{append:wasserGeneralProp}
\changed{The results in this work rely on several less commonly reported properties of the Wasserstein distance, which are recalled below.}
\changed{
\begin{lemma}[Closeness in \(\wasserstein_\rho\) Implies Closeness in \(\rho\)-Moments]\label{lemma:1WasserBounds12Moment}
    For \(p, q \in \probMeas_\rho(\eucl^n)\) and \(\rho \in \natNum\), it holds that
    \begin{equation*}
        \left| \expect_{\vx \sim p}[\|\vx\|^\rho]^{1/\rho} - \expect_{\vz \sim q}[\|\vz\|^\rho]^{1/\rho} \right|
            \leq \wasserstein_\rho(p, q).
    \end{equation*}
\end{lemma}
\begin{proof} 
     The result follows directly from Proposition~7.29 in \cite{villani2009optimal}, applied to the $1$-Lipschitz function \(\phi(\vx) = \|\vx\|\).
\end{proof}
}

\changed{
\begin{lemma}[Translation and Rotation Invariance of \(\wasserstein_\rho\)]\label{lemma:wass_invariance_orthogonal_trans}
    Let $\tau:\realNum^n\to\realNum^n$ be an isometry of \(\realNum^n\), i.e., $\tau(\vx)=\mR\vx+\vb$ for some orthogonal matrix $\mR\in\realNum^{n\times n}$ and vector $\vb\in\realNum^n$. Then, for \(\rho\in\natNum\) and $p,q\in\probMeas(\realNum^n)$, we have
    \begin{equation*}
        \wasserstein_\rho(\tau\#p,\tau\#q) = \wasserstein_\rho(p,q).
    \end{equation*}
\end{lemma}
\begin{proof}
    Since \(\tau\) preserves the Euclidean norm, \(\|\tau(\vx) - \tau(\vz)\| = \|\vx - \vz\|\), any coupling \(\gamma\in\Gamma(p,q)\) is mapped to a coupling \((\tau \times \tau)\#\gamma \in \Gamma(\tau\#p,\tau\#q)\) with the same transport cost. The optimal value is therefore unchanged.
\end{proof}
}

\begin{lemma}[\(\wasserstein_2\) for Product Distributions]\label{lem:wasser2ViaMarginals}
    Let \(p,q \in \probMeas(\eucl^n) \) be joint distributions that are independent across dimensions. 
    Then,
    \begin{equation*}
        \wasserstein_2^2(p, q) = \sum_{l=1}^n \wasserstein_2^2\left(\elem{p}{l}, \elem{q}{l}\right),
    \end{equation*}
    where \( \elem{p}{l} \) and \( \elem{q}{l} \) denote the marginals of \( p \) and \( q \) on the \( l \)-th dimension. 
\end{lemma}
\begin{proof}
    \changed{
    See \cite[Section~2, fourth bullet point]{panaretos2019statistical}.
    }
\end{proof}

\begin{lemma}[\(\wasserstein_2\) Distance for Mixture Distributions]\label{lem:wasser4Mixtures}
    Let \(p,q\in\probMeas(\realNum^n)\) be mixture distributions, i.e., \(p = \sum_{i=1}^M\elem{\vpi_p}{i} p_i\) and \(q = \sum_{j=1}^N\elem{\vpi_q}{j}q_j\), where \(\vpi_p\in\Pi_M\), \(\vpi_q\in\Pi_N\), and \( p_i, q_j \in \probMeas(\realNum^n)\). Then,
    \begin{equation*}
        \wasserstein_2^2(p, q) \leq \inf_{\bar{\vpi}\in\Gamma(\vpi_p, \vpi_q)} \sum_{i=1}^M \sum_{j=1}^N \elem{\bar{\vpi}}{i,j}  \wasserstein_2^2(p_i, q_j),
    \end{equation*}
    where \(\Gamma(\vpi_p, \vpi_q)\subset\Pi_{M\times N}\) denotes the set of couplings between \(\vpi_p\) and \(\vpi_q\).
\end{lemma}
\begin{proof}
    Consider any coupling $\bar\vpi \in \Gamma(\vpi_p,\vpi_q)$, and for each pair $(i,j)$ let $\gamma_{i,j}$ be an optimal
    coupling between $p_i$ and $q_j$. Then the mixture
    \(
    \gamma = \sum_{i,j} \bar\pi_{i,j} \gamma_{i,j},
    \)
    is a valid coupling between $p$ and $q$ with transport cost
    \(\sum_{i,j} \bar\pi_{i,j} W_2^2(p_i,q_j)\). Taking the infimum over all \( \bar{\vpi} \in \Gamma(\vpi_p, \vpi_q) \) completes the proof.
\end{proof}

\changed{
\begin{lemma}[$\wasserstein_\rho$ Distance under Lipschitz Parametrized Stochastic Operations]\label{lemma:Wasserstein4StochLayer}
    Let $\rho\in\natNum$, $L^{\vw}:\eucl^n\rightarrow\eucl^m$ be a $\lipschitz_{L^\vw}$-Lipschitz continuous function parametrized by $\vw\in\realNum^d$, and let $\vw$ be distributed according to $p_\vw\in\probMeas_\rho(\realNum^d)$. Then, for any $p, q\in \probMeas_\rho(\eucl^n)$, it holds that 
    \begin{equation*}
        \wasserstein_\rho^\rho\left(\expect_{\vx\sim p}\left[L^\vw(\vx)\#p_\vw\right],\expect_{\vz\sim q}\left[L^\vw(\vz)\#p_\vw\right]\right)
        \leq\expect_{\vw\sim p_\vw}\left[\lipschitz_{L^{\vw}}^\rho\right]\wasserstein_\rho^\rho(p,q). 
    \end{equation*}
\end{lemma}
\begin{proof}
    Since \(\|L^\vw(\vx)-L^\vw(\vz)\|\leq\lipschitz_{L^\vw}\|\vx-\vz\|\), it follows that
    \begin{align*}
        \wasserstein_\rho^\rho\left(\expect_{\vx\sim p}[L^\vw(\vx)\#p_\vw],\expect_{\vz\sim q}[L^\vw(\vz)\#p_\vw]\right)
        &= \inf_{\gamma\in\Gamma(p_\vw, p, q)}\expect_{(\vw,\vx,\vz)\sim\gamma}\left[\|L^\vw(\vx)-L^\vw(\vz)\|^\rho\right]\\
        &\leq \inf_{\gamma\in\Gamma(p_\vw, p, q)}\expect_{(\vw,\vx,\vz)\sim\gamma}\left[\lipschitz_{L^\vw}\|\vx-\vz\|^\rho\right].
    \end{align*}
    Using that $p_\vw \times \Gamma(p,q) \subset \Gamma(p_\vw,p,q)$, we obtain
    \begin{align*}
        \inf_{\gamma\in\Gamma(p_\vw, p, q)}\expect_{(\vw,\vx,\vz)\sim\gamma}\left[\lipschitz_{L^{\vw}}^\rho \|\vx-\vz\|^\rho\right]
        &\leq \expect_{\vw\sim p_\vw}\left[\lipschitz_{L^\vw}^\rho\right]\inf_{\gamma\in\Gamma(p, q)}\expect_{(\vx,\vz)\sim\gamma}\left[\|\vx-\vz\|^\rho\right] \\
        &= \expect_{\vw\sim p_\vw}\left[\lipschitz_{L^\vw}^\rho\right] \wasserstein_\rho^\rho(p, q),
    \end{align*}
    which concludes the proof.
\end{proof}
}

\section{Proofs}
We present the proofs for all results discussed in the main text of this work ordered per section.

\subsection{Proofs of the Results in Section \ref{sec:signatures}}\label{append:wasser4Signatures}
\subsubsection*{Proof of Proposition \ref{prop:Was4SignStdGaus}: $\wasserstein_2$-Error of the Signature of a Standard Gaussian}\label{prop;ExpecteSquaredEuclNorm}
    From Proposition~\ref{prop:OTSignature}, we have that
    \begin{align*}
        \wasserstein_2^2(\signature_{\sC}\#\Ndist(0,1),\Ndist(0,1)) &= 
        \sum_{i=1}^{N}\evpi{i}\expect_{z\sim \Ndist(0,1)}\left[(\vz-c_i)^2\mid z\in[l_i, u_i]\right].
    \end{align*}
    where 
    $$
        \evpi{i}=\prob_{z\sim\Ndist(0,1)}[z\in[l_i,u_i]] =\Phi(u_i)-\Phi(l_i).
    $$ 
    Using the closed-form expression for the mean $\mu_i\expect_{z\sim \Ndist(0,1)}[z\mid z\in[l_i, u_i]]$ and variance $\nu_i=\expect_{z\sim \Ndist(0,1)}\left[z^2\mid z\in[l_i, u_i]\right] - \mu_i^2$ of a standard univariate Gaussian random variable truncated on $[l_i, u_i]$ as given by Eqn~\eqref{eq:Was4SignStdGausTerms}, respectively, and derived in Section~3 of \cite{manjunath2021moments}, we can write the conditional expectation term as:
    \begin{align}
      &\expect_{z\sim \Ndist(0,1)}\left[(\vz-c_i)^2\mid z\in[l_i, u_i]\right] \nonumber \\
      &\qquad= \expect_{z\sim \Ndist(0,1)}\left[\vz^2\mid z\in[l_i, u_i]\right] - 2c_i\expect_{z\sim \Ndist(0,1)}\left[\vz\mid z\in[l_i, u_i]\right] + c_i^2 \nonumber \\
      &\qquad= \nu_i + \mu_i^2 - 2c_i \mu_i + c_i^2 \nonumber \\
      &\qquad= \nu_i + (\mu_i - c_i)^2. \label{eq:2ndMomentTruncGauss}
    \end{align}
    Substituting this expression into the original sum concludes the proof. 
\qed

\subsubsection*{Proof of Corollary~\ref{corol:Was4SignMultGaus}: \(\wasserstein_2\)-Error of the Signature of a Multivariate Gaussian}
    \changed{
    Let $\tau:\realNum^n\to\realNum^n$ denote the transformation defined by $\tau(\vx)=\mV^T(\vx-\vm)$, where \(\mV\) and \(\vm\) are, respectively, the (orthogonal) eigenvector matrix and mean of the Gaussian distribution. By Lemma~\ref{lemma:wass_invariance_orthogonal_trans}, the $2$-Wasserstein is invariant under translations and orthogonal transformations, and thus
    \begin{align*}
        \wasserstein^2_2(\signature_\sC\#\Ndist(\mNdist,\vNdist),\Ndist(\mNdist,\vNdist))=
        \wasserstein^2_2(\tau\#\signature_\sC\#\Ndist(\mNdist,\vNdist),\tau\#\Ndist(\mNdist,\vNdist)).
    \end{align*}
    Since, \(\Sigma=\mV\diag(\vlambda)\mV^T\), it holds that
    \[
        \tau\#\Ndist(\mNdist,\vNdist) = \Ndist(\vzero,\diag(\vlambda)) = \diag(\vlambda)^{1/2}\#\Ndist(\vzero,\mI),
    \]
    where, with a slight abuse of notation, we write \(\diag(\vlambda)^{1/2}\#p\) to denote the pushforward under the linear map \(\vx\mapsto\diag(\vlambda)^{1/2}\vx\).
    Next, consider the pushforward of the signature:
    \begin{align*}
        \tau\#\signature_\sC\#\Ndist(\mNdist,\vNdist)&=\signature_{\postImage(\sC-\{\vm\}, \mV^T)}\#\tau\#\Ndist(\mNdist,\vNdist) \\
        &=\signature_{\postImage(\sC-\{\vm\}, \mV^T)}\#\diag(\lambda)^{1/2}\#\Ndist(\vzero,\mI) \\
        &=\diag(\lambda)^{1/2}\#\signature_{\postImage(\sC-\{\vm\}, \mT)}\#\Ndist(\vzero,\mI) \\
        &=\diag(\lambda)^{1/2}\#\signature_{\sC^1\times \sC^2\times\hdots\times\sC^n}\#\Ndist(\vzero,\mI),
    \end{align*}
    where the final equality holds due to the axis-aligned grid assumption in \eqref{eq:AxisAlignedGridAssumption}. 
    Since both terms are product distributions, we can apply Lemma~\ref{lem:wasser2ViaMarginals}, yielding
    \begin{align*}
        \wasserstein^2_2(\tau\#\signature_\sC\#\Ndist(\mNdist,\vNdist),\tau\#\Ndist(\mNdist,\vNdist)) &= \sum_{l=1}^n \wasserstein_2^2\left(\sqrt{\evlambda{l}}\#\Ndist(0,1),\sqrt{\evlambda{l}}\#\signature_{\sC^l}\#\Ndist(0,1)\right)\\
        &= \sum_{l=1}^n \wasserstein_2^2\evlambda{l}\left(\Ndist(0,1),\signature_{\sC^l}\#\Ndist(0,1)\right), 
    \end{align*}
    where the final equality follows from the homogeneity of the $2$-Wasserstein distance. This completes the proof.
    }
\qed

\changed{
\subsubsection*{Proof of Proposition~\ref{prop:Was4SignStdGausUniformGrid}: }
    If $N=1$, Eqn~\eqref{eq:uniform1DGrid} gives $\sC=\{0\}$ and Proposition~\ref{prop:OTSignature} yields
    $$
        \wasserstein_2^2(\signature_{\{0\}}\#\Ndist(0,1),\Ndist(0,1)) = \int z^2 \, \phi(dz)=1 = \frac{4\log 1}{1^2} + r(1) = r(1),
    $$
    since $\log 1=0$ and $r(1)=1$ by Eqn~\eqref{eq:uniformGridRemainder}, and where $\phi$ is the pdf of a standard (univariate) Gaussian distribution. Hence the bound holds for $N=1$. For the remainder of the proof assume $N\geq2$. Let $\eta=\frac{2\sqrt{\log N}}{N}$ and $\sC=\{c_i\}_{i=1}^N$ with $c_i= \eta(2i - N -1)$. The induced partition (Eqn~\ref{eq:VoronoiPartitionOfSignature}) is
    $$
        \sR_1=\left[-\infty, c_1 + \eta \right], \quad \sR_i=\left[c_i-\eta, c_i+\eta\right], \quad \forall i\in\{2,\hdots,N-1\}, \quad \sR_N=\left[c_N - \eta,\infty\right].
    $$
    By Proposition~\ref{prop:OTSignature}, we have
    \begin{align*}
        &\wasserstein_2^2(\signature_\sC\#\Ndist(0,1),\Ndist(0,1)) \\
        &\qquad=\sum_{i=2}^{N-1} \int_{c_i-\eta}^{c_i+\eta}(z-c_i)^2\phi(dz) + \int_{-\infty}^{c_1+\eta}(z-c_1)^2\phi(dz) + \int_{c_N-\eta}^{\infty}(z-c_N)^2\phi(dz)\\
        &\qquad=\sum_{i=1}^{N} \int_{c_i-\eta}^{c_i+\eta}(z-c_i)^2\phi(dz) + \int_{-\infty}^{-\eta N}(z-c_1)^2\phi(dz) + \int_{\eta N}^{\infty}(z-c_N)^2\phi(dz),
    \end{align*}
    where in the last step we split the unbounded regions and used $c_1=-\eta N+\eta$, $c_N=\eta N-\eta$.

    Bounded integrals: since $|z-c_i|\leq\eta$ for all $z\in\sR_i$,
    \begin{align*}
        \sum_{i=1}^{N} \int_{c_i-\eta}^{c_i+\eta}(z-c_i)^2\phi(dz)
        &\leq \eta^2 \sum_{i=1}^{N}[\Phi(c_i+\eta) - \Phi(c_i-\eta)]\\
        &=\eta^2\left[\Phi\left(\eta N\right) - \Phi\left(-\eta N\right)\right]
        =\eta^2\left[2\Phi\left(\eta N\right) - 1\right],
    \end{align*}
    where the last equalities follow from symmetry and telescoping of the CDF terms. 
    
    Unbounded integrals: by symmetry ($c_1=-c_N$),
    \begin{align*}
        &\int_{-\infty}^{-\eta N}(z-c_1)^2\phi(dz) + \int_{\eta N}^{\infty}(z-c_N)^2\phi(dz)
        =2\int_{\eta N}^{\infty}(z-c_N)^2\phi(dz).
    \end{align*}
    For $z\geq \eta N$, $z-c_N=z - \eta N + \eta \leq z$, hence
    \begin{align*}
        \int_{\eta N}^{\infty}(z-c_N)^2\Ndist(dz\mid 0,1)
        \leq  \int_{\eta N}^{\infty}z^2\Ndist(dz\mid 0,1)
        = [1-\Phi(\eta N)] + \eta N \phi(\eta N),
    \end{align*}
    using the truncated second-moment identity for standard Gaussians (Eqn~\ref{eq:2ndMomentTruncGauss}).

    Combining and denoting $L=\eta N=2\sqrt{\log N}$,
    \begin{align*}
        \wasserstein_2^2(\signature_\sC\#\Ndist(0,1),\Ndist(0,1))
        &\leq \eta^2\left[2\Phi\left(L\right) - 1\right] + 2[1-\Phi(L)] + 2L \phi(L)\\
        &= -2\eta^2\left[1 - \Phi\left(L\right)\right] + \eta^2 + 2[1-\Phi(L)] + 2L \phi(L)\\
        &=2(1- \eta^2)\left(1 - \Phi\left(L\right)\right) + \eta^2 + 2L\phi\left(L\right) \\
        &=2\left(1- \frac{L^2}{N^2}\right)\left(1 - \Phi\left(L\right)\right) + \frac{L^2}{N^2} + 2L\phi\left(L\right)\tag{$*$}
    \end{align*}
    By Mills' ratio $1-\Phi(L) \leq \frac{\phi(L)}{L}$ and $0\leq \frac{L^2}{N^2} \leq 1$, we bound the first term in $(*)$ by $2\frac{\phi(L)}{L}$, yielding
    \begin{align*}
        (*) &\leq 2\frac{\phi(L)}{L} + \frac{L^2}{N^2} + 2L\phi(L) = 2\phi(L)\left(L + \frac{1}{L}\right) + \frac{L^2}{N^2}. 
    \end{align*}
    With $L=2\sqrt{\log N}$ we have $\phi(L)=\frac{1}{\sqrt{2\pi}}e^{-L^2/2}=\frac{1}{\sqrt{2\pi}}e^{-2\log N}=\frac{1}{N^2\sqrt{2\pi}}$, so
    \[
        2\phi(L)\left(L+\tfrac{1}{L}\right)= \frac{4}{N^{2}\sqrt{2\pi}}\left(\sqrt{\log N}+\frac{1}{4\sqrt{\log N}}\right).
    \]
    Therefore
    \[
        \wasserstein_2^2(\signature_\sC\#\Ndist(0,1),\Ndist(0,1)) \leq \frac{4\log N}{N^2} + \frac{4}{N^{2}\sqrt{2\pi}}\left(\sqrt{\log N}+\frac{1}{4\sqrt{\log N}}\right) = \frac{4\log N}{N^2} + r(N).
    \]
    This completes the proof.
\qed
}

\changed{
\subsubsection*{Proof of Corollary~\ref{corol:WasConvergence4SignMixtureGaus}}
    From Eqn~\eqref{eq:Was4SignMixMultGaus}, the $2$-Wasserstein error of the component-wise signature of a Gaussian mixture is bounded by the mixture-weighted sum of the signature errors of its components:
    \[
        \wasserstein_2^2\big(p,\bm\signature_{\{\sC_i\}_{i=1}^M}\#p\big) \leq \sum_{i=1}^M \tilde{\pi}_i\, \wasserstein_2^2\big(\Ndist(m_i,\Sigma_i), \signature_{\sC_i}\#\Ndist(m_i,\Sigma_i)\big).
    \]
    For each component $i$, diagonalize $\Sigma_i=\mV_i\diag(\vlambda_i)\mV_i^T$ and apply Corollary~\ref{corol:Was4SignMultGaus} together with the axis-aligned grid assumption \eqref{eq:AxisAlignedGridAssumption} to obtain
    \[
        \wasserstein_2^2\big(\Ndist(m_i,\Sigma_i), \signature_{\sC_i}\#\Ndist(m_i,\Sigma_i)\big)=\sum_{j=1}^n \lambda_{i,j}\, \wasserstein_2^2\big(\Ndist(0,1), \signature_{\sC_i^j}\#\Ndist(0,1)\big).
    \]
    Proposition~\ref{prop:Was4SignStdGausUniformGrid} applied to each one-dimensional uniform grid of points $\sC_i^j$ (of size $N_{i,j}$) yields
    \[
        \wasserstein_2^2\big(\Ndist(0,1), \signature_{\sC_i^j}\#\Ndist(0,1)\big) \leq \frac{4\log N_{i,j}}{N_{i,j}^2} + r(N_{i,j}),
    \]
    with $r(\cdot)$ given in Eqn~\eqref{eq:uniformGridRemainder}. Combining the three displays finishes the proof:
    \[
        \wasserstein_2^2\big(p,\bm\signature_{\{\sC_i\}_{i=1}^M}\#p\big) \leq \sum_{i=1}^M \tilde{\pi}_i \sum_{j=1}^n \lambda_{i,j}\left(\frac{4\log N_{i,j}}{N_{i,j}^2}+ r(N_{i,j})\right).\qed
    \]
}

\subsection{Proofs of the Results in Section \ref{sec:approxPredPostByGMM}}\label{append:wasser4SNNs}
\changed{
\subsubsection*{Proofs of the \(\wasserstein_2\) Distance Bounds in Table~\ref{tab:wasser4layers}}
The bound for the signature and compression operations directly follows from the triangle inequality. 
The bound for the affine transformation is a direct application of Lemma~\ref{lemma:Wasserstein4StochLayer}, as is the bound on \(\wasserstein(\act\#p,\act\#q)\).

To derive the bound on \(\wasserstein^2_2(\act\#p,\act\#\signature_\sC\#p)\), we invoke Proposition~\ref{prop:OTSignature}, which yields
\begin{align*}
    \wasserstein_\rho^\rho(\act\#p, \act\#\signature_\sC\#p) = \sum_{i=1}^N \evpi{i}\expect_{\vz\sim p}[\|\act(\vz)-\act(\vc_i)\|^\rho\mid \vz \in \sR_i].
\end{align*}
Since the activation function $\act$ is $\lipschitz_{\act\mid\sR_i}$-Lipschitz on region $\sR_i$, we have $\|\act(\vz) - \act(\vc_i) \| \leq \lipschitz_{\act\mid\sR_i}\|\vz-\vc_i\|$ for all  $\vz,\vc_i\in\sR_i$. Substituting this bound gives
\begin{align*}
    \sum_{i=1}^N \evpi{i}\expect_{\vz\sim p}[\|\act(\vz)-\act(\vc_i)\|^\rho\mid \vz \in \sR_i] &\leq \sum_{i=1}^N \lipschitz_{\act\mid\sR_i}^\rho\evpi{i}\expect_{\vz\sim p}[\|\vz-\vc_i\|^\rho\mid \vz \in \sR_i].
\end{align*} 
}

\changed{
\subsubsection*{Proof of Theorem~\ref{thm:WasserNetwork}: $\wasserstein_2$-Error of the GMM Approximation of a SNN at a Single Input Point}
    We prove Theorem~\ref{thm:WasserNetwork} via induction and generalize the result to the $\rho$-Wasserstein distance with $\rho\in\{1,2\}$. To do so, we first derive the base case of the induction.
    For $k\in\{1,\hdots,K\}$,  we have that:
    \begin{align*}
        &\wasserstein_\rho\left(p_{nn(\vx),k+1},q_{nn(\vx),k+1}\right) \\ 
        &\qquad\textrm{(By Eqns \ref{eq:p_nn(x),k} \& \ref{eq:q_nn(x),k})}\\
        &=\wasserstein_\rho\left(\expect_{\state_k\sim p_{nn(\vx),k}}\left[L^{\vw_k}(\act(\state_k))\#p_{\vw_k}\right],\expect_{\Tilde{\state}_k\sim \compress_{\changed{M_k}}(q_{nn(\vx),k})}\left[L^{\vw_k}(\act(\signature_{\sC_k}(\Tilde{\state}_k)))\#p_{\vw_k}\right]\right) \\
        &\qquad\textrm{(By the bound for the affine operation in Table~\ref{tab:wasser4layers})} \\
        &\leq\expect_{\vw_k\sim p_{\vw_k}}\left[\lipschitz_{L^{\vw_k}}^\rho\right]^{1/\rho}\wasserstein_\rho\left(\act\#p_{nn(\vx),k},\act\#\signature_{\sC_k}\#\compress_{\changed{M_k}}(q_{nn(\vx),k})\right) \\
        &\qquad \textrm{(By the triangle inequality)} \\
        &\leq\expect_{\vw_k\sim p_{\vw_k}}\left[\lipschitz_{L^{\vw_k}}^\rho\right]^{1/\rho}\left[\wasserstein_\rho\left(\act\#p_{nn(\vx),k},\act\#q_{nn(\vx),k}\right) + \wasserstein_\rho\left(\act\#q_{nn(\vx),k},\act\#\compress_{\changed{M_k}}(q_{nn(\vx),k})\right) \right.
        \\ &\qquad
        \left.+\wasserstein_\rho\left(\act\#\compress_{\changed{M_k}}(q_{nn(\vx),k}),\act\#\signature_{\sC_k}\#\compress_{\changed{M_k}}(q_{nn(\vx),k})\right)\right] \\
        &\qquad\textrm{(By Lemma~\ref{lemma:Wasserstein4StochLayer}, i.e., applying the bound for the activation operation in Table~\ref{tab:wasser4layers})}\\
        &\leq\expect_{\vw_k\sim p_{\vw_k}}\left[\lipschitz_{L^{\vw_k}}^\rho\right]^{1/\rho}\lipschitz_\act\left[\wasserstein_\rho\left(p_{nn(\vx),k},q_{nn(\vx),k}\right) + \wasserstein_\rho\left(q_{nn(\vx),k},\compress_{\changed{M_k}}(q_{nn(\vx),k})\right)\right.
        \\ &\qquad
        \left.+\wasserstein_\rho\left(\compress_{\changed{M_k}}(q_{nn(\vx),k}),\signature_{\sC_k}\#\compress_{\changed{M_k}}(q_{nn(\vx),k})\right)\right] \\
        &\qquad \textrm{(By the definition of the $\mw_2$-distance, and because $\wasserstein_1\leq\wasserstein_2$)} \\
        &\leq\expect_{\vw_k\sim p_{\vw_k}}\left[\lipschitz_{L^{\vw_k}}^\rho\right]^{1/\rho}\lipschitz_\act\left[\wasserstein_\rho\left(p_{nn(\vx),k},q_{nn(\vx),k}\right) + \mw_2\left(q_{nn(\vx),k},\compress_{\changed{M_k}}(q_{nn(\vx),k})\right)\right.
        \\ &\qquad
        \left.+\wasserstein_\rho\left(\compress_{\changed{M_k}}(q_{nn(\vx),k}),\signature_{\sC_k}\#\compress_{\changed{M_k}}(q_{nn(\vx),k})\right)\right] \\
        &\qquad \textrm{(By Eqn~\ref{eq:Was4SignMixMultGaus} and because $\wasserstein_1\leq\wasserstein_2$)} \\
        &\leq\expect_{\vw_k\sim p_{\vw_k}}\left[\lipschitz_{L^{\vw_k}}^\rho\right]^{1/\rho}\lipschitz_\act\left[\wasserstein_\rho\left(p_{nn(\vx),k},q_{nn(\vx),k}\right) + \mw_2\left(q_{nn(\vx),k},\compress_{\changed{M_k}}(q_{nn(\vx),k})\right)\right.
        \\ &\qquad
        \left.+\hat\wasserstein_{2,\bm\signature_{\bm\sC_k}}\left(\compress_{M_k}(q_{nn(\vx),k})\right)\right]
    \end{align*}
    Hence, if \(\wasserstein_\rho\left(p_{nn(\vx),k},q_{nn(\vx),k}\right)\leq \hat\wasserstein_{\rho,k}\), then
    \begin{align*}
        \wasserstein_\rho\left(p_{nn(\vx),k+1},q_{nn(\vx),k+1}\right) 
        &\leq\expect_{\vw_k\sim p_{\vw_k}}\left[\lipschitz_{L^{\vw_k}}^\rho\right]^{1/\rho}\lipschitz_\act\left[\hat{\wasserstein}_{\rho,k} + \mw_2\left(q_{nn(\vx),k},\compress_{\changed{M_k}}(q_{nn(\vx),k})\right)\right.
        \\ &\qquad
        \left.+\hat\wasserstein_{2,\bm\signature_{\bm\sC_k}}\left(\compress_{M_k}(q_{nn(\vx),k})\right)\right] = \hat{\wasserstein}_{\rho,k+1}
    \end{align*}
    The assumption is naturally satisfied for $k=1$, such that by induction, it holds that
    $$
        \wasserstein_\rho\left(p_{nn(\vx)},q_{nn(\vx)}\right)\leq\Hat{\wasserstein}_{\rho,K+1}.
    $$
\qed
}

\subsubsection*{Proof of Corollary~\ref{corol:WasserNetworkSet}: $\wasserstein_2$-Error of the GMM Approximation of a SNN at a Set of Input Points}
\changed{
    The corollary follows directly from Theorem~\ref{thm:WasserNetwork} by replacing $L^{\vw_k}$ with the stacked affine operation 
    $$
        \bm{L}^{\vw_k}\left((\vz_1^T,\hdots,\vz_D^T)^T\right) = \left(L^{\vw_k}(\vz_1)^T,\hdots, L^{\vw_k}(\vz_D)^T\right)^T,
    $$
    and noting that the Lipschitz constant of $\bm{L}^{\vw_k}$ can be taken equal to the Lipschitz constant of $L^{\vw_k}$.
    To see the latter, note that for any $\bar\vz = \left(\vz_1^T, \ldots, \vz_D^T\right)^T$ and $\bar\vz' = \left(\vz_1'^T, \ldots, \vz_D'^T\right)^T$, we have:
    \begin{align*}
        \|\bm{L}^{\vw_k}(\bar\vz) - \bm{L}^{\vw_k}(\bar\vz')\|^2 
        = \sum_{i=1}^D \|L^{\vw_k}(\vz_i) - L_{\vw_k}(\vz_i')\|^2 
        \leq \sum_{i=1}^D \lipschitz_{L_{\vw_k}}^2 \|\vz_i - \vz_i'\|^2 
        = \lipschitz_{L_{\vw_k}}^2 \|\bar\vz - \bar\vz'\|^2.
    \end{align*}
    Therefore,
    $
        \expect_{\vw_k\sim p_{\vw_k}}\left[ \lipschitz_{\bm{L}^{\vw_k}}^2\right]^{1/2} \leq \expect_{\vw_k\sim p_{\vw_k}}\left[ \lipschitz_{L^{\vw_k}}^2\right]^{1/2}.
    $
    and the same bound from Theorem~\ref{thm:WasserNetwork} applies to the set-valued input case.
\qed
}

\changed{
\subsubsection*{Proof of Theorem~\ref{theorem:ConvergenceWasserBound}: Convergence of the \(\wasserstein_\rho\)-Error of the GMM Approximation of a SNN}
    We prove the theorem for the $\rho$-Wasserstein with $\rho \in 1,2$. 
    Let us denote 
    $$
        \hat\wasserstein_{2,\compress_{M_k}}=\mw_2\left(q_{nn(\sX),k}, \compress_{\changed{M_k}}(q_{nn(\sX),k})\right), \quad \text{and} \quad
        \hat\wasserstein_{2,\bm\signature_{\bm\sC_k}}=\hat\wasserstein_{2,\bm\signature_{\bm\sC_k}}\left(\compress_{\changed{M_k}}(q_{nn(\sX),k})\right).
    $$ 
    From Corollary~\ref{corol:WasserNetworkSet}: 
    \begin{align*}
        &\wasserstein_\rho(p_{nn(\sX)}, q_{nn(\sX)}) \\
        &\qquad\leq \lipschitz_K\left[\hat\wasserstein_{\rho,K} +  \hat\wasserstein_{2,\compress_{M_K}}+\hat\wasserstein_{2,\bm\signature_{\bm\sC_K}} \right]\\
        &\qquad=\lipschitz_{K-1}\lipschitz_K\left[\wasserstein_{\rho,K-1} + \hat\wasserstein_{2,\compress_{M_{K-1}}}+\hat\wasserstein_{2,\bm\signature_{\bm_{K-1}}}\right] + \lipschitz_K\left[ \wasserstein_{2,\compress_{M_K}}+\wasserstein_{2,\bm\signature_{\bm\sC_K}} \right]\\
        &\qquad\vdots\\
        &\qquad=\left(\prod_{i=1}^K\lipschitz_{i}\right)\left[\hat\wasserstein_{\rho,1}+\hat\wasserstein_{2,\compress_{M_1}}+\hat\wasserstein_{2,\bm\signature_{\bm\sC_1}}\right] + \sum_{k=2}^{K}\left(\prod_{i=k}^{K}\lipschitz_{i}\right)\left[ \hat\wasserstein_{2,\compress_{M_k}}+\hat\wasserstein_{2,\bm\signature_{\bm\sC_k}} \right]\\
        &\qquad=\sum_{k=1}^{K}\left(\prod_{i=k}^{K}\lipschitz_{i}\right) \lipschitz_k\left[ \hat\wasserstein_{2,\compress_{M_k}}+\hat\wasserstein_{2,\bm\signature_{\sC_k}}\right]
    \end{align*}
    According to Corollary~\ref{corol:WasConvergence4SignMixtureGaus}, the construction of the signature sets \(\sC_k\) in the theorem ensures that
    \[
        \hat\wasserstein_{2,\bm\signature_{\bm\sC_k}} \leq \epsilon_k = \frac{\epsilon}{K\prod_{i=k}^{K}\lipschitz_{i}} - \hat\wasserstein_{2,\compress_{M_k}}.
    \]
    Substituting this into the total bound, we get:
    \begin{align*}
        \sum_{k=1}^{K}\left(\prod_{i=k}^{K}\lipschitz_{i}\right)\left[ \hat\wasserstein_{2,\compress_{M_k}}+\hat\wasserstein_{2,\bm\signature_{\sC_k}}\right]
        &\leq \sum_{k=1}^{K}\left(\prod_{i=k}^{K}\lipschitz_{i}\right)\left[ \hat\wasserstein_{2,\compress_{M_k}}+\epsilon_k\right] \\
        &= 
        \sum_{k=1}^{K}\left(\prod_{i=k}^{K}\lipschitz_{i}\right)\left[
        \frac{\epsilon}{K\prod_{i=k}^{K}\lipschitz_{i}}
        \right] =\sum_{k=1}^{K}
        \frac{\epsilon}{K} = \epsilon.
    \end{align*}
    This completes the proof.
\qed
}

\subsection{Proofs of the Results in Section \ref{sec:AlgorithmicSection}}
\subsubsection*{Proof of Corollary \ref{corol:Wasser4SNNtoGMM} on the $\wasserstein_2$  Distance between any GMM and a SNN}
    Since $\wasserstein_2$ satisfies the triangle inequality, we have:
    $$
        \wasserstein_2\left(p_{nn(\sX)}, q_{gmm(\sX)}\right) \leq \wasserstein_2\left(p_{nn(\sX)}, q_{nn(\sX)}\right) + \wasserstein_2\left(q_{nn(\sX)},q_{gmm(\sX)}\right),
    $$
    Given that $\mw_2$ is an upper bound for the $2$-Wasserstein distance, we have: 
    $$
        \wasserstein_2\left(q_{nn(\sX)},q_{gmm(\sX)})\leq\mw_2(q_{nn(\sX)},q_{gmm(\sX)}\right).
    $$ 
    Combining these two inequalities, we obtain:
    $$
        \wasserstein_2\left(p_{nn(\sX)}, q_{gmm(\sX)}\right) \leq \wasserstein_2\left(p_{nn(\sX)}, q_{nn(\sX)}\right) + \mw_2\left(q_{nn(\sX)},q_{gmm(\sX)}\right),
    $$
    which completes the proof.
\qed

\subsubsection*{Proof of Proposition \ref{prop:OptimalGrid} on $\wasserstein_2$-Optimal Grid for Signatures for Gaussians}
    According to Corollary~\ref{corol:Was4SignMultGaus}:
    \begin{equation*}
         \wasserstein_2\left(\signature_{\postImage(\sC^1\times\hdots\times\sC^n, \mT^{-1})+\{\mNdist\}}\#\Ndist(\mNdist,\vNdist),\Ndist(\mNdist,\vNdist)\right)=\sum_{j=1}^n \evlambda{j}\wasserstein^2_2\left(\signature_{\sC^j}\#\Ndist(0,1),\Ndist(0,1)\right).
    \end{equation*} 
    This relation allows us to split the optimization problem of interest in $n$ smaller optimization problems as follows: 
    \begin{align*}
        &\argmin{\substack{\{\sC^1,\hdots,\sC^n\}\subset\eucl, \\ \sum_{i=1}^n|\sC^j|=N}}
        \wasserstein_2\left(\signature_{\postImage(\sC^1\times\hdots\times\sC^n, \mT^{-1})+\{\mNdist\}}\#\Ndist(\mNdist,\vNdist),\Ndist(\mNdist,\vNdist)\right)  \\
        &\qquad=\argmin{\substack{\{\sC^1,\hdots,\sC^n\}\subset\eucl, \\ \sum_{i=1}^n|\sC^j|=N}}
        \sum_{j=1}^n \evlambda{j}\wasserstein^2_2\left(\signature_{\sC^j}\#\Ndist(0,1),\Ndist(0,1)\right),\\
        &\qquad=\argmin{\substack{\{\sC^1,\hdots,\sC^n\}\subset\eucl, \\ \sC^j=N_j^*, \forall j\in\{1,\hdots,n\}}}
        \sum_{j=1}^n \evlambda{j}\wasserstein^2_2\left(\signature_{\sC^j}\#\Ndist(0,1),\Ndist(0,1)\right),\\
        &\qquad=\left\{\argmin{\sC^j\subset\eucl, \sC^j=N_j^*}
        \wasserstein^2_2\left(\signature_{\sC^j}\#\Ndist(0,1),\Ndist(0,1)\right)\right\}_{j=1}^n,
    \end{align*}
    where in step 2 we used the optimal grid-configuration $\{N^*_1,\hdots,N^*_n\}$ as defined in Eqn\eqref{eq:OptimalGridSizes}, and in step 3, we use the independence of objective terms across dimensions.
\qed

\section{Expected Lipschitz Constant for Stochastic Affine Operations}\label{append:expected_lipscthiz}
\changed{Here, we show how to compute the expected value of a Lipschitz constant \(\expect_{\vw_k\sim p_{\vw_k}}[\lipschitz_{L^{\vw_k}}^2]^{\frac{1}{2}}\) appearing in Theorem~\ref{thm:WasserNetwork}, in the case where $L^{\vw_k}$ represents either a fully-connected or convolutional layer, and where $\rho\in\{1,2\}$. 

\paragraph{Fully-connected layers.}
For a fully-connected layer of the form \(L^\vw(\vz) = \mW\vz + \vb\), the Lipschitz constant is \(\lipschitz_{L^{\vw}} = \|\mW\|\), where \(\|\cdot\|\) denotes the spectral norm. When the matrix \(\mW\) is distributed according to \(p_\mW \in \probMeas_\rho(\realNum^{m\times n})\), computing \(\expect[\|\mW\|^\rho]^{1/\rho}\) is generally intractable. Instead of using the spectral norm, we therefore consider the Frobenius norm, which yields a valid Lipschitz constant: while it is not the smallest such constant, it provides a tractable upper bound. Specifically, that
\begin{align*}
    \expect_{\mW\sim p_\mW}[\|\mW\|^\rho]^{1/\rho} \leq \expect_{\mW\sim p_\mW}\left[\|\mW\|_\mathcal{F}^\rho\right]^{1/\rho}.
\end{align*}
This bound can be tightened by centering the distribution of $\mW$ around its mean $\mM=\expect_{\mW\sim p_\mW}[\mW]$ and by applying Minkowski's inequality:
\begin{align*}
    \expect_{\mW\sim p_\mW}[\|\mW\|^\rho]^{1/\rho}
    \leq \expect_{\mW\sim p_\mW}[\|\mW - \mM\|^\rho]^{1/\rho} + \|\mM\|
    \leq \expect_{\mW\sim p_\mW}\left[\|\mW - \mM\|_\mathcal{F}^\rho\right]^{1/\rho} + \|\mM\|.
\end{align*}
Here, we again use that the spectral norm is upper bounded by the Frobenius norm.

For \(\rho = 2\), the upper bound simplifies in terms of second moments of the entries of \(\mW\). For \(\rho = 1\), we additionally apply Jensen's inequality to ensure tractability:
\begin{align*}
    \expect_{\mW\sim p_\mW}\left[\|\mW - \mM\|_\mathcal{F}\right]
    \leq \expect_{\mW\sim p_\mW}\left[\|\mW - \mM\|_\mathcal{F}^2\right]^{1/2}.
\end{align*}

\paragraph{Convolutional layers.}
If \(L^{\vw_k}\) corresponds to a 2D convolutional operation with kernel weight matrix \(\mW\), then the Lipschitz constant is given by \(\lipschitz_{L^\vw} = \sqrt{m} \|\mW\|\), where \(m\) denotes the maximum number of times any input element contributes to the output (e.g., the number of spatial positions a kernel covers) \cite{gouk2021regularisation}. For example, with unit stride and padding of one, \(m\) equals the product of the kernel height and width. We apply the same bounding procedure as in the fully-connected setting, now including the multiplicative \(\sqrt{m}\) correction factor.
}

\changed{
\section{Alternative Signature Placement: Cross Scheme}\label{append:altApproxScheme}
As discussed in Remark~\ref{remark:GridVsSigmaPointSignatures}, we constrained signature locations to axis-aligned grids in order to obtain formal error bounds (Corollary~\ref{corol:Was4SignMultGaus}). When formal bounds are not required, we may instead place signature locations on a \emph{cross}-aligned with the principal axes of each Gaussian component. This ``cross scheme'' resembles classical sigma–point constructions \citep{julier2004unscented} in that points lie along eigenvector directions of the covariance; however, it fundamentally differs by (i) allowing an arbitrary number of points per axis, and (ii) assigning exact probabilities via the Voronoi partition w.r.t. to the points, rather than choosing weights to satisfy a finite set of moment constraints. 

\begin{figure}[htbp]
    \centering
    \includegraphics[width=0.7\textwidth]{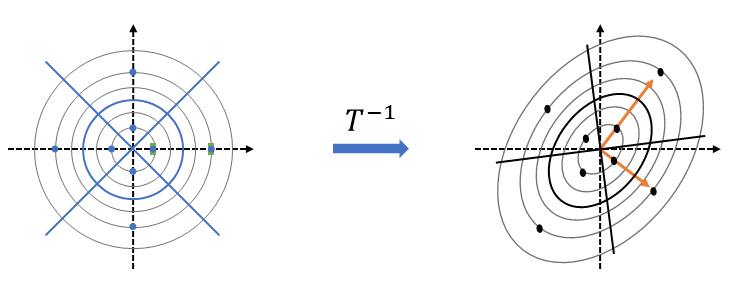}
    \caption{\changed{Construction of the cross-shaped signature of a 2D Gaussian (black dots) using the signature of a standard Gaussian (blue dots) as a reference. In the whitened eigenbasis space defined by $\mT$ (orange axes), we place symmetric points along each principal axis by mirroring the template locations (green markers) across all axes to form the cross. Mapping these points back via $\mT^{-1}$ yields the signature locations in the original space. The blue (left) and black (right) lines indicate the boundaries of the Voronoi regions induced by the signature locations in the transformed and original spaces, respectively. 
    }}
    \label{fig:signatureGaussianCross}
\end{figure}

The procedure to compute the component-wise signature of a Gaussian mixture (see Eqn~\ref{eq:Was4SignMixMultGaus}) is illustrated in Figure~\ref{fig:signatureGaussianCross} and summarized in Algorithm~\ref{al:signaturesOfGaussianMixutresCross}. First (line~2) we obtain, via Algorithm~\ref{al:CrossSignatureRadii}, the radial template (axis radii) in the transformed space (green stripes in Figure~\ref{fig:signatureGaussianCross}).
Note that we set the number of radial levels $L = \max\{1, \lfloor N_{\text{cross}}/(2n) \rfloor\}$ (line~1), allocating $2n$ points per level (two per axis) while ensuring at least one level, where $\lfloor\cdot\rfloor$ denotes the floor function. Next we compute the probabilities of the Voronoi regions induced by the eventual signature points by: (i) evaluating cumulative probabilities of the $n$-dimensional standard Gaussian ball at the radial Voronoi boundaries using the $\chi^2_n$ CDF (line~4); and (ii) differencing these to obtain shell masses $\Delta_\ell$ (line~5). The weight of each point at radial level $\ell$ is then $w_\ell = \Delta_\ell/(2n)$ (line~6). For each mixture component (lines~7-9), at every radial level $\ell$ and along every principal axis $j$, we place two symmetric points $\mNdist_i \pm r_\ell s_{i,j} \va_{i,j}$ with weight $w_\ell$ and map them back to the original space. The heuristic in Algorithm~\ref{al:CrossSignatureRadii} ensures that the resulting signature matches the second moment of the corresponding Gaussian component.

\begin{algorithm}[htbp]
\DontPrintSemicolon
\SetKwInOut{Input}{input}\SetKwInOut{Output}{output}
\changed{
\Input{$q = \sum_{i=1}^{M}\elem{\Tilde{\vpi}}{i}\, \Ndist(\mNdist_i, \vNdist_i)$, desired number of signature locations $N_{\text{cross}}$}
\Output{Signature $d = \sum_{j=1}^N \evpi{j} \delta_{\vc_j}$}
\Begin{
    \nl Set number of radial levels $L = \max\{1, \lfloor N_{\text{cross}}/(2n) \rfloor\}$\;
    \nl Obtain radii $\{r_{\ell}\}_{\ell=1}^L$ via Algorithm~\ref{al:CrossSignatureRadii} \;
    \nl Define radial Voronoi boundaries $u_{\ell} = \tfrac{1}{2}(r_{\ell} + r_{\ell+1})$ for $\ell=1,\dots,L-1$ \;
    \nl Compute cumulative $n$-D standard normal ball probabilities $B(u_{\ell}) = P(\|Z\|_2 \le u_{\ell}),\; Z\sim\Ndist(0,\mI_n)$ using $B(u_{\ell}) = \Phi_{\chi^2_n}(u_{\ell}^2)$ \;
    \nl Set shell masses $\Delta_1 = B(u_1)$, $\Delta_{\ell} = B(u_{\ell}) - B(u_{\ell-1})$ for $\ell=2,\dots,L-1$, and $\Delta_L = 1 - B(u_{L-1})$ \;
    \nl Compute per-level point weights $w_{\ell} = \Delta_{\ell}/(2n)$ \;
    \For{$i\in\{1,\dots,M\}$}{
        \nl Compute eigenvalues vector $\vlambda_i$ and eigenvector matrix $\mV_i$ of $\vNdist_i$ \;
        \nl Set unit axis directions $\va_{i,j} = \mV_i\ve_j$ and scales $s_{i,j}=\sqrt{\elem{\vlambda_i}{j}}$ for $j=1,\dots,n$ \;
            \nl Construct $d_i = \sum_{\ell=1}^L \sum_{j=1}^n w_{\ell}\Big(\delta_{\mNdist_i + r_{\ell} s_{i,j} \va_{i,j}} + \delta_{\mNdist_i - r_{\ell} s_{i,j} \va_{i,j}}\Big)$ \;
        }
        \nl Return $d = \sum_{i=1}^M \elem{\Tilde{\vpi}}{i} d_i$ \;
}}
\caption{Cross Signatures of Gaussian mixtures}\label{al:signaturesOfGaussianMixutresCross}
\end{algorithm}

\begin{algorithm}[htbp]
\DontPrintSemicolon
\SetKwInOut{Input}{input}\SetKwInOut{Output}{output}
\changed{
\Input{Number of radial levels $L$, dimension $n$}
\Output{Axis radii $\{r_\ell\}_{\ell=1}^L$}
\Begin{
        \nl Define probability edges $\{p_\ell\}_{\ell=0}^{L}$ with $p_\ell = 0.5 + \tfrac{\ell}{2L}$ (uniform partition of $[0.5,1]$) \;
        \nl Compute quantiles $q_{\ell} = \Phi^{-1}(p_{\ell})$ for $\ell=0,\dots,L$ \;
        \nl Compute shell probabilities $s_{\ell} = \Phi(q_{\ell})-\Phi(q_{\ell-1})$ for $\ell=1,\dots,L$ \;
        \nl Set radii $r_{\ell} = \big(\phi(q_{\ell-1})-\phi(q_{\ell})\big) / s_{\ell} \; \sqrt{n}$ for $\ell=1,\dots,L$ (positive since $\phi(q_{\ell-1})>\phi(q_{\ell})$) \;
        \nl Return $\{r_{\ell}\}_{\ell=1}^{L}$.
}}
\caption{Axis radii for the cross scheme}\label{al:CrossSignatureRadii}
\end{algorithm}
}

\section{Compression for Dropout-Induced Bernoulli Mixtures}\label{append:compression4dropout}
We present an alternative to Algorithm~\ref{al:compressGMMs} for performing compression in the case of mixtures of multivariate Bernoulli distributions. We begin by showing that such mixtures naturally arise in dropout networks and that compression is necessary to enable tractable analysis of their output distributions. We then introduce a structure-aware compression method that yields a closed-form upper bound on the resulting Wasserstein distance. 

Let $\phi_\vb:\eucl^n\rightarrow\eucl^n$ be the masking function defined by $\phi_\vb(\vz)=\vb\odot\vz$, where $\vb\in\{0,1\}^n$ is a binary vector and $\odot$ denotes the element-wise multiplication operation. 
In dropout, $\vb$ is sampled from a multivariate Bernoulli distribution $d_\vb$, given by
\begin{equation*}
    d_\vb=\sum_{\vb\in \{0,1\}^n} \theta^{\sum_{l=1}^n\evb{l}}(1-\theta)^{n-\sum_{l=1}^n\evb{l}}\delta_\vb,  
\end{equation*}
where $\theta\in[0,1]$ is the dropout rate. Note that $d_b$ is a discrete distribution with support \(\{0,1\}^n\). 
Given any input distribution $p_{in}$, the output distribution of a dropout layer is defined as
\begin{equation*}
    p_{out}=\expect_{\vb\sim d_\vb}[\phi_\vb\# p_{in}],
\end{equation*}
which is a mixture distribution of $2^n$ components. 
If the input distribution $p_{in}$ is a discrete distribution of size $N$, such as the distribution at an intermediate layer of a dropout network, or a signature approximation in a VI-trained network, then the output becomes a discrete distribution with support size $N\cdot 2^{n}$. Specifically, for $d_{in}=\sum_{i=1}^N\evpi{i}\delta_{\vc_i}$, we have
\begin{equation}\label{eq:outputDropout}
    d_{out}=\expect_{\vb\sim d_\vb}[\phi_\vb\# d_{in}]=\sum_{i=1}^N\evpi{i}\sum_{\vb\in \{0,1\}^n} 
    \theta^{\sum_{l=1}^n\evb{l}}(1-\theta)^{n-\sum_{l=1}^n\evb{l}}\delta_{\vb\odot \vc_i}.
\end{equation}
Analyzing or storing $d_{out}$ becomes quickly intractable for moderately large $n$ due to its exponential growth in support size. Therefore, we require an efficient compression procedure to approximate $p_{out}$ with a tractable distribution.

Given a discrete input distribution $d_{in}$, one could apply the approach from Algorithm~\ref{al:compressGMMs} to compress $d_{out}$ into a discrete distribution $\bar{d}_{out}\in\discMeas_M(\eucl^n)$, where $N\leq M<N\cdot 2^n$. This would involve clustering the support of $d_{out}$ into $M$ groups using M-means, applying moment matching within each cluster, and solving a discrete optimal transport problem to compute the 2-Wasserstein error. 
However, this approach is computationally demanding when the number of components is large. 

\changed{
Instead, we exploit the structure of the Bernoulli mixture to design a more efficient compression strategy that admits a closed-form upper bound on the 2-Wasserstein distance. Specifically, given $d_{in}$, we select the $\frac{M-N}{2^n-1}$ components $\vc_i$ with the largest norms and collect their indices into a set \(\sI\) using a simple sorting algorithm. We then define the compressed distribution as:
\begin{equation}\label{eq:CompressedOutputDropout}
    \bar{d}_{out} = \sum_{i\in\sI^c}\evpi{i}\delta_{\vc_i} + \sum_{i\in\sI}\evpi{i}\sum_{\vb\in \{0,1\}^n} 
    \theta^{\sum_{l=1}^n\evb{l}}(1-\theta)^{n-\sum_{l=1}^n\evb{l}}\delta_{\vb\odot \vc_i}
\end{equation}
where $\sI^c=\{1,\hdots,N\}\setminus \sI$. That is, we apply dropout only to the inputs in $\sI$, and leave the others unperturbed. The resulting distribution \(\bar{d}_{out}\) has support size
\[
    |\sI^c| + |\sI|\cdot 2^n  =\left(N - \frac{M-N}{2^n-1}\right) + \frac{M-N}{2^n-1} \cdot 2^n=M.
\]
The following proposition provides a closed-form upper bound on the 2-Wasserstein distance between $d_{out}$ and $\bar{d}_{out}$.
\begin{proposition}\label{prop:wasser4compressMultiBernoulli}
    Let $d_{out}\in\discMeas_{N\cdot 2^n}(\eucl^n)$ and $\bar{d}_{out}\in\discMeas_{M}(\eucl^n)$ be as defined in Eqn~\eqref{eq:outputDropout} and  Eqn~\eqref{eq:CompressedOutputDropout}, respectively. Then, 
    \begin{equation*}
        \wasserstein_2^2(d_{out}, \bar{d}_{out}) \leq \sum_{i\in\sI^c} (1-\theta)\|\vc_i\|^2
    \end{equation*}
\end{proposition}
\begin{proof}
    Note that we can write
    \[
        d_{out} = \sum_{i=1}^N \evpi{i}p_i,
    \]
    where $p_i=\sum_{\vb\in \{0,1\}^n} 
    \theta^{\sum_{l=1}^n\evb{l}}(1-\theta)^{n-\sum_{l=1}^n\evb{l}}\delta_{\vb\odot \vc_i}$, and
    \[
        \bar{d}_{out} = \sum_{i\in\sI}\evpi{i}p_i + \sum_{i\in\sI^c}\evpi{i}\delta_{\vc_i}.
    \]
    Hence, according to Lemma~\ref{lem:wasser4Mixtures},
    \begin{align*}
        \wasserstein_2^2(d_{out}, \bar{d}_{out}) = \sum_{i\in\sI}\evpi{i} \wasserstein_2^2\left(p_i, p_i\right) + \sum_{i\in\sI^c}\evpi{i} \wasserstein_2^2\left(p_i, \delta_{\vc_i}\right) = \sum_{i\in\sI^c}\evpi{i} \wasserstein_2^2\left(p_i, \delta_{\vc_i}\right)
    \end{align*}
    To compute \(\wasserstein_2^2\left(p_i, \delta_{\vc_i}\right)\), we observe that the optimal transport plan moves all probability mass from $p_i$ to $\vc_i$. Thus,
    \begin{align*}
        \wasserstein_2^2\left(p_i, \delta_{\vc_i}\right) &= \expect_{\vb\sim d_b}[\|\vb\odot\vc_i-\vc_i\|^2] =\expect_{\vb\sim d_b}\left[\sum_{j=1}^n\left(\evb{j}\elem{\vc_i}{j}-\elem{\vc_i}{j}\right)^2\right]=\sum_{j=1}^n\elem{\vc_i}{j}^2\expect_{\beta\sim p_\beta}\left[(\beta-1)^2\right],
    \end{align*}
    where \(p_\beta=\theta\delta_1 + (1-\theta)\delta_0\). Thus
    \[
        \expect_{\beta\sim p_\beta}\left[(\beta-1)^2\right] = 1 - \expect_{\beta\sim p_\beta}\left[\beta^2-2\beta\right]=1-\expect_{\beta\sim p_\beta}\left[\beta\right]=1-\theta.
    \]
    So, \(\wasserstein_2^2\left(p_i, \delta_{\vc_i}\right) = (1-\theta)\|\vc_i\|^2\), which completes the proof.
\end{proof}
}

\section{Results on Signatures of Gaussian Mixture Distributions}\label{append:ExperimentsSigns4GMMs}
Here, we experimentally evaluate the effectiveness of Algorithm~\ref{al:signaturesOfGaussianMixutres} in constructing signatures on Gaussian mixtures. Specifically, we examine the conservatism of the formal bound on the 2-Wasserstein distance resulting from the signature operation provided by Algorithm~\ref{al:signaturesOfGaussianMixutres}, and analyze the restrictiveness of having the signature locations of each component of the mixture on a grid as in Algorithm~\ref{al:signaturesOfGaussianMixutres}. 

In Figure~\ref{fig:wasserGMMsConservatismComponentWiseSignatures}, we analyze how the approximation error 
from the signature of a Gaussian mixture with $M$ components, obtained according to Algorithm~\ref{al:signaturesOfGaussianMixutres}, changes with increasing signature size. The plots show that as the signature size increases, the approximation error, measured as the 2-Wasserstein distance, decreases uniformly. 
Furthermore, in line with Corollary~\ref{corol:Was4SignMultGaus}, the formal upper bound on $\wasserstein_2$ is exact for Gaussians ($M=1$). 
For Gaussian mixtures ($M>1$), the conservatism introduced by the formal bounds grows approximately linearly with increasing $M$. 
This is because, in Algorithm~\ref{al:SNN2GMM}, we bound the Wasserstein distance resulting from the signature operation on the GMMs by the weighted sum of the 2-Wasserstein distance between each Gaussian component in the mixture and their signatures (see line~7 in Algorithm~\ref{al:signaturesOfGaussianMixutres} and Eqn~\ref{eq:Was4SignMixMultGaus}).

Recall from Subsection~\ref{sec:wasser4signatures} that to obtain a closed-form bound on the approximation error in Algorithm~\ref{al:signaturesOfGaussianMixutres}, we place the signature locations of each component of the mixture on a grid in the transformed space induced by the element's covariance matrix, as illustrated in Figure~\ref{fig:signatureGaussian}. Specifically, following Proposition~\ref{prop:OptimalGrid}, we choose the grids such that for a given signature size $N$, the 2-Wasserstein distance from the signature operation is minimized. In Figure~\ref{fig:EmpiricalConvergenceRate}, we estimate the optimality gap if we could instead place the signatures freely to minimize the 2-Wasserstein distance. Perhaps surprisingly, the plots show that placing the signatures freely only slightly reduces the 2-Wasserstein distance. 
Thus, although optimal signature locations are not grid-aligned (in line with literature, \citealp{graf2000foundations}), for the Gaussian mixtures with distinct modes considered here the optimally chosen grid closely approximates the unconstrained optimal locations, indicating that component-wise grids can be near-optimal for such mixtures. 

\begin{figure}[htbp]
    \centering
    \includegraphics[width=0.5\textwidth]{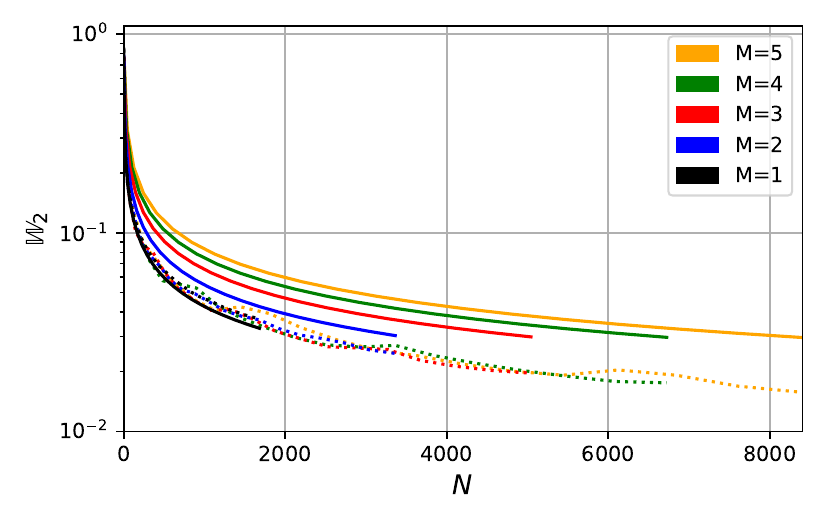}
    \caption{Empirical estimates (dashed lines) and formal bounds (solid lines) on $\wasserstein_2$ between a 2D Gaussian mixture distribution with $M$ components and the signature with $N$ grid-constrained locations obtained via Algorithm \ref{al:signaturesOfGaussianMixutres}. Formal bounds are computed with Algorithm~\ref{al:signaturesOfGaussianMixutres}. Empirical estimates are computed from $10^4$ samples from each distribution and computing the 2‑Wasserstein distance between the resulting empirical distributions.}    
    \label{fig:wasserGMMsConservatismComponentWiseSignatures}
\end{figure}

\bibliography{bibliography}

\end{document}